\newtheorem{theorem}{Theorem}
\newtheorem{proof}{Proof}
\newtheorem{mylem}{Lemma}
\begin{document}
%
\title{Decentralized and Lifelong-Adaptive \\ Multi-Agent Collaborative Learning}

\author{Shuo Tang, Rui Ye, Chenxin Xu, Xiaowen Dong, Siheng Chen, Yanfeng Wang}

\IEEEtitleabstractindextext{
\begin{abstract}

Decentralized and lifelong-adaptive multi-agent collaborative learning aims to enhance collaboration among multiple agents without a central server, with each agent solving varied tasks over time. To achieve efficient collaboration, agents should: i) autonomously identify beneficial collaborative relationships in a decentralized manner; and ii) adapt to dynamically changing task observations. In this paper, we propose \texttt{DeLAMA}, a decentralized multi-agent lifelong collaborative learning algorithm with dynamic collaboration graphs. To promote autonomous collaboration relationship learning, we propose a decentralized graph structure learning algorithm, eliminating the need for external priors. To facilitate adaptation to dynamic tasks, we design a memory unit to capture the agents' accumulated learning history and knowledge, while preserving finite storage consumption. To further augment the system's expressive capabilities and computational efficiency, we apply algorithm unrolling, leveraging the advantages of both mathematical optimization and neural networks. This allows the agents to `learn to collaborate' through the supervision of training tasks. Our theoretical analysis verifies that inter-agent collaboration is communication efficient under a small number of communication rounds. The experimental results verify its ability to facilitate the discovery of collaboration strategies and adaptation to dynamic learning scenarios, achieving a 98.80\% reduction in MSE and a 188.87\% improvement in classification accuracy. We expect our work can serve as a foundational technique to facilitate future works towards an intelligent, decentralized, and dynamic multi-agent system. Code is available at \href{https://github.com/ShuoTang123/DeLAMA}{https://github.com/ShuoTang123/DeLAMA}.
\end{abstract}

\begin{IEEEkeywords}
Multi-agent systems, collaborative learning, graph learning, algorithm unrolling
\end{IEEEkeywords}}

\maketitle

\IEEEdisplaynontitleabstractindextext

\IEEEpeerreviewmaketitle

\IEEEraisesectionheading{\section{Introduction}\label{sec:introduction}}
\IEEEPARstart{C}ollaboration is a stealthy, yet ubiquitous phenomenon in nature, evident in the cooperative behaviors of animals and humans, from pack-hunting to constructing complex social relationships. Such cooperative behaviors enable individuals to accomplish complex tasks~\cite{woolley2010evidence} and form social relationships~\cite{mennis2006wisdom}. Inspired by this natural tendency for collaboration, the field of multi-agent collaborative learning has been extensively explored~\cite{mennis2006wisdom}. It aims to enable multiple agents to collaboratively strategize and achieve shared objectives, exhibiting capabilities surpassing that of any single agent. Recently, there has been an urgent demand for relevant methods and systems, such as vehicle-to-everything(V2X) communication-aided autonomous driving~\cite{v2x-communication, v2x-sim}, multi-robot environment exploration~\cite{multi-robot, col-multi-robot} and collaborative training of machine learning models across multiple clients~\cite{fedavg}.

Depending on the type of content exchanged among agents, multi-agent collaboration can be categorized into four distinct levels, namely perception, cognition, decision-making, and control. (1) At the perception level, the primary focus is to compensate for the sensory information gaps of individual agents by transmitting perceptual data among multiple agents. The shared information often encompasses perceptual signals, such as raw observations and perceptual features. Such mechanisms are especially valuable in research fields like collaborative perception~\cite{where2com}, and multi-view learning~\cite{multi-view}. (2) At the cognitive level, the aim is to improve multiple agents' learning task performance by sharing and processing cognitive information. Specifically, assuming local decision-making is done via training an individual model, the shared cognitive information often relates to individuals' model parameters~\cite{fedavg}, knowledge graph~\cite{knowledge-graph}, or dictionary~\cite{COLLA}. Such methodologies are prominently employed in research fields such as collaborative machine learning~\cite{fedavg, fedprox, fedrep} and multi-task learning~\cite{liu2017distributed, wang2016distributed}. (3) At the decision-making level, agents cooperate and communicate to make joint decisions, aiming to maximize rewards from tasks performed in their environments. Agents share high-level information about decision-making~\cite{tarmac, marl-communication}. The related approaches are widely used in research fields like multi-agent reinforcement learning (MARL)~\cite{marl_survey_1}. (4) Moving to the control level, the focus shifts towards formulating efficient control strategies and coordination mechanisms. These strategies mainly share individual state information such as position or velocity. Common application scenarios for these control strategies encompass areas like traffic control~\cite{multi-vehicle-control, multi-agent-system}, or unmanned aerial vehicle (UAV) system control~\cite{multi-agent-UAV}. In this paper, we focus on cognitive-level collaboration, where agents collectively train models by sharing model parameters beyond raw observations, facilitating more streamlined knowledge sharing among agents to equip the system with a more powerful cognitive capability.  

\begin{figure*}
    \centering
    \includegraphics[width=0.97\textwidth]{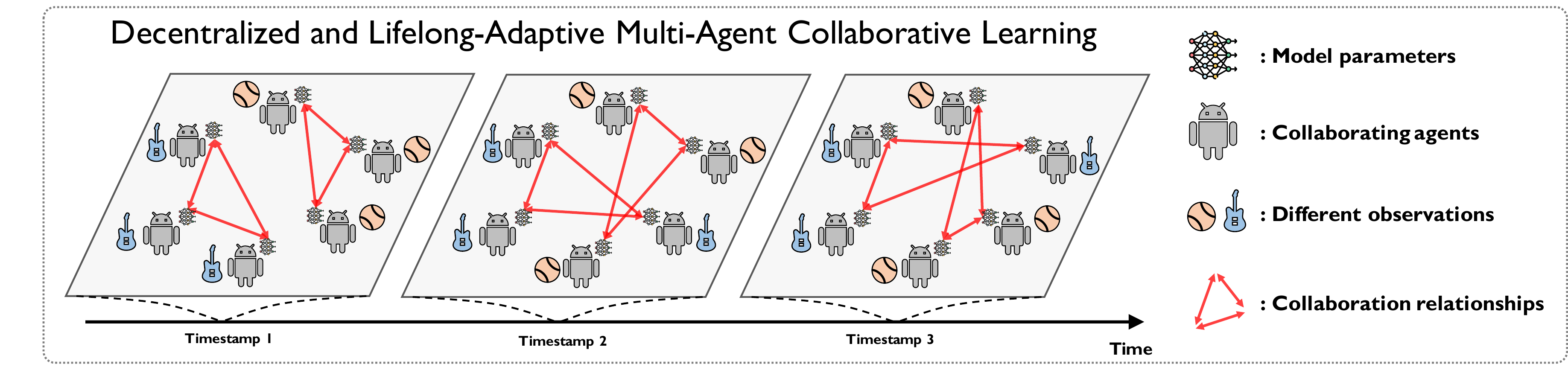}
        \vspace{-3mm}
    \caption{The illustration of our decentralized and lifelong-adaptive multi-agent collaborative learning system. We demonstrate a collaboration system with six agents, each faced with a learning task. The learning tasks' configurations between the agents are dynamic, making the collaboration relationships dynamically adapt to the time-evolving tasks.}
    \label{fig:basic_setting}
    \vspace{-3mm}
\end{figure*}

To promote efficient collaboration among the agents, it is crucial to identify whom to collaborate with. This could avoid redundant and low-quality information from an arbitrary collaborator, improving collaboration effectiveness. Current collaboration methods often involve using predefined fixed collaboration relationships, such as fully connected graphs (e.g., CoLLA~\cite{COLLA}, DiNNO~\cite{DiNNO}), star graphs with a central server (e.g., federated learning~\cite{fedavg}), or task correlations calculated by a central server in multi-task learning~\cite{liu2017distributed}. However, these predefined structures are limited by two major issues. First, collaboration~\cite{fedavg,liu2017distributed} guided by a central server faces vulnerability issues, where the central server failure would disable all collaborations. Second, the static collaboration~\cite{fedavg,liu2017distributed} relationships limit the flexibility and efficiency of the system performance when faced with dynamic scenarios. In dynamic scenarios~\cite{continuallearningsurvey2}, agents come up with continuous and time-varying observations and update their local decision-making model, leading to the change of agent potential relationships. The static collaboration modeling becomes inadequate and inflexible to capture the relationships in these dynamic scenarios, limiting the collaboration efficiency. In this work, we introduce a novel decentralized and life-long adaptive learning framework for multi-agent cognitive-level collaboration. The core feature of our framework is that each agent can autonomously choose its collaborators and updates its model to address tasks that evolve over time.

To address the limitations of centralized collaboration mechanisms, instead of using a central server, we propose a decentralized collaboration mechanism that decouples the globally oriented collaboration problem into the agent level. Given the model parameters and observations, the key design of our approach is to discard the central server and learn sparse collaboration relationships between agents. Agents are empowered to autonomously select collaborators and communicate independently. Specifically, we model agents' collaboration relationships by a collaboration graph, where nodes denote the agents and edges stand for the collaboration strength. Compared to the centralized collaboration mechanism~\cite{fedavg, wang2016distributed}, our decentralized collaboration mechanism offers two advantages: 1) computations are decentralized at the level of each agent, enhancing system robustness and 2) alleviates the communication overhead typically imposed on a central server, dispersing the communication load more evenly among the agents.

To address the issue of the static collaboration relationship, we design a lifelong-learning-based approach for multi-agent collaboration. The core problem is enabling agents to dynamically adapt to ever-changing observations during individual model training, to provide a basis for obtaining more accurate collaboration relationships. In agent individual model training, we design a memory unit to capture an agent's accumulated learning history and knowledge, facilitating adaptation to evolving observation flows. Compared with the static collaboration mechanism~\cite{fedavg, liu2017distributed}, this lifelong-learning-based approach could lead to evolving collaboration structures and model parameters, making it more suitable for dynamic scenarios. 

Integrating the above designs, we formulate a decentralized optimization for lifelong adaptive multi-agent collaborative learning. The corresponding iterative algorithm consists of two steps: collaborative relational inference and lifelong model update. For the collaborative relational inference step, we propose a convex quadratic program as a graph structure learning problem with the graph smoothness prior. To solve this problem without a central server, we propose a decentralized optimization approach based on Newton iterations by sharing dual variables through the communication links. For the lifelong model update step, we propose a graph message-passing mechanism by sharing model parameters with collaborators inferred in the previous step. To enable agents to retain memories of past tasks, we introduce an online update mechanism by continuously approximating previously encountered loss functions through Taylor expansion. These two iterative steps, while distinct, are closely intertwined: efficient collaborative relational inference simplifies lifelong model update, and vice versa.

To further promote more learning ability and flexibility,  we apply algorithm unrolling techniques~\cite{unrolling-survey, unrolling-plugplay} to upgrade the aforementioned iterative algorithm of the decentralized optimization to a neural network. Through algorithm unrolling, each iteration of our optimization solution is mapped to a customized neural network layer with learnable parameters. The network parameters are learned via supervised training tasks, enabling agents to effectively learn to collaborate.

We term the overall method as~\texttt{DeLAMA}, which achieves Decentralized and Lifelong-Adaptive learning for Multiple Agents. The proposed~\texttt{DeLAMA} combines the advantages of both mathematical optimization and neural networks; see the illustration in \textbf{Figure}~\ref{fig:basic_setting}. Unlike multi-task learning where the agents' relationships are decided by the central server, \texttt{DeLAMA} provides a decentralized mechanism to infer collaboration relationships.  According to our design, our collaboration method~\texttt{DeLAMA} can 1) efficiently operate in dynamic learning scenarios, 2) promote decentralized collaboration without a central server, and 3) autonomously learn and seek collaborative partners. The communication and collaboration between agents of \texttt{DeLAMA} possess both theoretical convergence guarantee and robust learning capabilities.

Our theoretical analysis of \texttt{DeLAMA} reveals that: 1) the overall solution exhibits favorable convex optimization properties and expressive capability; 2) the collaboration relation inference solution possesses a quadratic convergence rate, enabling efficient collaborator identification by agents; and 3) our model-updating solution achieves a linear convergence rate, effectively minimizing the communication demand.

Our experimental evaluation of \texttt{DeLAMA} focuses on two typical multi-agent learning tasks: regression and classification, covering both synthetic and real-world datasets. In the classification scenario, the experiment incorporates both camera image data and lidar scan data, supporting a broad spectrum of downstream applications. To further validate the rationale of our introduced collaboration relationship priors in \texttt{DeLAMA}, we compare the collaboration mechanisms to humans by conducting a human-involved experiment. The key findings of experimental results on \texttt{DeLAMA} lie in two aspects. From the decentralized collaboration viewpoint, \texttt{DeLAMA} outperforms other graph structure learning approaches in inferring collaboration relationships, despite many of them being centralized methods~\cite{L2G, graphlasso}. From the lifelong adaptation perspective, \texttt{DeLAMA} could enable agents to efficiently adapt to dynamic learning environments compared to previous lifelong learning methods~\cite{lwf,gem}.

The outline of this paper is structured as follows.  In \textbf{Section}~\ref{prob-formulate} we introduce the setting of decentralized lifelong-adaptive collaborative learning.  \textbf{Section}~\ref{sec:co_lifelong} formulates the optimization problem and introduced a iterative learning framework. \textbf{Section}~\ref{sec:unroll} further applies the algorithm unrolling method to the iterations. In \textbf{Section}~\ref{sec:theory}, we analyzed the numerical properties of the iterations proposed in \textbf{Section}~\ref{sec:co_lifelong} and \textbf{Section}~\ref{sec:unroll}. In \textbf{Section}~\ref{sec:experiments} we conduct four different experiments on both synthetic and real-world datasets. \textbf{Section}~\ref{sec:related_work} describes related works in collaborative learning and lifelong learning. Finally, we conclude the paper in \textbf{Section}~\ref{sec:conclusion}. 

\vspace{-3mm}
\section{Problem Formulation}
\label{prob-formulate}

\subsection{Task Setting}
\label{task-setting}
Considering a multi-agent system containing $N$ agents, each agent can train a dynamic machine-learning model for evolving tasks. At any given timestamp $t$, the $i$-th agent receives a training dataset $\mathcal{D}_i^{(t)}=\left\{\mathcal{X}_i^{(t)}, \mathcal{Y}_i^{(t)}\right\}$, where  $\mathcal{X}_i^{(t)}$ represents the data inputs and $\mathcal{Y}_i^{(t)}$ denotes the associated ground-truth labels. Based on this, the $i$-th agent updates its model parameters, denoted as $\boldsymbol{\theta}_i^{(t)}$, to effectively handle the test dataset $\widetilde{\mathcal{D}}_i^{(t)}=\left\{\widetilde{\mathcal{X}}_i^{(t)}, \widetilde{\mathcal{Y}}_i^{(t)} \right\}$.  It is worth noting that the test dataset $\widetilde{\mathcal{D}}_i^{(t)}$ shares the same distribution as the training dataset $\mathcal{D}_i^{(t)}$. The overall goal is to enhance inter-agents' collective learning performance. To achieve this, each agent can further 1) collaborate with other agents by sharing valuable information under a collaboration protocol, and 2) use its historical information in the learning process.

For decentralized multi-agent collaboration, information sharing is achieved by direct peer-to-peer communication with a time-varying communication graph $\mathbf{C}^{(t)} \in \{0, 1\}^{N \times N}$. Here, each node represents one agent and each $(i,j)$-th edge with $\mathbf{C}^{(t)}_{ij}=1$ is a pairwise communication link indicating the message transmission between agent $i$, $j$ is enabled.

Given that the current learning task could relate to previous ones, each agent should be able to incorporate past task experiences into its current learning process. Let $\mathcal{M}_i^{(t)}$ be the learning memory of the $i$-th agent up to time $t$. To maintain and utilize the lifelong and adaptive learning ability, the storage space for the memory $\mathcal{M}_i^{(t)}$ of the entire learning task $\mathcal{T}_i$ is required to be finite, hence directly storing past training data is prohibited. Based on the constraint, agents will update the learning memory according to newly observed data $\mathcal{D}_i^{(t)}$ at each time, hence promoting lifelong adaptation.

Under the above conditions on information sharing and memory, decentralized and lifelong-adaptive collaborative learning aims to find an effective decentralized model training strategy. Denote the communication message sent from agent $i$ to agent $j$ at time $t$ as $m_{i \rightarrow j}^{(t)}$, where the information routing process could be multi-hop via the communication graph $\mathbf{C}^{(t)}$. The model parameter and memory updating rule are described as
\begin{equation}
\setlength\abovedisplayskip{1pt}
\setlength\belowdisplayskip{1pt}
\label{setting_eq}
    \begin{aligned}
 \boldsymbol{\theta}_i^{(t)}, \mathcal{M}_i^{(t)} & = \mathbf{\Phi}_i\left(\mathcal{D}_i^{(t)}, \{\boldsymbol{m}_{j\rightarrow i}^{(t)}\}_{j = 1}^N, \mathcal{M}_i^{(t-1)} \big | \mathbf{C}^{(t)}\right),
    \end{aligned}
\end{equation}
where $\mathbf{\Phi}_i(\cdot)$ is the $i$-th agent's approach to update learning memory and model parameters according to previous individual learning memory $\mathcal{M}_i^{(t-1)}$, training data $\mathcal{D}_i^{(t)}$ and the messages $\{\boldsymbol{m}_{j\rightarrow i}^{(t)}\}_{j=1}^N$ sent from other agents.

Note that: i) the collaboration mechanism among the agents is decentralized, which means agents learn their model parameters based on individual observations and mutual collaboration without the management of a central server; ii) the collaboration system, including the agents' models and the collaboration strategies, is lifelong-adaptive to the dynamic scenarios, enabling agents to efficiently learn new knowledge while preserving knowledge from previously encountered ones; and iii) the sparse communication prior among agents based on the constraint $\mathbf{C}^{(t)}$ should be guaranteed, which means the inter-agent messages $\boldsymbol{m}_{j \rightarrow i}^{(t)}$ should adheres strictly to the communication graph constraint $\mathbf{C}^{(t)}$.

To evaluate the learning performance of the $i$-th agent at timestamp $t$, we use the performance accumulation as the evaluation metric:
\begin{equation*}
\setlength\abovedisplayskip{1pt}
\setlength\belowdisplayskip{1pt}
L_i^{(t)} = \frac{1}{t}\sum_{k = 1}^t \mathcal{L}\left(f_{\boldsymbol{\theta}_i^{(k)}}(\widetilde{\mathcal{X}}_i^{(k)}), \widetilde{\mathcal{Y}}_i^{(k)}\right),
\end{equation*}
where $\widetilde{\mathcal{X}}_i^{(t)}$ is the test data and  $\widetilde{\mathcal{Y}}_i^{(t)}$ is the associated test labels. $\mathcal{L}(\cdot)$ is the metric function corresponding to the task. The overall performance of $N$ agents is $L^{(t)} = L_i^{(t)}/N$. The task objective is to maximize $L^{(t)}$ by designing, optimizing the communication protocol of sharing messages $\boldsymbol{m}_{j \rightarrow i}^{(t)}$ and the model updating approach $\mathbf{\Phi}_i(\cdot)$ based on the observed data $\mathcal{D}_{1:N}^{(t)}=\left\{\mathcal{D}_i^{(t)}\right\}_{i=1}^N$ at each timestamp.

\begin{table}
    \centering
    \caption{\small Relations to previous task settings, including lifelong learning, multi-task learning, and federated learning.}
    \vspace{-3mm}
    \begin{tabular}{c|ccc}
         \toprule
         \textbf{Task} & \makecell[c]{\textbf{Dynamic}\\ \textbf{scenario}} &\makecell[c]{\textbf{Decentralized} \\ \textbf{mechanism}}  &\makecell[c]{\textbf{Number of}\\ \textbf{agents}}\\
         \midrule
         Lifelong learning & \usym{2713} & \usym{2717} &1\\
         Multi-task learning & \usym{2717} &\usym{2717}  &$N$\\
         Federated learning & \usym{2717} &\usym{2717}  &$N$\\
         \midrule
         \makecell[c]{\textbf{Ours (Decentralized} \\ \textbf{and lifelong-adaptive} \\ \textbf{collaborative learning)}} &\usym{2713} &\usym{2713} &$N$ \\
         \bottomrule
    \end{tabular}
    \vspace{-3mm}
    \label{tab:comparison}
\end{table}

\vspace{-3mm}
\subsection{Relations to Previous Task Settings}
\label{relation-setting}

We examine related learning settings: lifelong learning, multi-task learning, and federated learning, with relationships shown in \textbf{Table}~\ref{tab:comparison} and \textbf{Figure}~\ref{fig:related_work}.

\begin{figure}
    \centering
    \includegraphics[width=0.97\linewidth]{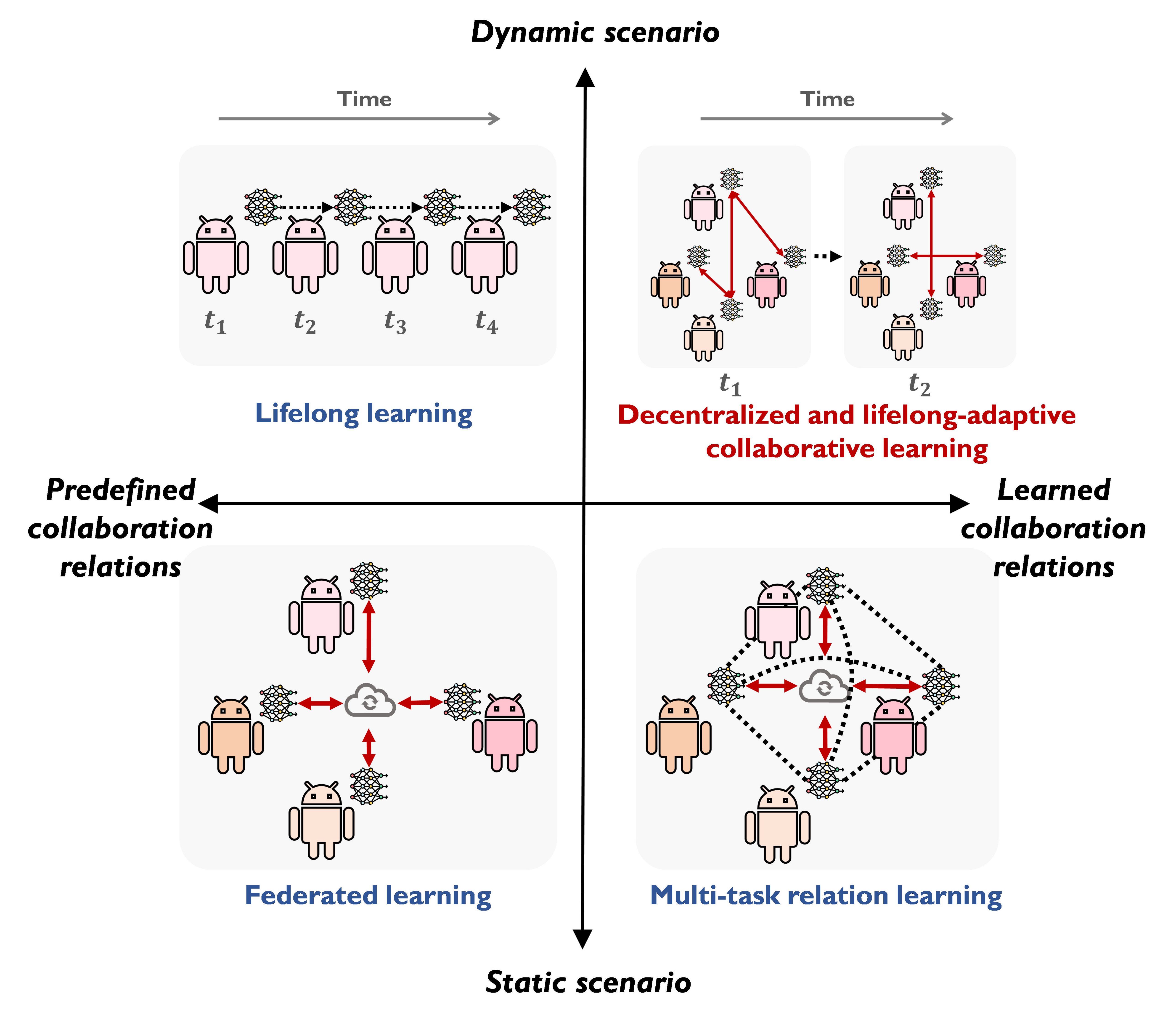}
    \vspace{-3mm}
    \caption{\small Relationships among our lifelong collaborative learning and its related works: lifelong learning, federated learning, and multi-task relation learning. }
    \vspace{-6mm}
    \label{fig:related_work}
\end{figure}

Their primary distinctions lie in two aspects. First, multi-task learning and federated learning focuses on static tasks, while our setting target tasks that can evolve over time. This dynamic learning scenario requires agents to accumulate and transfer previously encountered learning experiences, fostering adaptive learning and collaboration in evolving environments. Second, multi-task learning and federated learning are coordinated by the centralized server, while our setting utilizes decentralized collaboration mechanisms. This decentralized mechanism requires agents to actively seek a limited number of valuable collaborators on their own, rather than passively managed by the central server.

\section{Optimization for Decentralized and Lifelong-Adaptive Collaborative Learning}
\label{sec:co_lifelong}
In this section, we introduce our solution to the problem shown in \textbf{Section}~\ref{prob-formulate}. In \textbf{Section}~\ref{opt-prob} we introduce the overall mathematical optimization. In \textbf{Section}~\ref{sol-overview}, we provide an overview of our iterative solution, encompassing three main steps: local learning, collaborative relational inference, and lifelong model update, {whose detail is further provided in \textbf{Sections}~\ref{subsec-local-learning}, ~\ref{col-relation-infer}, and~\ref{lifelong-model-update}, respectively.}

\vspace{-3mm}
\subsection{Optimization Problem}
\label{opt-prob}
We consider solving this decentralized and lifelong-adaptive collaboration task shown in \textbf{Section}~\ref{task-setting} from a probabilistic perspective. According to the task setting described in \textbf{Section}~\ref{task-setting}, suppose the previous encountered datasets of agent $i$ until time $t$ is written as $\mathcal{D}_{\mathcal{T}_i}^t = \left\{\mathcal{D}_i^{(k)} \big | 1 \leq k \leq t\right\}$, where $\mathcal{T}_i$ represents the unique learning task's experience index of the $i$-th agent. Each time $t$ the collaboration system will estimate a group of model parameters $\mathbf{\Theta}^{(t)}= \left\{ \boldsymbol{\theta}_i^{(t)} \big | 1 \leq i \leq N\right\}$ given the group of the datasets $\mathcal{D}_{\mathcal{T}_{1:N}}^t$, where $\mathcal{D}_{\mathcal{T}_{1:N}}^t = \left\{ \mathcal{D}_{\mathcal{T}_i}^t \big | 1\leq i \leq N \right\}$ is the full dataset for agent $1\leq i \leq N$. Since this is a decentralized lifelong adaptive collaboration scheme where direct sharing training data is forbidden, each agent at time $t$ can only access to $p\left(\boldsymbol{\theta}_i^{(t)}\big |\mathcal{D}_i^{(t)}\right)$ or $p\left(\mathcal{D}_i^{(t)} \big | \boldsymbol{\theta}_i^{(t)}\right)$. Hence simply using the likelihood function $p\left(\mathcal{D}_{\mathcal{T}_{1:N}}^t \big| \mathbf{\Theta}^{(t)}\right)$ is impractical due to complex inter-agent correlations between agents' model parameters. Instead, we consider the Bayesian learning paradigm by maximizing the posterior distribution of $\mathbf{\Theta}^{(t)}$ as $p\left(\mathbf{\Theta}^{(t)} \big| \mathcal{D}_{\mathcal{T}_{1:N}}^t \right)$ after knowing the datasets $\mathcal{D}_{\mathcal{T}_{1:N}}^t$. The inter-agents' correlations are described by the prior distribution of $\mathbf{\Theta}^{(t)}$. Specifically, according to the Bayes rule, the posterior distribution can be decomposed into the following two parts:
\begin{equation}
\setlength\abovedisplayskip{2pt}
\setlength\belowdisplayskip{2pt}
    \label{MAP}
    \begin{aligned}
        p\left(\mathbf{\Theta}^{(t)} \big| \mathcal{D}_{\mathcal{T}_{1:N}}^t \right) \propto  p\left(\mathcal{D}_{\mathcal{T}_{1:N}}^t \big| \mathbf{\Theta}^{(t)} \right)p\left(\mathbf{\Theta}^{(t)}\right),
    \end{aligned}
\end{equation}
where the first probability is the likelihood of the agents model parameters on their datasets $\mathcal{D}_{\mathcal{T}_{1:N}}$, and the second probability $p\left(\mathbf{\Theta}^{(t)}\right)$ is the prior distribution of $\mathbf{\Theta}^{(t)}$.

\noindent \textbf{Likelihood probability $p\left(\mathcal{D}_{\mathcal{T}_{1:N}}^t \big| \mathbf{\Theta}^{(t)} \right)$ :} Since the training data arrives independently among the agents at each time $t$, we assume that the data batches $\mathcal{D}_{\mathcal{T}_i}^{(j)}$ satisfy conditional independency on both agent-level for $1\leq i \leq N$ and time-level for $1 \leq k \leq t$ shown in the following equation:
\begin{equation}
    \label{independence}
\setlength\abovedisplayskip{1pt}
\setlength\belowdisplayskip{1pt}
    \begin{aligned}
        & \textbf{Agent:~} p\left(\mathcal{D}_{\mathcal{T}_{1}}^t, \mathcal{D}_{\mathcal{T}_{2}}^t, \ldots , \mathcal{D}_{\mathcal{T}_{N}}^t \big | \mathbf{\Theta}^{(t)}\right) = \prod_{i=1}^N p\left(\mathcal{D}_{\mathcal{T}_{i}}^t \big | \mathbf{\Theta}^{(t)}\right), \\
        & \textbf{Time:~} p\left(\mathcal{D}_i^{(1)}, \mathcal{D}_i^{(2)}, \ldots, \mathcal{D}_i^{(t)} \big | \boldsymbol{\theta}_i^{(t)}\right) = \prod_{k=1}^{t} p\left(\mathcal{D}_i^{(k)} \big | \boldsymbol{\theta}_i^{(t)}\right), 
    \end{aligned}
\end{equation}
where the first equation represents agent-level conditional independence and the second denotes time-level conditional independence, standing for the task data $\mathcal{D}_{\mathcal{T}_{1:N}}^t$ arrives independently among the agents as time progresses. Thus we decompose the likelihood $p\left(\mathcal{D}_{\mathcal{T}_{1:N}}^t \big| \mathbf{\Theta}^{(t)} \right)$ as:
\begin{equation}
    \label{likelihood}
\setlength\abovedisplayskip{0pt}
\setlength\belowdisplayskip{0pt}
    \begin{aligned}
         p\left(\mathcal{D}_{\mathcal{T}_{1:N}}^t \big| \boldsymbol{\theta}_i^{(t)} \right) 
        &= \prod_{i=1}^{N} \prod_{k=1}^t p\left(\mathcal{D}_{i}^{(k)} \big| \boldsymbol{\theta}_i^{(t)} \right),
    \end{aligned}
\end{equation}
where $p\left(\mathcal{D}_{i}^{(j)} \big| \boldsymbol{\theta}_i^{(t)} \right)$ is the likelihood corresponding to the discriminative model for each agent. In ~\eqref{likelihood}, the decomposition follows from both the agent-level and time-level conditional independence as shown in~\eqref{independence}.


\begin{figure*}
    \centering
    \includegraphics[width=0.96\linewidth]{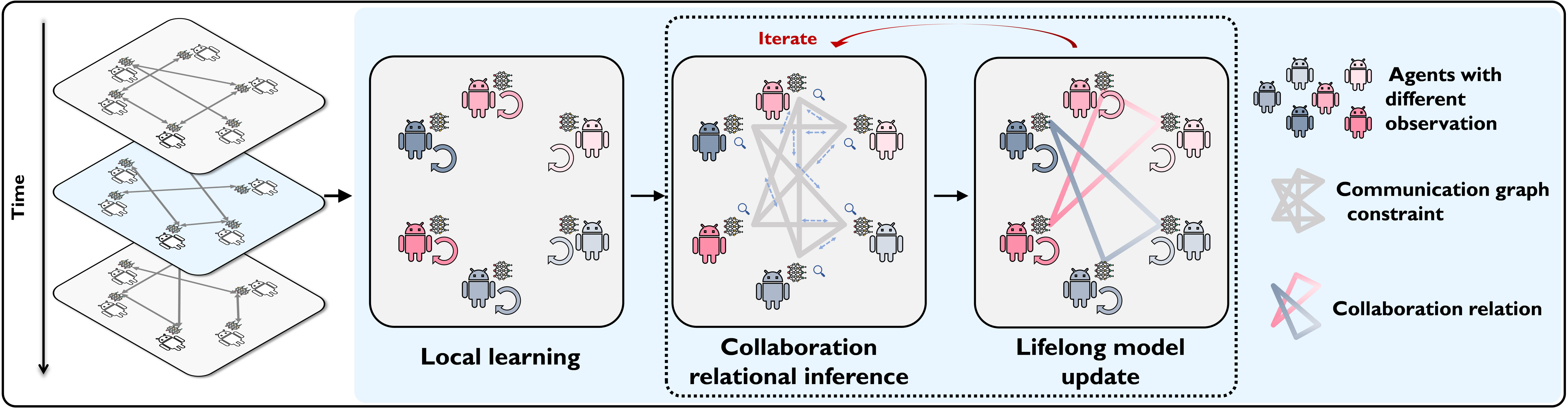}
    \vspace{-3mm}
    \caption{\small The decentralized and lifelong-adaptive multi-agent collaborative learning system. The system consists of three steps: local learning, collaborative relational inference, and lifelong model update. In the first step, each agent learns the model parameters based on their own observations to prepare the initialization of model parameters. In the second step, the agents transmit their own model parameters along the communication graph and learn the collaboration relation. In the last step, agents share their model parameters along the learned collaboration relation with their collaborators and refine their model parameters. Note the second and third steps are iterated several times until convergent.}
    \label{fig:framework}
        \vspace{-3mm}
\end{figure*}

\noindent \textbf{Prior probability $p\left(\mathbf{\Theta}^{(t)}\right)$ :} To model the prior information of $\mathbf{\Theta}^{(t)}$, we consider the probabilistic approach~\cite{graph-learning-smooth} by parameterizing this prior distribution with pairwise correlations among multiple agents' model parameters. Specifically, unlike the agent-level independence of the training data, these correlations between agents describe the relationships between model parameters, which correspond to a graph $\mathcal{G}^{(t)}=\{\mathcal{V}, \mathcal{E}^{(t)}\}$. Here $\mathcal{V}$ is the node set of the agents and $\mathcal{E}^{(t)}$ is the edge set at timestamp $t$. {The adjacency matrix of graph $\mathcal{G}^{(t)}$ is defined as $\mathbf{W}^{(t)}\in \mathbb{R}^{N\times N}$, indicating the collaboration relationships among the agents, used to guide the collaboration behaviors. Note that the collaboration relationship $\mathbf{W}^{(t)}$ is restricted by the communication graph $\mathbf{C}^{(t)}$ as its substructure. The weight of $\mathbf{W}^{(t)}$ represents the model similarity between agents. Assuming that similar model parameters promote agents to collaborate, where more similar model parameters correspond to stronger collaboration relations,} the prior distribution of model parameters $\mathbf{\Theta}^{(t)}$ given collaboration relationship $\mathbf{W}^{(t)}$ and the communication graph $\mathbf{C}^{(t)}$ are described by:
\begin{equation}
    \label{prior}
\setlength\abovedisplayskip{1pt}
\setlength\belowdisplayskip{1pt}
    \begin{aligned}
        -\log p\left(\mathbf{\Theta}^{(t)} \right)& \propto 
        \sum_{1 \leq i, j \leq N} \mathbf{W}_{ij}^{(t)} \left\| \boldsymbol{\theta}_i^{(t)} - \boldsymbol{\theta}_j^{(t)} \right\|_2^2 \\
        &=\operatorname{\textbf{tr}} \left(\mathbf{\Theta}^{(t) \top} \mathbf{L}^{(t)} \mathbf{\Theta}^{(t)}\right),
    \end{aligned}
\vspace{-1mm}
\end{equation}
where $\mathbf{W}^{(t)}_{ij}= 0$ if $\mathbf{C}^{(t)}_{ij}=0$. Here $\operatorname{\textbf{tr}}\left(\mathbf{\Theta}^{(t) \top} \mathbf{L}^{(t)} \mathbf{\Theta}^{(t)} \right)$ is the smoothness of $\mathbf{\Theta}^{(t)}$ viewed as functions on the graph and $\mathbf{L}^{(t)}$ is the combinatorial graph laplacian defined as $\mathbf{L}^{(t)} = \mathbf{D}^{(t)} - \mathbf{W}^{(t)}$ at time $t$. $\mathbf{D}^{(t)}$ is a diagonal matrix with its $i$-th entry representing the degree of the $i$-th node. 

Taking the likelihood probability shown in ~\eqref{likelihood} and the prior probability shown in ~\eqref{prior} to the original posterior probability $p\left(\mathbf{\Theta}^{(t)} \big| \mathcal{D}_{\mathcal{T}_{1:N}}^t \right)$ in ~\eqref{MAP}, we obtain a decomposed version of probability: 
\begin{equation}
\setlength\abovedisplayskip{1pt}
\setlength\belowdisplayskip{1pt}
\begin{aligned}
    p\left(\mathbf{\Theta}^{(t)} \big| \mathcal{D}_{\mathcal{T}_{1:N}}^t \right) 
    &\propto \prod_{i=1}^{N} \prod_{k=1}^t p\left(\mathcal{D}_{i}^{(k)} \big| \boldsymbol{\theta}_i^{(t)} \right) p\left(\mathbf{\Theta}^{(t)}\right).
\end{aligned}
\end{equation}
Thus we can reformulate~\eqref{MAP} as the following non-constraint optimization problem:
\begin{equation}
    \label{original}
\setlength\abovedisplayskip{1pt}
\setlength\belowdisplayskip{1pt}
    \begin{aligned}
    \min_{\boldsymbol{\theta}_i^{(t)}} &\sum_{i=1}^{N}\sum_{k=1}^{t} -\log p\left(\mathcal{D}_i^{(k)} \big | \boldsymbol{\theta}_i^{(t)} \right) + \lambda \operatorname{\textbf{tr}}\left(\mathbf{\Theta}^{(t) \top} \mathbf{L}^{(t)} \mathbf{\Theta}^{(t)} \right),
    \end{aligned}
\end{equation}
where the prior probability distribution of $\mathbf{\Theta}^{(t)}$ known as $p\left(\mathbf{\Theta}^{(t)}\right)$ corresponds to the graph smoothness regularization $\operatorname{\textbf{tr}}\left(\mathbf{\Theta}^{(t) \top} \mathbf{L}^{(t)} \mathbf{\Theta}^{(t)} \right)$. 

In practice, accessing the collaboration relationships $\mathbf{W}^{(t)}$ among agents is challenging due to i) the difficulties of quantifying task similarity; ii) the need to adapt to dynamically evolving tasks among agents; and iii) the necessity for a decentralized and autonomous mechanism to determine this structure. Hence, we incorporate the learning of collaboration graph into the optimization problem~(\ref{original}) where agents actively choose their collaborators, which takes both $\mathbf{W}^{(t)}$ and $\mathbf{\Theta}^{(t)}$ into consideration:
\begin{equation}
    \label{modified}
\setlength\abovedisplayskip{1pt}
\setlength\belowdisplayskip{1pt}
    \begin{aligned}
    \min_{\mathbf{\Theta}^{(t)}, \mathbf{W}^{(t)}} &\sum_{i=1}^N \mathcal{L}_i^{(t)}\left(\boldsymbol{\theta}_i^{(t)}\right) + \lambda_1 \left\| \boldsymbol{\theta}_i^{(t)} \right\|_2^2 + \mathcal{I}_{\geq 0}\left(\mathbf{W}^{(t)}\right)\\
    &+ \lambda_2 \operatorname{\textbf{tr}}\left(\mathbf{\Theta}^{(t) \top} \mathbf{L}^{(t)} \mathbf{\Theta}^{(t)} \right) + \lambda_3 \left\| \mathbf{W}^{(t)} \right\|_{\text{F}}^2   \\
    \text{s.t.} \  &\left\|\mathbf{W}^{(t)} \boldsymbol{1} \right\|_1 = \boldsymbol{m}, \ \operatorname{\textbf{diag}}\left(\mathbf{W}^{(t)} \right) = \boldsymbol{0}, \\
     & \mathbf{W}^{(t)}_{ij}= 0 \ \text{if} \ \mathbf{C}^{(t)}_{ij}=0, \ 1\leq i, j \leq N.
    \end{aligned}
\end{equation}
Here the accumulated loss of the $i$-th agent with the model parameter $\boldsymbol{\theta}_i^{(t)}$ is $\mathcal{L}_i^{(t)}\left(\boldsymbol{\theta}_i^{(t)}\right)=\frac{1}{t}\sum_{k=1}^t \ell^k \left(\boldsymbol{\theta}_i^{(t)}\right)$, where $\ell^k \left(\boldsymbol{\theta}_i^{(t)}\right) = -\log p\left(\mathcal{D}_i^{(k)} \big | \boldsymbol{\theta}_i^{(t)} \right)$ is the loss of the model $\boldsymbol{\theta}_i^{(t)}$ evaluated on the supervised dataset  $\mathcal{D}_i^{(k)}$. The edge weights of the collaboration graph $\mathbf{W}^{(t)}$ are set to be positive without self-loops by adding the indicator function $\mathcal{I}_{\geq 0}\left(\mathbf{W}^{(t)}\right)$ and the constraint $\operatorname{\textbf{diag}}\left(\mathbf{W}^{(t)} \right) = \boldsymbol{0}$ to the optimization.

Note that: i) to increase the generalization ability for models' parameter learning, we incorporate the $L_2$ regularization term associated with parameter $\lambda_1$; ii) to ensure non-trivial solutions and smooth edge weights in graph structure $\mathbf{W}^{(t)}$, we introduce the $L_2$ regularization term of $\mathbf{W}^{(t)}$ to the optimization problem with hyper-parameter $\lambda_3$; and iii) to maintain a constant norm and sparse structure in the collaboration graph, we normalize the graph $\mathbf{W}^{(t)}$ to hyper-parameter $\boldsymbol{m}$ using $L_1$ metric. 
\vspace{-3mm}
\subsection{Solution Overview for The Optimization Problem}
\label{sol-overview}
As a general solver of \textbf{Problem}~\eqref{modified}, $\mathcal{F}(\cdot)$ is a mapping that takes both the training data $\mathcal{D}_{1:N}^{(t)}=\left\{\mathcal{D}_i^{(t)}\right\}_{i=1}^N$ and the learning memories $\mathcal{M}_{1:N}^{(t-1)} = \left\{\mathcal{M}_{i}^{(t-1)}\right\}_{i=1}^{N}$ as input and then outputs the model parameters $\mathbf{\Theta}^{(t)}$ and collaboration relationships $\mathbf{W}^{(t)}$. Note that this solver $\mathcal{F}(\cdot)$ needs to follow the task setting as stated in~\textbf{Section}~\ref{task-setting}; that is, the process of learning model parameters should be decentralized under the condition of the communication graph $\mathbf{C}^{(t)}$. Mathematically, the solver $\mathcal{F}(\cdot)$ works as follows,
\begin{equation}
\setlength\abovedisplayskip{1pt}
\setlength\belowdisplayskip{1pt}
    \label{compact_solution}
    \begin{aligned}
        \mathbf{\Theta}^{(t)}, \mathbf{W}^{(t)}, \mathcal{M}_{1:N}^{(t)} = \mathcal{F}\left(\mathcal{D}_{1:N}^{(t)}, \mathcal{M}_{1:N}^{(t-1)} \big | \mathbf{C}^{(t)} ;\{\lambda_{i}\}_{i=1}^3 \right),
    \end{aligned}
\end{equation}
where $\left\{\lambda_{i}\right\}_{i=1}^3$ are hyper-parameters in \textbf{Problem}~\eqref{modified}. 



The main challenge of solving \textbf{Problem}~\eqref{modified} is the existence of the graph smoothness term $\operatorname{\textbf{tr}}\left(\mathbf{\Theta}^{(t) \top} \mathbf{L}^{(t)} \mathbf{\Theta}^{(t)} \right)$. Since $\mathbf{L}^{(t)}$ depends on $\mathbf{W}^{(t)}$, this term involves the interaction between $\mathbf{\Theta}^{(t)}$ and $\mathbf{W}^{(t)}$, causing the optimization problem to be non-convex. Hence we leverage alternate convex search to $\mathbf{\Theta}^{(t)}$ and $\mathbf{W}^{(t)}$, which could efficiently find stationary solutions for certain non-convex (biconvex) optimization problems~\cite{biconvex}. Based on this approach, our solver $\mathcal{F}(\cdot)$ can be  decomposed into three parts: i) \textbf{local learning} $\mathbf{\Phi}_{\rm local}(\cdot)$, which online updates the learning history $\mathcal{M}_{1:N}^{(t)}$ and initialize the model parameters $\mathbf{\Theta}^{(t)}$ given the newly observed task data $\mathcal{D}_{1:N}^{(t)}$; ii) \textbf{collaborative relational inference} $\mathbf{\Phi}_{\rm graph}(\cdot)$, which minimizes the objective function by finding the solution $\mathbf{W}^{(t)}$ with fixed $\mathbf{\Theta}^{(t)}$; and iii) \textbf{lifelong model update} $\mathbf{\Phi}_{\rm param}(\cdot)$, which optimizes the model parameters $\mathbf{\Theta}^{(t)}$ with the collaboration relations $\mathbf{W}^{(t)}$ held constant. Mathematically,  the solution is:
\begin{equation}
    \label{iterative_solution}
\setlength\abovedisplayskip{1pt}
\setlength\belowdisplayskip{1pt}
    \begin{aligned}
        &\mathbf{\Theta}_{\rm init}^{(t)}, \mathcal{M}_{1:N}^{(t)} = \mathbf{\Phi}_{\rm local}\left(\mathcal{D}_{1:N}^{(t)}, \mathcal{M}_{1:N}^{(t-1)}; \lambda_1 \right), \\
        &\mathbf{W}^{(t), k} = \mathbf{\Phi}_{\rm graph}\left(\mathbf{\Theta}^{(t), k-1} \big | \mathbf{C}^{(t)}; \lambda_2, \lambda_3 \right), \\
        &\mathbf{\Theta}^{(t), k} = \mathbf{\Phi}_{\rm param}\left(\mathbf{\Theta}^{(t), k-1}, \mathbf{W}^{(t), k}, \mathcal{M}_{1:N}^{(t)} \big | \mathbf{C}^{(t)} ; \lambda_1, \lambda_2 \right), \\
        & \text{for } 1\leq k \leq M.
    \end{aligned}
\end{equation}
Here the solver begins with the local learning initialization $\mathbf{\Phi}_{\rm local}(\cdot)$, which prepares the initialization of model parameters $\mathbf{\Theta}_{\rm init}^{(t)}$ used for collaborative relational inference, and then follows by the alternate convex search between $\mathbf{\Phi}_{\rm graph}(\cdot)$ and $\mathbf{\Phi}_{\rm param}(\cdot)$ for $M$ iterations.

\noindent \textbf{Local learning $\mathbf{\Phi}_{\rm local}(\cdot)$.} In this part, the algorithm tries to give an initialization of model parameters to launch the iterations between $\mathbf{\Theta}^{(t)}$ and $\mathbf{W}^{(t)}$. This initialization considers previous learning experiences by solving the following optimization problem:
\begin{equation}
    \label{local_learning}
\setlength\abovedisplayskip{1pt}
\setlength\belowdisplayskip{1pt}
    \begin{aligned}
        \min_{\mathbf{\Theta}^{(t)}} \sum_{i=1}^N\mathcal{L}_i^{(t)}\left(\boldsymbol{\theta}_i^{(t)}\right) + \lambda_1 \left\| \boldsymbol{\theta}_i^{(t)} \right\|_2^2.
    \end{aligned}
\end{equation}
Note that the learning memory $\mathcal{M}_{1:N}^{(t)}$ acts as intermediate variables essential for solving this optimization problem. Rather than using all historical datasets, the size of  $\mathcal{M}_{1:N}^{(t)}$ does not increase over time, prompting storage efficiency.

\noindent \textbf{Collaborative relational inference $\mathbf{\Phi}_{\rm graph}(\cdot)$.} In this part, the algorithm tries to solve the sub-optimization problem of $\mathbf{W}^{(t)}$ given fixed $\mathbf{\Theta}^{(t)}$. It aims to find the target collaboration structure by solving the following optimization problem:
\begin{equation}
    \label{colla_relation_learning}
\setlength\abovedisplayskip{1pt}
\setlength\belowdisplayskip{1pt}
    \begin{aligned}
        \min_{\mathbf{W}^{(t)}}
        &\lambda_2 \operatorname{\textbf{tr}}\left(\mathbf{\Theta}^{(t) \top} \mathbf{L}^{(t)} \mathbf{\Theta}^{(t)} \right)+ \mathcal{I}_{\geq 0}\left(\mathbf{W}^{(t)}\right) + \lambda_3 \left\| \mathbf{W}^{(t)}\right\|_{\rm F}^2 \\
        \text{s.t.} \  &\|\mathbf{W}^{(t)} \|_1 = \boldsymbol{m}, \ \operatorname{\textbf{diag}}\left(\mathbf{W}^{(t)} \right) = 0, \\
        & \mathbf{W}^{(t)}_{ij}= 0 \ \text{if} \ \mathbf{C}^{(t)}_{ij}=0, \ 1\leq i, j \leq N,
    \end{aligned}
\end{equation}
where $\mathcal{I}_{\geq 0}\left(\mathbf{W}^{(t)}\right)$ is the indicator function requiring edge weights of $\mathbf{W}^{(t)}$ to be positive.

\noindent \textbf{Lifelong model update $\mathbf{\Phi}_{\rm param}(\cdot)$.} Given the optimized collaboration relation $\mathbf{W}^{(t)}$, this part tries to find the model parameters $\mathbf{\Theta^{(t)}}$ by solving the corresponding non-constraint optimization problem:
\begin{equation}
    \label{colla_param_learning}
\setlength\abovedisplayskip{1pt}
\setlength\belowdisplayskip{1pt}
    \begin{aligned}
        \min_{\mathbf{\Theta}^{(t)}} &\sum_{i=1}^N \mathcal{L}_i^{(t)}\left(\boldsymbol{\theta}_i^{(t)}\right) + \lambda_1 \left\| \boldsymbol{\theta}_i^{(t)} \right\|_2^2 + \lambda_2 \operatorname{\textbf{tr}}\left(\mathbf{\Theta}^{(t) \top} \mathbf{L}^{(t)} \mathbf{\Theta}^{(t)} \right).
    \end{aligned}
\end{equation}

Note that to ensure the decentralized collaboration mechanism, both iterative steps $\mathbf{\Phi}_{\rm graph}(\cdot)$ and $\mathbf{\Phi}_{\rm param}(\cdot)$ should be operated under the communication graph $\mathbf{C}^{(t)}$ constraint. A visual representation of the overall solver can be found in \textbf{Figure}~\ref{fig:framework}. 
We now introduce each step in detail.

\vspace{-3mm}
\subsection{Local Learning}
\vspace{-1.5mm}
\label{subsec-local-learning}
Here we aim to solve the optimization problem shown in ~\eqref{local_learning}. Direct optimization of this objective function is impractical due to agents' inability to access data from previous tasks, preventing the computation of gradients from prior tasks, hence difficult for model parameter training. Starting from this point, once we can approximate previous functions' gradients, we obtain the information for the correct optimization direction of model parameter training. This approximation does not require the data samples from prior tasks, but merely the gradient information derived from the loss function. Drawing inspiration from the function approximation theory~\cite{rudin_analysis}, we consider using a second-order Taylor expansion to approximate the loss functions of agents' encountered tasks, which could provide the gradient information for future learning tasks. 

Specifically, recall that the definition of $\ell^{k}\left(\boldsymbol{\theta}_i^{(t)}\right)=-\log p\left(\mathcal{D}_i^{(k)} \big | \boldsymbol{\theta}_i^{(t)}\right)$ shown in ~\eqref{modified} is the loss function correspond to $\mathcal{D}_i^{(k)}$ for model parameter $\boldsymbol{\theta}_i^{(t)}$. Suppose we perform direct Taylor expansion at one point $\boldsymbol{\alpha}_{i}^{(k)}$ of $\ell^k \left(\boldsymbol{\theta}_i^{(t)}\right)$. Then, the accumulated loss $\mathcal{L}_i^{(t)}\left(\boldsymbol{\theta}_i^{(t)}\right) = \frac{1}{t} \sum_{k=1}^t \ell^k \left(\boldsymbol{\theta}_i^{(t)}\right)$ is approximated by
\begin{equation}
\setlength\abovedisplayskip{1pt}
\setlength\belowdisplayskip{1pt}
\begin{aligned}
    &\mathcal{L}_i^{(t)}\left(\boldsymbol{\theta}_i^{(t)}\right) \approx \frac{1}{t} \sum_{k=1}^{t} \frac{1}{2}\left(\boldsymbol{\theta}_i^{(t)} - \boldsymbol{\alpha}_i^{(k)}\right)^\top \mathbf{H}_i^{(k)} \left(\boldsymbol{\theta}_i^{(t)} - \boldsymbol{\alpha}_i^{(k)} \right) \\
    &+ \left(\boldsymbol{\theta}_i^{(t)} - \boldsymbol{\alpha}_i^{(k)} \right)^\top \boldsymbol{g}_i^ {(k)}+ \ell^{k}\left(\boldsymbol{\alpha}_i^{(k)}\right),
\end{aligned}
\end{equation}
where $\mathbf{H}_i^{(k)}$ is the Hessian of $\ell^{k}\left(\boldsymbol{\theta}_i^{(t)}\right)$ at $\boldsymbol{\alpha}_i^{(k)}$, and $\boldsymbol{g}_i^{(k)}$ is the corresponding gradient at the same point. Thus the gradient of $\boldsymbol{\theta}_i^{(t)}$ at time $t$ can be approximated as 
\begin{equation}
    \label{gradient}
\setlength\abovedisplayskip{1pt}
\setlength\belowdisplayskip{1pt}
    \begin{aligned}
        \frac{\partial \mathcal{L}_i^{(t)}\left(\boldsymbol{\theta}_i^{(t)}\right)}{\partial \boldsymbol{\theta}_i^{(t)}} &= \frac{1}{t} \sum_{k=1}^{t} \mathbf{H}_i^{(k)}\left(\boldsymbol{\theta}_i^{(t)} - \boldsymbol{\alpha}_i^{(k)}\right) + \boldsymbol{g}_i^{(k)}\\
        &= \mathbf{A}^{(t)} \boldsymbol{\theta}_i^{(t)} - \boldsymbol{b}_i^{(t)},
    \end{aligned}
\end{equation}
where $\mathbf{A}_i^{(t)}$, $\boldsymbol{b}_i^{(t)}$ are two intermediate variables that can be understood as the linear summation of historical variables. These two variables $\left(\mathbf{A}_i^{(t)}, \boldsymbol{b}_i^{(t)}\right)$, corresponding to the aforementioned learning memory $\mathcal{M}_i^{(t)}$, can be updated online in terms of $\mathbf{H}_i^{(t)}$, $\boldsymbol{\alpha}_i^{(t)}$ and $\boldsymbol{g}_i^{(t)}$ according to the following rule:
\begin{equation}
    \label{updation-intermediate}
\setlength\abovedisplayskip{1pt}
\setlength\belowdisplayskip{1pt}
    \begin{aligned}
        &\mathbf{A}_i^{(t)} = \left[(t-1)\mathbf{A}_i^{(t-1)} + \mathbf{H}_i^{(t)}\right]/t, \\
        &\boldsymbol{b}_i^{(t)} = \left[(t-1)\boldsymbol{b}_i^{(t-1)} + \mathbf{H}_i^{(t)}\boldsymbol{\alpha}_i^{(t)} - \boldsymbol{g}_i^{(t)}\right]/t.
    \end{aligned}
\end{equation}
Hence online optimization of model parameters $\mathbf{\Theta}^{(t)}$ can be achieved by continuously updating parameters’ gradient functions $\partial \mathcal{L}_i^{(t)}\left(\boldsymbol{\theta}_i^{(t)}\right) / \partial \boldsymbol{\theta}_i^{(t)}$ according to the approximation shown in ~\eqref{gradient} and updating rule ~\eqref{updation-intermediate}. Considering the first-order condition of problem~\eqref{local_learning}, $\partial \mathcal{L}_i^{(t)}\left(\boldsymbol{\theta}_i^{(t)}\right) / \partial \boldsymbol{\theta}_i^{(t)} +  2\lambda_1 \boldsymbol{\theta}_i^{(t)} = 0$, we take the approximation of the gradient function ~\eqref{gradient} into this first-order condition and obtain the following parameter initialization rule for each agent. The model parameters initialized by local learning is
\begin{equation}
\setlength\abovedisplayskip{1pt}
\setlength\belowdisplayskip{1pt}
    \begin{aligned}
    \boldsymbol{\theta}_i^{(t,0)} &= \left[\mathbf{A}_i^{(t)} + 2\lambda_1 \mathbf{I} \right]^{-1}\boldsymbol{b}_i^{(t)}.
    \end{aligned}  
\end{equation}
The full local learning framework $\mathbf{\Phi}_{\rm local}(\cdot)$ is shown in~\textbf{Algorithm}~\ref{alg1}. Note that i) all computations happen locally; and ii) the size of the learning memory $\mathcal{M}_i^{(t)} = \left(\mathbf{A}_i^{(t)}, \boldsymbol{b}_i^{(t)}\right)$ remains a constant, which is storage-friendly.

\begin{algorithm}[t]
\small
\begin{algorithmic}
\caption{Local learning $\mathbf{\Phi}_{\rm local}(\cdot)$}
\label{alg1}
\STATE \textbf{Input data: } $\mathcal{D}_{1:N}^{(t)} = \left\{\mathcal{D}_i^{(t)} \big | \mathcal{D}_i^{(t)} = \left(\mathbf{X}_i^{(t)}, \mathbf{Y}_i^{(t)}\right), 1\leq i \leq N \right\}$
\FOR{$i = 1,2, \ldots, N$ (\textbf{parallel})}
\STATE \textbf{Initialize expansion point} $\boldsymbol{\alpha}_i^{(t)}$
\STATE $\mathbf{H}_i^{(t)} = \nabla_{\boldsymbol{\alpha}_i^{(t)}}^2 \mathcal{L}\left(f_{\boldsymbol{\theta}_i^{(t)}}\left(\mathbf{X}_i^{(t)}\right), \mathbf{Y}_i^{(t)}\right)$   \COMMENT{  {Calculate Hessian}}
\STATE $\mathbf{A}_i^{(t)} =\left( (t-1) \mathbf{A}_i^{(t-1)}+\mathbf{H}_i^{(t)} \right)/t$  
 $\ \ $  \COMMENT{{Update history} $\mathbf{A}_i^{(t)}$}
\STATE $\boldsymbol{b}_i^{(t)} = \left((t-1)\boldsymbol{b}_i^{(t-1)} + \mathbf{H}_i^{(t)}\boldsymbol{\alpha}_i^{(t)}\right)/t$  $\ \ $  \COMMENT{{Update history} $\boldsymbol{b}_i^{(t)}$}
\STATE $\boldsymbol{\theta}_i^{(t,0)} = \left[\mathbf{A}_i^{(t)} + 2\lambda_1 \mathbf{I} \right]^{-1}\boldsymbol{b}_i^{(t)}$ $\ $  \COMMENT{{Calculate parameter} $\boldsymbol{\theta}_i^{(t, 0)}$}
\ENDFOR
\end{algorithmic}
\end{algorithm}
\vspace{-3mm}
\subsection{Collaborative Relational Inference}
\label{col-relation-infer}

Here we aim to solve the collaborative relational inference problem shown in ~\eqref{colla_relation_learning}. To enable the decentralized calculation of the collaboration relationships among the agents, we aim to split the graph adjacency matrix $\mathbf{W}^{(t)}$ into $N$ blocks. Specifically, similar to the reformulation trick shown in~\cite{L2G}, we define the block correspond to the $i$-th agent be $\boldsymbol{w}_i^{(t)} \in \mathbb{R}^{N-1}$ as the $i$-th row of the adjacency matrix $\mathbf{W}^{(t)}$ except the $i$-th element. Define the parameter distance vector of the $i$-th agent as $\boldsymbol{d}_i^{(t)} \in \mathbb{R}^{N-1}$. Due to the existence of communication constraint $\mathbf{C}^{(t)}$, we extend the definition of $\boldsymbol{d}_i^{(t)}$ by $\boldsymbol{d}_{ij}^{(t)} = \left\|\boldsymbol{\theta}_i^{(t)} - \boldsymbol{\theta}_j^{(t)} \right\|_2^2 / \mathbf{C}_{ij}^{(t)}$. Then, an equivalent form of \textbf{Problem}~\eqref{colla_relation_learning} is 
\begin{equation}
\label{graph_learning_optimization_problem}
   \begin{aligned}
\min_{\left(\boldsymbol{w}_1^{(t)},\ldots,\boldsymbol{w}_N^{(t)}\right)} \  & \sum_{i=1}^{N} \lambda_2\boldsymbol{w}_i^{(t) \top} \boldsymbol{d}_i^{(t)} + \lambda_3 \left\|\boldsymbol{w}_i^{(t)} \right\|_2^2 + \mathcal{I}_{x>0}\left(\boldsymbol{w}_i^{(t)}\right) \\
    & \text{s.t.} \ \sum_{i=1}^{N}\boldsymbol{1}^{\top}\boldsymbol{w}_i^{(t)} = \boldsymbol{m}.
\end{aligned} 
\end{equation}
To solve this optimization problem, we use the Lagrange multiplier method to analyze the KKT conditions of the solution $\boldsymbol{w}_i^{(t)}$s. This corresponds to solving the following single-variable equation
\begin{equation}
    \label{equation-z}
    \setlength\abovedisplayskip{1pt}
\setlength\belowdisplayskip{1pt}
    \begin{aligned}
        \sum_{i=1}^N \boldsymbol{1}^\top \operatorname{\textbf{ReLU}}\left(-\frac{\lambda_2\boldsymbol{d}_i^{(t)} + z\boldsymbol{1}}{2\lambda_3}\right) = \boldsymbol{m},
    \end{aligned}
\end{equation}
where $z$ corresponds to the Lagrange multiplier of the $L_1$ equality constraint and $\boldsymbol{1}\in \mathbb{R}^{N-1}$.

\begin{algorithm}[t]
\small
\begin{algorithmic}
\caption{Collaborative relational inference $\mathbf{\Phi}_{\rm graph}(\cdot)$}
\label{alg3}
\STATE \textbf{Input:} $\mathbf{\Theta}^{(t)} = \left(\boldsymbol{\theta}_1^{(t)}, \ldots, \boldsymbol{\theta}_n^{(t)}\right)$
\FOR{$k = 0, 1, 2, \ldots, M$}
\FOR{$i = 1, 2, \ldots , N$ (\textbf{parallel})}
    \STATE \textbf{Initialize:} $\boldsymbol{z}_i^0 = \boldsymbol{0} $, $1\leq i \leq N$ 

    \STATE $\boldsymbol{u}_i^k = -\left(\lambda_2\boldsymbol{d}_i^{(t)} + \boldsymbol{z}_i^{k-1} \boldsymbol{1}\right)/(2\lambda_3)$
    \STATE $\boldsymbol{w}_i^{(t), k}, \boldsymbol{w}_i^{\prime(t), k }, \boldsymbol{w}_i^{\prime \prime(t), k} = h\left(\boldsymbol{u}_i^k\right), h^\prime\left(\boldsymbol{u}_i^k\right), h^{\prime \prime} \left(\boldsymbol{u}_i^k\right)$
    
    \STATE $\boldsymbol{x}_i^{k} = \boldsymbol{u}_i^{k \top} \boldsymbol{w}_i^{\prime(t), k } + \boldsymbol{w}_i^{(t),k \top} \left(\boldsymbol{1} - \boldsymbol{w}_i^{\prime \prime (t), k}\right)$
    \STATE $\boldsymbol{s}_i^k = \left(2\boldsymbol{1} - \boldsymbol{w}_i^{\prime (t), k}\right)^\top \boldsymbol{w}_i^{\prime (t), k}$
    \STATE $\boldsymbol{r}_i^k = \left(\boldsymbol{u}_i^k - \boldsymbol{w}_i^{\prime(t), k}\right)^\top \boldsymbol{w}_i^{\prime \prime(t), k}$
    \STATE $\boldsymbol{y}_i^{k} = \boldsymbol{s}_i^k + \boldsymbol{r}_i^k$
    \STATE \textbf{Gather} $\boldsymbol{x}_j^{k}$, $\boldsymbol{y}_j^k$, $1 \leq j \leq N$ via $\mathbf{C}^{(t)}$ to  Agent $i$
    \STATE $\boldsymbol{p}_i^k = \sum_{j=1}^N \boldsymbol{x}_j^k - \boldsymbol{m}$
    \STATE $\boldsymbol{q}_i^k = -1/\lambda_3\sum_{j=1}^N \boldsymbol{y}_j^k $
    \STATE $\boldsymbol{z}_i^k = \boldsymbol{z}_i^{k-1} -\boldsymbol{p}_i^k / \boldsymbol{q}_i^k$  \COMMENT{Dual update $z_i$}
\ENDFOR
\ENDFOR
\STATE \textbf{Output:} $\mathbf{W}^{(t)} \Leftarrow \left(\boldsymbol{w}_1^{(t),M}, \ldots, \boldsymbol{w}_N^{(t),M}\right)$
\end{algorithmic}
\end{algorithm}

To find the root $z$ of this non-smooth function, we propose an approximation method that tries to use a twice differentiable function $h(x) = (\sqrt{x^2 + b} + x)/2$ to approximate $\operatorname{\textbf{ReLU}}$. This approximation provides two advantages: i) it yields acceptable solutions that are easily derived; and ii) it enhances efficiency, requiring fewer iterations when employing second-order optimization methods. We leverage the Newton iterations to ~\eqref{equation-z}, which could converge to the target solution much faster than state-of-the-art graph learning algorithms~\cite{L2G}.

To enable the decentralized computation of $\boldsymbol{w}_i^{(t)}$ for each agent $i$, the critical part lies in the determination process of parameter $z$. This is because once the variable $z$ is determined, each agent could obtain their collaboration relation weights $\boldsymbol{w}_i^{(t)}$ independently according to the KKT condition:
\begin{equation}
\label{w_graph}
    \boldsymbol{w}_i^{(t)} = \operatorname{\textbf{ReLU}}\left(-\frac{\lambda_2\boldsymbol{d}_i^{(t)} + z\boldsymbol{1}}{2\lambda_3}\right) \ 1\leq i \leq N .
\end{equation}
However, due to the global constraint $\|\mathbf{W}^{(t)}\|_1 = \boldsymbol{m}$, the Newton iterations for $z$ require global information at every step, compromising the decentralization of this mechanism. To address this issue, we propose that each agent maintain a local copy of $z$, denoted as $z_i$ for the $i$-th agent, with the same initialization. Specifically, each agent determines $\boldsymbol{w}_i^{(t)}$ based on its own $z_i$. When updating their own $z_i$ values, agents gather information from their peers based on the communication graph $\mathbf{C}^{(t)}$ and update their local $z_i$ values accordingly. Note that since the updating rules for $z_i$ are identical and they share the same initialization, the values of $z_i$ for agents remain consistent throughout the solution process.

\textbf{Algorithm}~\ref{alg3} shows the decentralized iterations among the agents in detail. We further analyze Algorithm~\ref{alg3} from both theoretical and empirical aspects in \textbf{Section}s~\ref{sec:theory} and ~\ref{sec:experiments}, respectively.

\vspace{-3mm}
\subsection{Lifelong Model Update}
\label{lifelong-model-update}

Here we aim to solve the lifelong model update problem in ~\eqref{colla_param_learning}. To realize the decentralized optimization of $\mathbf{\Theta}^{(t)}$, we aim to optimize the problem in $N$ blocks $\mathbf{\Theta}^{(t)} = \left[\boldsymbol{\theta}_1^{(t)}, \ldots , \boldsymbol{\theta}_N^{(t)}\right]$. For each block of the model parameters $\boldsymbol{\theta}_i^{(t)}$, the first-order-condition of the problem shown in ~\eqref{colla_param_learning} can be described as a linear equation by taking the gradient approximation ~\eqref{gradient} into consideration:
\begin{equation}
    \label{foc}
    \begin{aligned}
        & \mathbf{A}_i^{(t)}\boldsymbol{\theta}_i^{(t)} - \boldsymbol{b}_i^{(t)}\\
        &= - 2\lambda_1 \boldsymbol{\theta}_i^{(t)} - 2\lambda_2 \sum_{j=1}^{N} \left(\mathbf{W}_{ij}^{(t)} + \mathbf{W}_{ji}^{(t)}\right)\left(\boldsymbol{\theta}_i^{(t)} - \boldsymbol{\theta}_j^{(t)}\right).
    \end{aligned}
\end{equation}

\begin{algorithm}[t]
\small
\begin{algorithmic}
\caption{Lifelong model update $\mathbf{\Phi}_{\rm param}(\cdot)$}
\label{alg2}
\STATE \textbf{Initialize:} $\left(\mathbf{A}_i^{(t)}, \boldsymbol{b}_i^{(t)}\right)$, $1\leq i \leq n$
\STATE \textbf{Input:} $\left(\boldsymbol{\theta}_1^{(t),0}, \ldots, \boldsymbol{\theta}_N^{(t), 0}\right)$, $\mathbf{W}^{(t)}$
\FOR{$k=1,2, \ldots, M$}
\FOR{$i = 1,2, \ldots, N$ (\textbf{parallel})}
\STATE send local model $\boldsymbol{\theta}_i^{(t), k-1}$ to agents in $\mathcal{N}(i)$
\STATE receive models $\left\{\boldsymbol{\theta}_j^{(t), k-1} \big | j \in \mathcal{N}(i) \right\}$ from agents in $\mathcal{N}(i)$
\STATE $\widetilde{\boldsymbol{\theta}_i}^{(t), k} \Leftarrow \boldsymbol{b}_i^{(t)} + 4\lambda_2 \sum_{j \in \mathcal{N}(i)} \mathbf{W}_{ij}^{(t)}\boldsymbol{\theta}_j^{(t),k-1}$
\STATE $\mathbf{B}_i^{(t)} \Leftarrow \mathbf{A}_i^{(t)} + \left(2\lambda_1 + 4\lambda_2 \mathbf{D}_i^{(t)}\right) \mathbf{I}$  
\STATE $\boldsymbol{\theta}_i^{(t),k} \Leftarrow \left( \mathbf{B}_i^{(t)} \right)^{-1}\widetilde{\boldsymbol{\theta}_i}^{(t), k}$ $\ \ $  
\ENDFOR
\ENDFOR
\STATE \textbf{Output:} $\left(\boldsymbol{\theta}_1^{(t)}, \ldots, \boldsymbol{\theta}_N^{(t)}\right) \Leftarrow \left(\boldsymbol{\theta}_1^{(t, M)}, \ldots, \boldsymbol{\theta}_N^{(t, M)}\right)$
\end{algorithmic}
\end{algorithm}

Hence optimizing $\mathbf{\Theta}^{(t)}$ corresponds to solving a linear system with $N$ first-order conditions. To obtain the decentralized computation and reduce the computational overhead, we consider Jacobi-iteration~\cite{matrix-computations} based approaches. Specifically, assuming the symmetry of the adjacency matrix $\mathbf{W}^{(t)}$, we rewrite ~\eqref{foc} as
\begin{align*}
    \left(\mathbf{A}_i^{(t)}+2\lambda_1\mathbf{I} + 4\lambda_2 \mathbf{D}_i^{(t)}\right)\boldsymbol{\theta}_i^{(t)} 
    = \boldsymbol{b}_i^{(t)} + 4\lambda_2 \sum_{j=1}^N \mathbf{W}_{ij}^{(t)}\boldsymbol{\theta}_j^{(t)},
\end{align*}
where $\mathbf{D}_i^{(t)}$ is the degree of the $i$-th agent of the collaboration relation $\mathbf{W}^{(t)}$. Our approach simply modifies this linear equation into parameter-updating rules. At the $k$th iteration, the model parameter $\boldsymbol{\theta}_i^{(t), k}$ of the $i$-th agent at time $t$ is updated according to the rule:
\begin{equation}
    \label{eq:message_passing}
    \begin{aligned}
        \boldsymbol{\theta}_i^{(t), k} = {\left(\mathbf{B}_i^{(t) }\right)}^{-1}\left[\boldsymbol{b}_i^{(t)} + 4\lambda_2 \sum_{j \in \mathcal{N}(i)} \mathbf{W}_{ij}^{(t)}\boldsymbol{\theta}_j^{(t), k-1}\right],
    \end{aligned}
\end{equation}
where $\mathbf{B}_i^{(t)} = \mathbf{A}_i^{(t)}+2\lambda_1\mathbf{I} + 4\lambda_2 \mathbf{D}_i^{(t)}$ (assuming $\mathbf{B}_i^{(t)}$ is invertible) and $\mathcal{N}(i)$ is the neighborhood of agent $i$ according to $\mathbf{W}^{(t)}$. Hence each agent can independently update their model parameters using both their individual task information and their collaborators' parameters. The process of decentralized lifelong model update is shown in \textbf{Algorithm}~\ref{alg2}.

Note that: 1) the parameter updating rule enables decentralized computation of model parameters $\boldsymbol{\theta}_i^{(t)}$ through message passing under almost {$\mathcal{O}\left(\log({1}/{\epsilon})\right)$ iterations}; 2) our function approximation approach shown in ~\eqref{gradient} does not require positive definiteness of Hessian $\mathbf{H}_i^{(t)}$ compared to online Laplace approximation approaches~\cite{online-laplace1, online-laplace2}, with more flexibility such as the approximation point. In \textbf{Section}~\ref{sec:theory} we will discuss the best point for Taylor expansion of choosing $\boldsymbol{\alpha}_i^{(t)}$ for certain kinds of learning tasks and the convergence property and $\mathbf{B}_i^{(t)}$'s invertibility.

\begin{figure*}
    \centering
    \includegraphics[width=0.96\textwidth]{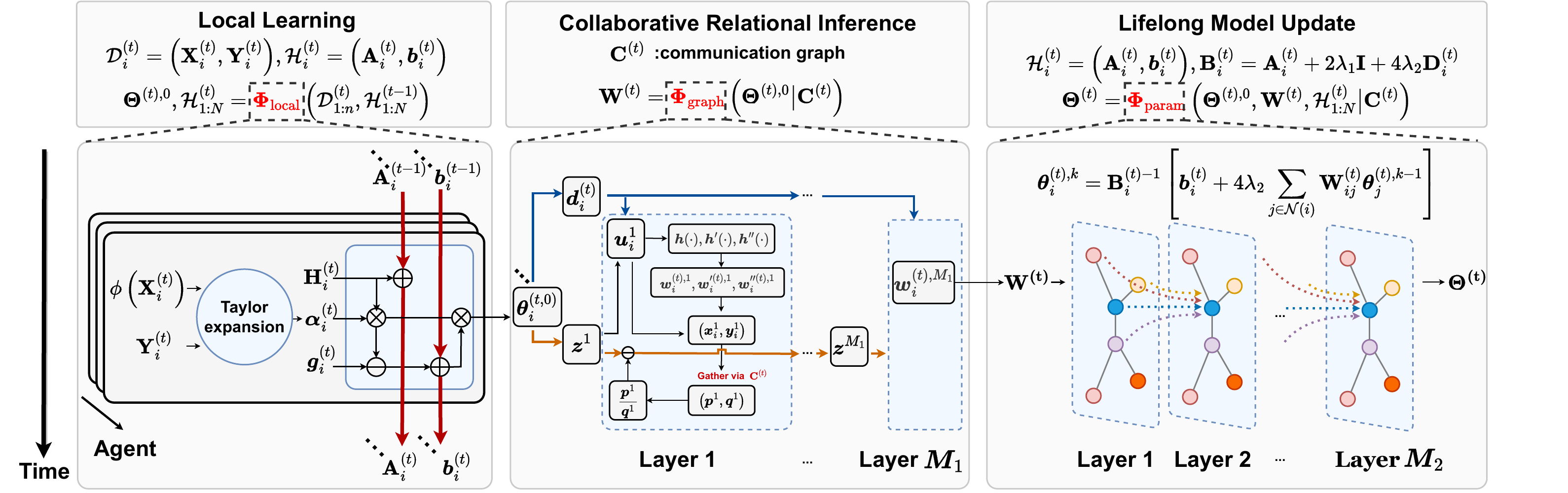}
    \vspace{-3mm}
    \caption{\small The unrolled network structure of the collaboration system with input data at one exact time. For each agent, the training data is firstly passed into a feed-forward network to be transformed into embeddings, and then this embedding is used to calculate the initialized model parameters according to $\mathbf{\Phi}_{\rm local}(\cdot)$. Then the agents start to communicate and collaborate to find proper parameters according to the iterations shown in $\mathbf{\Phi}_{\rm graph}(\cdot)$ and $\mathbf{\Phi}_{\rm param}(\cdot)$. Finally, the network $\mathcal{F}_{\gamma}(\cdot)$ outputs parameters $\mathbf{\Theta}^{(t)}$. These output model parameters $\mathbf{\Theta}^{(t)}$ are supervised by the data $(\mathcal{X}, \mathcal{Y}) \sim \mathcal{T}_i^{train}$. The training process of the network $\mathcal{F}_{\gamma}(\cdot)$ is learning to tune the parameter $\boldsymbol{\theta}$ to find the optimal parameter learning strategies of $\mathbf{\Theta}^{(t)}$.}
    \label{fig:unrolled_structure}
    \vspace{-3mm}
\end{figure*}

\begin{algorithm}[t]
\small
\caption{\texttt{DeLAMA}}
\label{alg4}
\begin{algorithmic}
\STATE \textbf{Network parameter: }\textcolor{blue}{$\gamma = \left(\beta, \{\lambda_i\}_{i=1}^3 \right)$}
\FOR{$t = 1,2, \ldots T$}

\STATE \textbf{Input:} $\mathcal{D}_{1:N}^{(t)} = \left\{\mathcal{D}_i^{(t)} \big | \mathcal{D}_i^{(t)} = \left(\mathcal{X}_i^{(t)}, \mathcal{Y}_i^{(t)}\right), 1\leq i \leq N \right\}$
\\$/*$ {Unfold the iterations of $\mathbf{\Phi}_{\rm local}(\cdot)$} \hfill $*/$
\FOR{$i = 1, 2, \ldots , N$ \textbf{parallel}} 
\STATE $\widehat{\mathcal{X}_i}^{(t)} = \phi_{\textcolor{blue}{\boldsymbol{\beta}}}\left(\mathcal{X}_i^{(t)}\right) $
\STATE \textbf{Initialize expansion point} $\boldsymbol{\alpha}_i^{(t)}$
\STATE $\mathbf{H}_i^{(t)} = \nabla_{\boldsymbol{\alpha}_i^{(t)}}^2 \mathcal{L}\left(f_{\boldsymbol{\theta}_i^{(t)}}\left(\widehat{\mathcal{X}_i}^{(t)}\right), \mathcal{Y}_i^{(t)}\right)$   \COMMENT{{Hessian}}
\STATE $\mathbf{A}_i^{(t)} =\left( (t-1) \mathbf{A}_i^{(t-1)}+\mathbf{H}_i^{(t)} \right)/t$  $\ \ $  \COMMENT{{Update} $\mathbf{A}_i^{(t)}$}
\STATE $\boldsymbol{b}_i^{(t)} = \left((t-1)\boldsymbol{b}_i^{(t-1)} + \mathbf{H}_i^{(t)}\boldsymbol{\alpha}_i^{(t)}\right)/t$ $\ \ $  \COMMENT{{Update} $\boldsymbol{b}_i^{(t)}$}
\STATE $\boldsymbol{\theta}_i^{(t,0)} = \left[\mathbf{A}_i^{(t)} + 2\textcolor{blue}{\lambda_1} \mathbf{I} \right]^{-1}\boldsymbol{b}_i^{(t)}$ $\ \ \ $  \COMMENT{{Calculate} $\boldsymbol{\theta}_i^{(t, 0)}$}
\ENDFOR
\\$/*$  {Unfold the iteration of $\mathbf{\Phi}_{\rm graph}(\cdot)$} \hfill$*/$ \\
\FOR{$k = 0, 1, 2, \ldots, M_1$}
\FOR{$i = 1, 2, \ldots , N$ \textbf{parallel}}
    \STATE \textbf{Initialize:} $\boldsymbol{z}_i^0 $, $1\leq i \leq N$ 

    \STATE $\boldsymbol{u}_i^k = -\left(\textcolor{blue}{\lambda_2}\boldsymbol{d}_i^{(t)} + \boldsymbol{z}_i^{k-1} \boldsymbol{1}\right)/(2\textcolor{blue}{\lambda_3})$
    \STATE $\boldsymbol{w}_i^{(t), k}, \boldsymbol{w}_i^{\prime(t), k }, \boldsymbol{w}_i^{\prime \prime(t), k} = h\left(\boldsymbol{u}_i^k\right), h^\prime\left(\boldsymbol{u}_i^k\right), h^{\prime \prime} \left(\boldsymbol{u}_i^k\right)$
    
    \STATE $\boldsymbol{x}_i^{k} = \boldsymbol{u}_i^{k \top} \boldsymbol{w}_i^{\prime(t), k } + \boldsymbol{w}_i^{(t),k \top} \left(\boldsymbol{1} - \boldsymbol{w}_i^{\prime \prime (t), k}\right)$
    \STATE $\boldsymbol{s}_i^k = \left(2\boldsymbol{1} - \boldsymbol{w}_i^{\prime (t), k}\right)^\top \boldsymbol{w}_i^{\prime (t), k}$
    \STATE $\boldsymbol{r}_i^k = \left(\boldsymbol{u}_i^k - \boldsymbol{w}_i^{\prime(t), k}\right)^\top \boldsymbol{w}_i^{\prime \prime(t), k}$
    \STATE $\boldsymbol{y}_i^{k} = \boldsymbol{s}_i^k + \boldsymbol{r}_i^k$
    \STATE \textbf{Gather} $\boldsymbol{x}_j^{k}$, $\boldsymbol{y}_j^k$, $1 \leq j \leq N$ via $\mathbf{G}^{(t)}$ to Agent $i$
    \STATE $\boldsymbol{p}_i^k = \sum_{j=1}^N \boldsymbol{x}_j^k - \boldsymbol{m}$
    \STATE $\boldsymbol{q}_i^k = -1/\textcolor{blue}{\lambda_3}\sum_{j=1}^N \boldsymbol{y}_j^k $
    \STATE $\boldsymbol{z}_i^k = \boldsymbol{z}_i^{k-1} -\boldsymbol{p}_i^k / \boldsymbol{q}_i^k$ \COMMENT{Dual update $z_i$}
\ENDFOR
\ENDFOR
\STATE $\mathbf{W}^{(t)} = \left(\boldsymbol{w}_1^{M_1}, \ldots, \boldsymbol{w}_N^{M_1}\right)$
\\$/*$  {Unfold the iteration of $\mathbf{\Phi}_{\rm param}(\cdot)$} \hfill$*/$ \\
\FOR{$k=1,2, \ldots, M_2$}
\FOR{$i = 1,2, \ldots, N$ \textbf{parallel}}
\STATE {$/*$ Aggregate model parameters according to $\mathbf{W}^{(t)}$ \hfill $*/$}
\STATE $\widetilde{\boldsymbol{\theta}_i}^{(t), k} \Leftarrow \boldsymbol{b}_i^{(t)} + 4\textcolor{blue}{\lambda_2} \sum_{j \in \mathcal{N}(i)} \mathbf{W}_{ij}^{(t)}\boldsymbol{\theta}_j^{(t),k-1}$
\STATE $\mathbf{B}_i^{(t)} \Leftarrow \mathbf{A}_i^{(t)} + \left(2\textcolor{blue}{\lambda_1} + 4\textcolor{blue}{\lambda_2} \mathbf{D}_i^{(t)}\right) \mathbf{I}$ 
\STATE $\boldsymbol{\theta}_i^{(t),k} \Leftarrow \left(\mathbf{B}_i^{(t)}\right)^{-1}\widetilde{\boldsymbol{\theta}_i}^{(t), k}$ $\ \ $ 
\ENDFOR
\ENDFOR
\STATE \textbf{Output:} $\left( \boldsymbol{\theta}_1^{(t, M_2)}, \ldots , \boldsymbol{\theta}_N^{(t, M_2)}\right)$
\ENDFOR
\end{algorithmic}
\end{algorithm}

\vspace{-3mm}
\section{Algorithm Unrolling for Decentralized and Lifelong-Adaptive Multi-Agent Learning}
\label{sec:unroll}
While the iterative algorithm (summarized in~\eqref{iterative_solution}) enables agents to learn and collaborate, it encounters certain challenges that limit its real-world applicability. First, the Taylor expansion in $\mathbf{\Phi}_{\rm local}(\cdot)$ has potential approximation errors for non-linear functions, limiting the method's expressive capability for non-linear learning tasks. Second, the collaborative learning system in~\eqref{modified} has many hyper-parameters, demanding significant effort for searching and tuning hyper-parameters. Third, the algorithm requires numerous iterations. Single graph learning and message passing steps involve many iterations and alternating convex search further requires alternate iterations between these two steps. These numerous iterations cause significant communication overhead, making the algorithm less practical for real applications. 

To solve these issues, we propose \texttt{DeLAMA}, a powerful collaborative learning framework suitable for a wide range of cognitive tasks, combining both mathematical optimization and the learning ability of neural networks. Through algorithm unrolling~\cite{unrolling-plugplay}, \texttt{DeLAMA} transforms the iterations of our original optimization solutions into deep neural networks, which enhances the expressive power, provides automatic learning schemes for hyperparameter-tuning, and further streamlines the learning process by reducing the number of alternating convex search iterations. Generally,  \texttt{DeLAMA} embodies the learning-to-learn paradigm by creating a universally applicable collaborative learning strategy, regardless of the training data's distribution. The learnable hyper-parameters signify a meta-level knowledge acquisition, targeting the mastery of the model's foundational mechanisms. Once established, this adaptive strategy can be applied to new tasks without reconfiguration.

In this section, we first describe the structure of the unrolled network, then introduce the learning-to-learn approaches to train the hyper-parameters of this network.
\vspace{-4mm}
\subsection{Unrolling Network Design}
\vspace{-1mm}
Recall the compact solution shown in~\eqref{compact_solution}, which is a mechanism to learn model parameters given training data $\mathcal{D}_{1:N}^{(t)}$.  We aim to use the algorithm unrolling techniques to transform~\eqref{compact_solution} into
the following neural-network-based unrolled mapping with learnable hyperparameters $\gamma$:
\begin{equation}
    \label{eq:unroll_network_forward}
    \setlength\abovedisplayskip{1pt}
\setlength\belowdisplayskip{1pt}
    \begin{aligned}
        \mathbf{\Theta}^{(t)}, \mathbf{W}^{(t)}, \mathcal{M}_{1:N}^{(t)} = \mathcal{F}_{\gamma}\left(\mathcal{D}_{1:N}^{(t)}, \mathcal{M}_{1:N}^{(t-1)} \big | \mathbf{C}^{(t)}\right).
    \end{aligned}
\end{equation}
The inputs are the training data and the output is the model parameters $\mathbf{\Theta}^{(t)}$. This can be understood as using a neural network $\mathcal{F}_{\gamma}(\cdot)$ to learn the model parameter $\mathbf{\Theta}^{(t)}$, where $\gamma$ is the hyperparameter controlling the network $\mathcal{F}_{\gamma}(\cdot)$ to obtain model parameters $\mathbf{\Theta}^{(t)}$.

Specifically, we upgrade the iterative steps in~\eqref{iterative_solution} to the unrolled network layers associated with $\mathcal{F}_{\gamma}(\cdot)$:
\begin{subequations}
    \label{eq:original_opt}
\setlength\abovedisplayskip{1pt}
\setlength\belowdisplayskip{1pt}
    \begin{align}
        &\mathbf{\Theta}^{(t), 0}, \mathcal{M}_{1:N}^{(t)} = \mathbf{\Phi}_{\rm local}\left(\mathcal{D}_{1:N}^{(t)}, \mathcal{M}_{1:N}^{(t-1)}; \lambda_1, \beta \right), \label{eq: init} \\
        &\mathbf{W}^{(t)} = \mathbf{\Phi}_{\rm graph}\left(\mathbf{\Theta}^{(t), 0} \big | \mathbf{C}^{(t) }; \lambda_2, \lambda_3 \right), \label{eq: graph} \\
        &\mathbf{\Theta}^{(t)} = \mathbf{\Phi}_{\rm param}\left(\mathbf{\Theta}^{(t), 0}, \mathbf{W}^{(t)}, \mathcal{M}_{1:N}^{(t)} \big | \mathbf{C}^{(t)}; \lambda_1, \lambda_2\right), \label{eq: param}
    \end{align}
\end{subequations}
where $\gamma = \left(\beta, \{\lambda_i\}_{i=1}^3\right)$ is the learnable parameters of network $\mathcal{F}_{\gamma}(\cdot)$. Here $\{\lambda_i\}_{i=1}^3$ are learnable hyperparameters and $\beta$ is the model parameter of a neural network.

{Note that: 1) to obtain a framework with fewer computations and less communication overhead, we reduce the number of iterations of alternative convex search between $\mathbf{\Phi}_{\rm graph}(\cdot)$ and $\mathbf{\Phi}_{\rm param}(\cdot)$ to 1 since the nonlinear operation corresponding to $\mathbf{\Phi}_{\rm param}(\cdot)$ is capable to find better initialization of model parameters $\mathbf{\Theta}^{(t),0}$, which could simplify the process of finding collaboration relationships and model update, hence requires fewer iterations to converge; and 2) to autonomously tune the hyperparameters $\{\lambda_i\}^3_{i=1}$, we leverage the supervised training paradigm of algorithm unrolling by treating the hyperparameters as learnable parameters.}

{To increase the learning flexibility and expressive power of the framework, according to the calculate Hessian step of $\mathbf{\Phi}_{\rm local}(\cdot)$ shown in \textbf{Algorithm}~\ref{alg1}, we add an non-linear backbone network $\phi_\beta$ with learnable parameters $\beta$ serving as the feature extractor:}
\begin{subequations}
    \label{eq: local_learning_unroll}
    \setlength\abovedisplayskip{1pt}
\setlength\belowdisplayskip{1pt}
     \begin{align}
        \widehat{\mathcal{X}_i}^{(t)} &= \phi_{\boldsymbol{\beta}}\left(\mathcal{X}_i^{(t)}\right), \label{eq:nonlinear_trans}\\
        \mathbf{H}_i^{(t)} &= \nabla_{\boldsymbol{\alpha}_i^{(t)}}^2 \mathcal{L}\left(f_{\boldsymbol{\theta}_i^{(t)}}\left(\widehat{\mathcal{X}_i}^{(t)}\right), \mathcal{Y}_i^{(t)}\right). \label{eq: taylor_expand_unroll}
    \end{align}   
\end{subequations}

Compared to those iterations shown in \textbf{Algorithm}~\ref{alg1}, the nonlinear transform $\phi_{\boldsymbol{\beta}}(\cdot)$ used in ~\eqref{eq:nonlinear_trans} targets to embed the input data $\mathcal{X}_i^{(t)}$ as $\widehat{\mathcal{X}_i}^{(t)}$, and uses this transformed data as the input of the model $f_{\boldsymbol{\theta}_i^{(t)}}(\cdot)$ in (\ref{eq: taylor_expand_unroll}). This approach offers two advantages. First, by incorporating the nonlinear backbone of the local learning task, the shared model $\boldsymbol{\theta}_i^{(t)}$ becomes more streamlined with a reduced number of parameters. This directly leads to a substantial reduction in communication overhead. Second, a simplified model $\boldsymbol{\theta}_i^{(t)}$ could also simplify the collaboration mechanism within $\mathbf{\Phi}_{\rm graph}(\cdot)$ and $\mathbf{\Phi}_{\rm param}(\cdot)$, making the collaboration process more efficient.   

In summary, for ~\eqref{eq: init}, we add a backbone with parameter $\beta$, for ~\eqref{eq: init}~\eqref{eq: graph}~\eqref{eq: param} we change the hyperparameters $\{\lambda_{i}\}_{i=1}^3$ to learnable parameters. Each iteration of the overall framework shown in ~\eqref{iterative_solution} is unfolded into one specific layer of the deep neural network. The full inference network framework of \texttt{DeLAMA} is shown in \textbf{Figure}~\ref{fig:unrolled_structure} and \textbf{Algorithm}~\ref{alg4}, with learnable hyper-parameters highlighted in blue.

\vspace{-5mm}
\subsection{Training Details}
\vspace{-2mm}
Different from traditional training pipeline: training model parameters, then inference; here with algorithm unrolling, the training procedure of \texttt{DeLAMA} comprises two steps: the learning-to-learn step and the model learning step. This enables \texttt{DeLAMA} to capture an efficient collaboration strategy suitable for various tasks. First, in the learning-to-learn step, we train the hyperparameters $\gamma$ in the unrolled network $\mathcal{F}_\gamma(\cdot)$, optimizing a collaboration strategy. Second, in the model learning step, by executing a forward-pass of $\mathcal{F}_\gamma(\cdot)$ with a fixed hyperparameter, we obtain the output $\mathbf{\Theta}^{(t)}$, representing the agents' model parameters. {Through this, the advantage is enabling the collaboration mechanism to learn meta knowledge on how to collaborate under different task configurations.} After training, each agent can infer using the model with parameter $\mathbf{\Theta}^{(t)}$ as usual.

\vspace{-3mm}
\subsubsection{ Learning to learn the collaboration strategy} 
We leverage the learning-to-learn paradigm to the training process of network $\mathcal{F}_{\gamma}(\cdot)$. Specifically, to train an efficient parameter learning network $\mathcal{F}_{\gamma}(\cdot)$, we use a bunch of training tasks, each represented by $\mathcal{T}_{1:N}^{train}$ where $\mathcal{T}_i^{train} \sim \mathcal{P}\left(\mathcal{T}_i\right)$ for $1 \leq i \leq N$, to provide sufficient examples and supervisions to tune the parameters $\gamma$. Here $\mathcal{P}(\mathcal{T}_i)$ represents the task distribution of the $i$-th agent, which could be used to sample different training task sequences. After this training, the collaboration strategy can be applied to tasks $\mathcal{T}_i$ following the same distribution.

\noindent \textbf{Network supervision.} The network supervision for one task $\mathcal{T}_{1:N}^{train}$ is the expected system average task loss:
\begin{align*}
    &\ell \left(\mathcal{T}_{1:N}^{train} \big | \gamma \right) \\
    &= \frac{1}{Nt}\sum_{i=1}^N \sum_{k=1}^{t} \mathbb{E}_{\left(\mathcal{X}_i, \mathcal{Y}_i\right) \sim \mathcal{T}_{i}^{train}}\left[\mathcal{L}\left(f_{\boldsymbol{\theta}_i^{(k)}}(\mathcal{X}_i), \mathcal{Y}_i\right)\right],
\end{align*}
where $\mathcal{L}(\cdot)$ is the task-specific loss and $f_{\boldsymbol{\theta}_i^{(k)}}(\cdot)$ is the learned model with parameter $\boldsymbol{\theta}_i^{(k)}$ generated from the network $\mathcal{F}_{\gamma}(\cdot)$ at timestamp $k$. 

\noindent \textbf{Network training.} Training network $\mathcal{F}_{\gamma}(\cdot)$ corresponds to figuring out the best mechanism to collaborate and learn model parameters $\mathbf{\Theta}^{(t)}$ according to the task $\mathcal{T}_{1:N}^{train}$. Mathematically, this means minimizing the expected supervision of the training tasks $\mathcal{T}_{1:N}^{train}$ according to the rule:
\begin{align*}
    \gamma^* = \arg \min_{\gamma} \mathbb{E}_{\mathcal{T}_{1:N}^{train} \sim \mathcal{P}\left(\mathcal{T}_{1:N}\right)} \ell \left(\mathcal{T}_{1:N}^{train} \big | \gamma \right),
\end{align*}

\noindent \textbf{Network evaluation.} Once trained, the optimal $\mathcal{F}_{\gamma^*}(\cdot)$ implicitly carries prior collaboration strategies that can be applied to new task learning configurations following the distribution $\mathcal{P}(\mathcal{T}_{1:N})$. In the evaluation phase, the evaluation metric is the expected network supervision loss on the test tasks $\mathcal{T}_{1:N}^{test}$:
\begin{align*}
    L = \mathbb{E}_{\mathcal{T}_{1:N}^{test} \sim \mathcal{P}\left(\mathcal{T}_{1:N}\right)} \ell \left(\mathcal{T}_{1:N}^{test} \big | \gamma^* \right),
\end{align*}
 where $\gamma^*$ is the optimal parameter of $\mathcal{F}_{\gamma}(\cdot)$.

This training approach can be understood as utilizing plenty of learning tasks $\mathcal{T}_{1:N}^{train}$ as examples to supervise the network how to learn model parameters, and subsequently applying the acquired knowledge to new learning tasks.

\vspace{-3mm}

\subsubsection{Agent's model learning}
Learning the model parameters $\mathbf{\Theta}^{(t)}$ corresponds to the forward process of the unrolled network $\mathcal{F}_{\gamma^*}(\cdot)$ defined in~\eqref{eq:unroll_network_forward}, which takes the training data $\mathcal{D}_{1:N}^{(t)}$ as input and outputs $\mathbf{\Theta}^{(t)}$. During the forward pass of $\mathcal{F}_{\gamma^*}(\cdot)$, the finite learning memory $\mathcal{M}_{1:N}^{(t-1)}$ is recurrently updated, dynamically adapting to future observed tasks. Compared to the optimization procedure $\mathcal{F}(\cdot)$ defined in~\eqref{compact_solution}, the number of communication steps of $\Phi_{\rm graph}(\cdot)$ and $\Phi_{\rm param}(\cdot)$ is reduced, which further reduces the computational cost among multiple agents.
\vspace{-3mm}
\subsubsection{Agent's model inference}
Given the evaluation set $\widetilde{\mathcal{D}}_i^{(t)} = \left\{\widetilde{\mathcal{X}}_i^{(t)}, \widetilde{\mathcal{Y}}_i^{(t)}\right\}$ at timestamp $t$, agents calculate the predictions according to their learned model $\boldsymbol{\theta}_i^{(t)}$ and trained backbone $\phi_\beta(\cdot)$ by $f_{\boldsymbol{\theta}_i^{(t)}} \circ \phi_\beta\left(\widetilde{\mathcal{X}}_i^{(t)}\right)$. Here the model parameters are learned from the process $\mathcal{F}_{\gamma^*}(\cdot)$ given fixed parameter $\gamma$.

\vspace{-3mm}
\section{Theoretical Analysis}
\label{sec:theory}
In this section, we analyze the numerical properties of \texttt{DeLAMA} shown in \textbf{Algorithm}~\ref{alg4}. Specifically, we introduce the theoretical properties of the solutions $\mathbf{\Theta}^{(t)}$, $\mathbf{W}^{(t)}$ corresponding to $\mathbf{\Phi}_{\rm local}(\cdot)$, $\mathbf{\Phi}_{\rm param}(\cdot)$ and $\mathbf{\Phi}_{\rm graph}(\cdot)$, including the best point for Taylor expansion, and the convergence properties of the iterations in solving $\mathbf{\Theta}^{(t)}$ and $\mathbf{W}^{(t)}$.

\noindent \textbf{Best point for Taylor expansion.} To minimize the Taylor approximation error, we aim to find the exact expansion point $\boldsymbol{\alpha}_i^{(t)}$ such that the solution from Taylor expansion $\boldsymbol{\theta}_i^{(t)}$ is not far from the exact optimal solution point $\boldsymbol{\theta}_i^{(t)*}$, which corresponds to minimizing the approximation error $|| \boldsymbol{\theta}_i^{(t)} - \boldsymbol{\theta}_i^{(t)*} ||_2$. Due to the difficulty of analyzing the approximation error belonging to general loss functions, our target is optimizing an upper bound of the original error:
\begin{equation}
    \begin{aligned}
        \min_{\boldsymbol{\alpha}_i^{(t)}}  \mathbb{E}_{\boldsymbol{\theta}_i^{(t)*}}\left( \operatorname{\textbf{tr}}\left(\mathbf{H}\left(\boldsymbol{\alpha}_i^{(t)}\right)\right)^{-1} \left\|\boldsymbol{\alpha}_i^{(t)} - \boldsymbol{\theta}_i^{(t)*} \right\|_2^2\right),
    \end{aligned}
\end{equation}
where $\boldsymbol{\theta}_i^{(t)*} \sim \mathcal{N}(\boldsymbol{0}, \mathbf{\Sigma})$. In \textbf{Theorem}~\ref{best point taylor}, we prove that for supervised learning problems with standard supervised training loss (mean square error or cross-entropy) and the linear model $f_{\boldsymbol{\theta}_i^{(t)}}(\cdot)$, we can analytically decide the best point $\boldsymbol{\alpha}_i^{(t)}$ for Taylor expansion.
\begin{theorem}
    \label{best point taylor}
    Let $\mathcal{L}\left(f_{\boldsymbol{\theta}_i^{(t)}}\left(\mathbf{X}_i^{(t)}\right), \mathbf{Y}_i^{(t)}\right)$ be the standard supervised training loss function described by mean square error or cross-entropy of the linear model $f_{\boldsymbol{\theta}_i^{(t)}}(\cdot)$. Suppose $\mathbf{H}\left(\boldsymbol{\alpha}_i^{(t)}\right)=\nabla_{\boldsymbol{\alpha}_i^{(t)}}^2 \mathcal{L}\left(f_{\boldsymbol{\theta}_i^{(t)}}\left(\mathcal{X}_i^{(t)}\right), \mathcal{Y}_i^{(t)}\right)$ is Lipschitz continuous with non-zero constant and $\exists M$ such that $1 \leq k\left(\mathbf{H}\left(\boldsymbol{\alpha}_i^{(t)}\right)\right) \leq M$ where $k$ is the conditional number of $\mathbf{H}\left(\boldsymbol{\alpha}_i^{(t)}\right)$, then
    $$
    \arg \min_{\boldsymbol{\alpha}_i^{(t)}} \mathbb{E}_{\boldsymbol{\theta}_i^{(t)*}}\left( \operatorname{\textbf{tr}}\left(\mathbf{H}\left(\boldsymbol{\alpha}_i^{(t)}\right)\right)^{-1} \left\|\boldsymbol{\alpha}_i^{(t)} - \boldsymbol{\theta}_i^{(t)*} \right\|_2^2\right) = \boldsymbol{0}.
    $$
\end{theorem}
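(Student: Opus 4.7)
The plan is to factor the objective into two strictly positive scalar functions of $\boldsymbol{\alpha}_i^{(t)}$ and show each is separately minimized at the origin; the product is then forced to be uniquely minimized at the origin by a monotonicity argument. Write $F(\boldsymbol{\alpha}) := \operatorname{\textbf{tr}}(\mathbf{H}(\boldsymbol{\alpha}))^{-1}$ and $G(\boldsymbol{\alpha}) := \mathbb{E}_{\boldsymbol{\theta}^*}\|\boldsymbol{\alpha} - \boldsymbol{\theta}^*\|_2^2$. The assumption $1 \leq k(\mathbf{H}(\boldsymbol{\alpha})) \leq M$ forces $\mathbf{H}$ positive definite with positive trace, so $F>0$; $G>0$ trivially. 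Once $F$ and $G$ are each shown to attain their minimum at $\boldsymbol{\alpha}=\boldsymbol{0}$, the chain
\begin{equation*}
F(\boldsymbol{\alpha}) G(\boldsymbol{\alpha}) \geq F(\boldsymbol{0}) G(\boldsymbol{\alpha}) \geq F(\boldsymbol{0}) G(\boldsymbol{0})
\end{equation*}
holds pointwise, with strict inequality for $\boldsymbol{\alpha}\neq\boldsymbol{0}$ coming from $G$, which pins down $\boldsymbol{0}$ as the unique argmin.

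The factor $G$ is immediate: expanding the square and using $\mathbb{E}[\boldsymbol{\theta}^*]=\boldsymbol{0}$ gives $G(\boldsymbol{\alpha}) = \|\boldsymbol{\alpha}\|_2^2 + \operatorname{\textbf{tr}}(\mathbf{\Sigma})$, strictly minimized at the origin. For $F$, I would split into the two admissible losses. For mean squared error with $f_{\boldsymbol{\theta}}(\mathbf{X})=\mathbf{X}\boldsymbol{\theta}$, direct computation yields $\mathbf{H}(\boldsymbol{\alpha})=\mathbf{X}^\top\mathbf{X}$, independent of $\boldsymbol{\alpha}$, so $F$ is constant and the product chain still applies. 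For cross-entropy with softmax over $C$ classes and logits $\mathbf{z}_n=\mathbf{x}_n^\top\boldsymbol{\alpha}$, differentiating twice yields the Kronecker decomposition $\mathbf{H}(\boldsymbol{\alpha}) = \sum_n \mathbf{x}_n\mathbf{x}_n^\top \otimes \bigl(\operatorname{\textbf{diag}}(\mathbf{p}_n) - \mathbf{p}_n\mathbf{p}_n^\top\bigr)$ with $\mathbf{p}_n=\operatorname{softmax}(\mathbf{z}_n)$, hence
\begin{equation*}
\operatorname{\textbf{tr}}(\mathbf{H}(\boldsymbol{\alpha})) = \sum_n \|\mathbf{x}_n\|_2^2\bigl(1 - \|\mathbf{p}_n\|_2^2\bigr).
\end{equation*}
By Cauchy--Schwarz on the simplex, $\|\mathbf{p}_n\|_2^2 \geq 1/C$ with equality iff $\mathbf{p}_n$ is uniform, and every $\mathbf{p}_n$ becomes uniform simultaneously precisely when all logits vanish, i.e., at $\boldsymbol{\alpha}=\boldsymbol{0}$. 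Thus $\operatorname{\textbf{tr}}(\mathbf{H})$ is maximized and $F$ minimized at the origin; the binary logistic case is identical with $\mathbf{p}_n=(\sigma,1-\sigma)$.

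The main obstacle is the cross-entropy case: one has to carefully derive the Kronecker form of the softmax Hessian and read off that each summand $\|\mathbf{x}_n\|_2^2(1-\|\mathbf{p}_n\|_2^2)$ is a strictly concave function of the logits maximized at the uniform distribution. The Lipschitz-with-nonzero-constant and bounded-condition-number hypotheses play a supporting role: they guarantee that $\mathbf{H}(\boldsymbol{\alpha})$ stays positive definite and that $F$ is continuous and strictly positive everywhere, which is what validates the pointwise product inequality above. Combining the $F$ and $G$ minimization conclusions through that inequality then delivers $\arg\min_{\boldsymbol{\alpha}} F(\boldsymbol{\alpha})G(\boldsymbol{\alpha}) = \boldsymbol{0}$.
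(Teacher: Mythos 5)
Your proposal is correct and follows essentially the same route as the paper: both factor the objective into $\operatorname{\textbf{tr}}(\mathbf{H}(\boldsymbol{\alpha}))^{-1}$ and $\mathbb{E}\|\boldsymbol{\alpha}-\boldsymbol{\theta}^*\|_2^2 = \|\boldsymbol{\alpha}\|_2^2 + \operatorname{\textbf{tr}}(\mathbf{\Sigma})$, compute the softmax Hessian in Kronecker form to get $\operatorname{\textbf{tr}}(\mathbf{H}) = \sum_n\|\mathbf{x}_n\|_2^2(1-\|\mathbf{p}_n\|_2^2) \leq (1-1/C)\sum_n\|\mathbf{x}_n\|_2^2$, and observe both bounds are attained simultaneously at $\boldsymbol{\alpha}=\boldsymbol{0}$. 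Your version even fixes a small slip in the paper (which writes $\operatorname{\textbf{tr}}(\mathbf{\Sigma}^{-1})$ where $\operatorname{\textbf{tr}}(\mathbf{\Sigma})$ is meant) and correctly attributes uniqueness of the argmin to the $\|\boldsymbol{\alpha}\|_2^2$ term rather than to the uniformity condition alone.
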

\vspace{-3mm}
\textbf{Theorem}~\ref{best point taylor} shows that the best Taylor expansion point $\boldsymbol{\alpha}_i^{(t)}$ can be directly predetermined as $\boldsymbol{0}$, simplifying the procedure to find the ideal expansion point; see \textbf{Appendix}~\ref{proof-to-best-point-taylor} for detailed analysis and proof.

\noindent \textbf{Convergence of solving $\mathbf{W}^{(t)}$.} To reduce the communication cost and increase the convergence speed of collaborative relational inference $\mathbf{\Phi}_{\rm graph}(\cdot)$ shown in \textbf{Algorithm}~\ref{alg3}, we leverage a continuously differentiable function $\boldsymbol{h}_b(x) = (\sqrt{x^2+b} + x)/2$ to approximate $\operatorname{ReLU}(x)$. In \textbf{Theorem}~\ref{graph learning approx}, we analyze the approximation error between the standard optimization solution and the approximated solution. 
\begin{theorem}
\label{graph learning approx}
    Let $\mathbf{W}^{(t)*}$ be the approximated solution of \textbf{Problem}~\ref{graph_learning_optimization_problem} containing $N$ agents with dual variable $\boldsymbol{z}^*$ satisfying \eqref{equation-z}. Suppose $\boldsymbol{h}_b(\cdot),b>0$ is an approximation function of $\operatorname{ReLU}(\cdot)$. Let the original solution be $\mathbf{W}^{(t)}$ with dual variable $\boldsymbol{z}$. Then
    \begin{align*}
        \left\|\mathbf{W}^{(t)} - \mathbf{W}^{(t)*}\right\|_{\mathbf{F}} \leq \frac{N \sqrt{b}}{2} \sqrt{1 + \frac{1}{C^2}},
    \end{align*}
    where $C$ is a non-negative lower bound satisfies $\boldsymbol{h}_b^\prime(x)\geq C$ for $x \in [\boldsymbol{z}, \boldsymbol{z}^*]$.
\end{theorem}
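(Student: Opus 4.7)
The plan is to decompose the per-agent error $\boldsymbol{w}_i^{(t)} - \boldsymbol{w}_i^{(t)*}$ into a pointwise approximation piece and a dual-perturbation piece, bound each separately, and then combine them using a sign argument (rather than the triangle inequality, which would give a looser constant). Writing $u_i(z) := -(\lambda_2 \boldsymbol{d}_i^{(t)} + z \boldsymbol{1})/(2\lambda_3)$, the KKT conditions give $\boldsymbol{w}_i^{(t)} = \operatorname{ReLU}(u_i(z))$ and $\boldsymbol{w}_i^{(t)*} = h_b(u_i(z^*))$, so I would split
\begin{align*}
\boldsymbol{w}_i^{(t)} - \boldsymbol{w}_i^{(t)*} = A_i + B_i, \quad A_i := \operatorname{ReLU}(u_i(z)) - h_b(u_i(z)), \quad B_i := h_b(u_i(z)) - h_b(u_i(z^*)).
\end{align*}

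For $A_i$, I would use the algebraic identity $\sqrt{x^2+b} - |x| = b/(\sqrt{x^2+b}+|x|)$ to show that $0 \leq h_b(x) - \operatorname{ReLU}(x) \leq \sqrt{b}/2$ for every $x \in \mathbb{R}$, which gives $\|A_i\|_2^2 \leq (N-1)b/4$ componentwise. To control $B_i$ I first need a bound on $|z - z^*|$. Subtracting the ReLU dual equation \eqref{equation-z} from its $h_b$ analog that defines $z^*$, and applying the mean value theorem componentwise to $h_b \circ u_{i,j}$, I would obtain
\begin{align*}
\frac{z^* - z}{2\lambda_3}\sum_{i,j} h_b'(\xi_{i,j}) = \sum_{i,j}\bigl[h_b(u_{i,j}(z)) - \operatorname{ReLU}(u_{i,j}(z))\bigr].
\end{align*}
The right-hand side is nonnegative and bounded by $N(N-1)\sqrt{b}/2$ using the $A_i$ estimate, so the hypothesis $h_b'(\xi_{i,j}) \geq C > 0$ yields both the monotonicity fact $z \leq z^*$ and the quantitative bound $|z - z^*| \leq \lambda_3\sqrt{b}/C$. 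Since $h_b'\in [0,1]$, this propagates to $|B_{i,j}| \leq |u_{i,j}(z) - u_{i,j}(z^*)| \leq \sqrt{b}/(2C)$, hence $\|B_i\|_2^2 \leq (N-1)b/(4C^2)$.

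The final step is the key cross-term observation. Because $h_b \geq \operatorname{ReLU}$ pointwise we have $A_{i,j} \leq 0$; because $u_i$ is strictly decreasing in $z$ and the previous step established $z \leq z^*$, we have $B_{i,j} \geq 0$. Therefore $\langle A_i, B_i\rangle \leq 0$ and
\begin{align*}
\|\boldsymbol{w}_i^{(t)} - \boldsymbol{w}_i^{(t)*}\|_2^2 = \|A_i\|_2^2 + 2\langle A_i, B_i\rangle + \|B_i\|_2^2 \leq \frac{(N-1)b}{4}\Bigl(1 + \tfrac{1}{C^2}\Bigr).
\end{align*}
Summing over the $N$ agents and using $\sqrt{N(N-1)} \leq N$ yields the desired Frobenius bound.

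I expect the main obstacle to be precisely this cross-term control. A direct triangle inequality would produce $(1 + 1/C)^2$ inside the square root rather than the sharper $1 + 1/C^2$; obtaining the latter requires the sign argument, which itself rests on establishing $z \leq z^*$ as a quantitative consequence of both the global inequality $h_b \geq \operatorname{ReLU}$ and the monotonicity of the dual equation's left-hand side in $z$. The rest of the proof is the careful propagation of the pointwise approximation constant $\sqrt{b}/2$ through the MVT step and the Lipschitz bound $|h_b'|\leq 1$.
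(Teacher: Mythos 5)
Your proposal is correct and follows essentially the same route as the paper's proof in Appendix~\ref{proof-to-graph-learning-approx}: the same decomposition of $\boldsymbol{w}-\boldsymbol{w}^*$ into a pointwise $\operatorname{ReLU}$-versus-$\boldsymbol{h}_b$ error plus a dual-perturbation term, the same elementary bound $|\boldsymbol{h}_b(x)-\operatorname{ReLU}(x)|\leq \sqrt{b}/2$ (the paper's Lemma~\ref{simple_inequality}), and the same mean-value-theorem estimate $|\boldsymbol{z}-\boldsymbol{z}^*|\leq \sqrt{b}/(2C|\beta|)$ obtained from the two dual feasibility equations. The one substantive difference is in your favor: the paper passes from $\|A+B\|_2^2$ to $\|A\|_2^2+\|B\|_2^2$ without comment, an inequality that is false for general vectors, whereas you justify it by showing the cross term is non-positive --- $A_{i,j}\leq 0$ because $\boldsymbol{h}_b\geq\operatorname{ReLU}$ pointwise, and $B_{i,j}\geq 0$ because the dual comparison forces $\boldsymbol{z}\leq\boldsymbol{z}^*$ while $u_i$ is decreasing in $z$ and $\boldsymbol{h}_b$ is increasing. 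That sign argument is exactly what is needed to make the paper's displayed chain of inequalities rigorous, so your write-up is a strict refinement of the published proof; the per-agent bookkeeping with $N-1$ entries per row versus the paper's $N^2$ vectorization is immaterial since both land on the stated $\tfrac{N\sqrt{b}}{2}\sqrt{1+1/C^2}$ bound.
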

\vspace{-2mm}
\textbf{Theorem}~\ref{graph learning approx} shows that the solution of \textbf{Algorithm}~\ref{alg3} can approximate the optimum as $\boldsymbol{b}$ goes to zero, demonstrating the effectiveness of our approach; see \textbf{Appendix}~\ref{proof-to-graph-learning-approx} for the detailed proof.

\noindent \textbf{Convergence properties of solving $\mathbf{\Theta}^{(t)}$.} Recall that in \textbf{Problem}~\ref{colla_param_learning}, optimizing the objective function corresponds to solving a linear system defined by the formula~\ref{foc}. We propose an iterative method based on Jacobi iterations~\cite{matrix-computations} to solve the large-scale sparse linear equations. We have the following theoretical guarantee about the proposed method.
\begin{theorem}
\label{convergence}
    Let $\mathbf{\Theta}^{(t),k} = \left(\boldsymbol{\theta}_1^{(t),k}, \ldots, \boldsymbol{\theta}_N^{(t),k}\right)$ be the model parameters in the $k$-th iteration of \textbf{Algorithm}~\ref{alg3}, $\mathbf{\Theta}^{(t)*} = \left(\boldsymbol{\theta}_1^{*(t)}, \ldots, \boldsymbol{\theta}_N^{*(t)}\right)$ is the target solution satisfying \eqref{foc}. For any $\boldsymbol{\epsilon}$, if $\left\| \mathbf{\Theta}^{(t)} - \mathbf{\Theta}^{(t)*}\right\|_{\mathbf{F}}^2 \leq \boldsymbol{\epsilon}$, then $k \sim \mathcal{O}\left[\log \left(\frac{1}{\boldsymbol{\epsilon}}\right)\right]$. 
\end{theorem}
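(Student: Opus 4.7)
The plan is to cast Algorithm~\ref{alg2} as a block Jacobi iteration applied to the linear system whose first-order conditions are captured by~\eqref{foc}, and then invoke geometric convergence of the Jacobi scheme. Concretely, stacking the per-agent equations into a single block system $\mathbf{M}^{(t)} \operatorname{vec}(\mathbf{\Theta}^{(t)}) = \operatorname{vec}(\mathbf{b}^{(t)})$, the diagonal blocks are exactly $\mathbf{B}_i^{(t)} = \mathbf{A}_i^{(t)} + (2\lambda_1 + 4\lambda_2 \mathbf{D}_i^{(t)})\mathbf{I}$, while the off-diagonal $(i,j)$ blocks are $-4\lambda_2 \mathbf{W}_{ij}^{(t)} \mathbf{I}$. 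With $\mathbf{D} = \operatorname{blkdiag}(\{\mathbf{B}_i^{(t)}\})$ and $\mathbf{R} = \mathbf{D} - \mathbf{M}^{(t)}$, the update rule~\eqref{eq:message_passing} is precisely $\mathbf{\Theta}^{(t),k} = \mathbf{D}^{-1}(\mathbf{R}\,\mathbf{\Theta}^{(t),k-1} + \mathbf{b}^{(t)})$, so iterates satisfy $\mathbf{\Theta}^{(t),k} - \mathbf{\Theta}^{(t)*} = \mathbf{T}(\mathbf{\Theta}^{(t),k-1} - \mathbf{\Theta}^{(t)*})$ with iteration matrix $\mathbf{T} = \mathbf{D}^{-1} \mathbf{R}$.

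Next, I would show $\rho(\mathbf{T}) < 1$ by establishing that $\mathbf{M}^{(t)}$ is block strictly diagonally dominant. For each row block $i$, the sum of off-diagonal block norms (in spectral norm) is $\sum_{j \neq i} 4\lambda_2 \mathbf{W}_{ij}^{(t)} = 4\lambda_2 \mathbf{D}_i^{(t)}$, while the diagonal block $\mathbf{B}_i^{(t)}$ has smallest eigenvalue at least $4\lambda_2 \mathbf{D}_i^{(t)} + 2\lambda_1 + \lambda_{\min}(\mathbf{A}_i^{(t)})$. Because $\mathbf{A}_i^{(t)}$ is a running average of Hessians of MSE/cross-entropy losses (positive semidefinite by Theorem~\ref{best point taylor}'s setting) and $\lambda_1 > 0$, this lower bound strictly exceeds $4\lambda_2 \mathbf{D}_i^{(t)}$. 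The standard block Gershgorin/diagonal-dominance argument then yields a strict contraction factor $\rho := \rho(\mathbf{T}) \leq \max_i \tfrac{4\lambda_2 \mathbf{D}_i^{(t)}}{4\lambda_2 \mathbf{D}_i^{(t)} + 2\lambda_1 + \lambda_{\min}(\mathbf{A}_i^{(t)})} < 1$, and simultaneously guarantees the invertibility of each $\mathbf{B}_i^{(t)}$ assumed in~\eqref{eq:message_passing}.

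From the contraction we get geometric decay $\|\mathbf{\Theta}^{(t),k} - \mathbf{\Theta}^{(t)*}\|_{\mathrm{F}} \leq \rho^k \|\mathbf{\Theta}^{(t),0} - \mathbf{\Theta}^{(t)*}\|_{\mathrm{F}}$, so squaring and demanding this be at most $\boldsymbol{\epsilon}$ gives $k \geq \tfrac{1}{2\log(1/\rho)}\bigl(\log\|\mathbf{\Theta}^{(t),0} - \mathbf{\Theta}^{(t)*}\|_{\mathrm{F}}^2 + \log(1/\boldsymbol{\epsilon})\bigr)$, i.e.\ $k \sim \mathcal{O}(\log(1/\boldsymbol{\epsilon}))$ as claimed. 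The constant hidden in the $\mathcal{O}$ depends only on $\rho$ and on the initial distance, both of which are bounded by problem-dependent quantities independent of $\boldsymbol{\epsilon}$.

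The main obstacle is the diagonal-dominance step: one must argue it in the \emph{block} sense rather than entrywise, since $\mathbf{A}_i^{(t)}$ couples coordinates of $\boldsymbol{\theta}_i^{(t)}$. I would handle this by passing to the symmetric positive-definite square roots of $\mathbf{B}_i^{(t)}$ and bounding $\|\mathbf{B}_i^{(t)-1/2} \mathbf{R}_{ij} \mathbf{B}_j^{(t)-1/2}\|$ uniformly, which reduces the block argument to a scalar Gershgorin inequality driven by $\mathbf{W}^{(t)}$ and the degree sequence. A secondary subtlety is verifying positive semidefiniteness of $\mathbf{A}_i^{(t)}$ for the standard loss families: this follows because MSE and cross-entropy with linear predictors have PSD Hessians at every expansion point, and PSD-ness is preserved by the convex combination in~\eqref{updation-intermediate}, so $\lambda_{\min}(\mathbf{A}_i^{(t)}) \geq 0$ and the strict dominance is inherited from $\lambda_1 > 0$.
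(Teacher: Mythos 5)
Your proposal is correct and follows essentially the same route as the paper's proof: both recast the update as a block Jacobi iteration with iteration blocks $4\lambda_2 \mathbf{W}_{ij}^{(t)}\left(\mathbf{B}_i^{(t)}\right)^{-1}$, reduce the block contraction estimate to a scalar matrix of block operator norms, apply a Gershgorin argument with $\lambda_{\min}\left(\mathbf{B}_i^{(t)}\right) \geq \lambda_{\min}\left(\mathbf{A}_i^{(t)}\right) + 2\lambda_1 + 4\lambda_2 \mathbf{D}_i^{(t)}$ to obtain the contraction factor $\max_i 4\lambda_2 \mathbf{D}_i^{(t)}\big/\left(2\lambda_1 + 4\lambda_2 \mathbf{D}_i^{(t)} + \lambda_{\min}\left(\mathbf{A}_i^{(t)}\right)\right) < 1$, and deduce $k \sim \mathcal{O}\left(\log(1/\boldsymbol{\epsilon})\right)$ from the resulting geometric decay. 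The only ingredient of the paper's argument you omit is the preliminary lemma that the learned $\mathbf{W}^{(t)}$ is symmetric, which is what reconciles the update rule~\eqref{eq:message_passing} (which uses only $\mathbf{W}_{ij}^{(t)}$) with the first-order condition~\eqref{foc} (which involves $\mathbf{W}_{ij}^{(t)} + \mathbf{W}_{ji}^{(t)}$); since the main text already assumes this symmetry when deriving~\eqref{eq:message_passing}, this is a minor omission rather than a gap.
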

\vspace{-1mm}
\textbf{Theorem}~\ref{convergence} shows that model parameters obtained through \textbf{Algorithm}~\ref{alg3} can converge to the optimum within a few iterations, demonstrating the computational efficiency of our approach; see \textbf{Appendix}~\ref{proof-to-convergence} for detailed proof.

In summary, we prove that the solution provided by \texttt{DeLAMA} could converge to the optimum of \textbf{Problem}~\eqref{colla_relation_learning} and \textbf{Problem}~\eqref{colla_param_learning}.

\vspace{-3mm}
\section{Experiments}
\vspace{-2mm}
\label{sec:experiments}
We evaluate \texttt{DeLAMA} from four aspects, in \textbf{Section}~\ref{regression}, we first apply our method to a regression problem; in \textbf{Section}~\ref{image}, we delve into image classification tasks with more complex data forms; in \textbf{Section}~\ref{map}, we apply \texttt{DeLAMA} to multi-robot mapping tasks, representing real scenarios' performance; in \textbf{Section}~\ref{human}, we compare the performance of \texttt{DeLAMA} with humans. In each aspect, performance is evaluated both quantitatively and qualitatively. We also conduct ablation studies on \texttt{DeLAMA} to highlight the significance of each module in the collaborative learning mechanism.
\vspace{-3mm}
\subsection{Regression Problem}
\label{regression}
\vspace{-1mm}
\subsubsection{Experimental Setup}
\noindent \textbf{Task.} Following the cognitive scientific principles detailed in~\cite{almaatouq2020adaptive}, we design a system with 6 agents in this collaborative regression setting, where the collaboration relationship is easy to verify.
We design several non-linear functions to generate data points and each agent accesses a small dataset at each time drawn from one of the non-linear functions. One function can be accessed by multiple agents. Each agent is required to regress the whole corresponding function. Since the data points of one single agent are not informative enough, correctly collaborating with other agents corresponding to the same function is essential. Moreover, to increase the difficulty of the collaborative relationship inferring, agents are categorized into three types based on data quality: i) type one, agents possess numerous samples with minimal noise; ii) type two, agents have a limited sampling range with medium noise; and iii) type three,
agents access very few samples with significant noise. This agent categorization requires the agent to consider both the function correspondence and data quality during collaboration.  

\noindent \textbf{Dataset.} The total function number ranges randomly from 1 to 3 and each function is ensured to correspond to at least two agents. The data points are obtained by sampling points from a function $f(x)$ by $\hat{y} = f(x) + \epsilon$, with a Gaussian noise $\epsilon \sim \mathcal{N}(0, \sigma)$. The functions are random linear combinations of quadratic functions and sinusoidal functions, with domain [-5, 5]. At each time, different types of agents' training sets have different sample ranges: type one agents' training set is generated from the full domain [-5, 5], while type two and type three agents' training sets are generated from a random limited interval with length 1.  We generated 600 task learning sequences with different ground truth functions, with 2/3 used for training, and 1/3 used for testing.

\noindent \textbf{Evaluation metric.}  We adopt two metrics for evaluation: i) the average mean square error between the agent regression points and the ground-truth function's points at each time stamp $t$; ii) the graph mean square error, which is defined as the normalized distance between the learned collaboration graph and the oracle collaboration graph at time $t$: 
\begin{equation}
    \label{gmse}
    \begin{aligned}
        \text{GMSE} = \frac{\left\|\mathbf{W}^{(t)} - \mathbf{W}_{\rm oracle} \right\|_{\mathbf{F}}}{\left\|\mathbf{W}_{\rm oracle} \right\|_{\mathbf{F}}}.
    \end{aligned}
\end{equation}
The oracle graph adjacency matrix $\mathbf{W}_{\rm oracle}$ is defined as $\mathbf{W}_{\rm oracle} = \boldsymbol{m} \frac{\widetilde{\mathbf{W}}}{\|\widetilde{\mathbf{W}}\|_1}$, with $\widetilde{\mathbf{W}}_{ij} = 1$ if agent $i$ and $j$ share a same function and $\widetilde{\mathbf{W}}_{ij} = 0$ otherwise. Note that the oracle graph represents a relatively optimal solution for collaboration since it acquires the agent-function correspondence but may not be the absolute optimal solution since the agent data quality is not considered.

\begin{table}
    \centering
    \caption{\small Performance comparison for the regression problem. We compare our method with representative federated learning approaches and decentralized optimization methods. We also perform an ablation study of our approach, including \texttt{DeLAMA-WC} and \texttt{DeLAMA-WM}. The former stands for \texttt{DeLAMA} without collaboration among agents while the latter stands for \texttt{DeLAMA} without lifelong-adaptive learning capabilities.}
    \vspace{-3mm}
    \begin{tabular}{c|c|cc}
         \toprule
         Setting &Method & $\text{MSE}_{t=1}$  & $\text{MSE}_{t=10}$\\
         \midrule
         \multirow{8}{*}{\makecell[c]{\textbf{Federated}\\ \textbf{learning}}} &FedAvg\cite{fedavg} &12.9147 &10.3229  \\
         &FedProx\cite{fedprox} &11.8117 &10.5989  \\
         &SCAFFOLD\cite{scaffold} &13.6683 &12.8070  \\
         &FedAvg-FT~\cite{fedavg} &9.8845 &6.7536  \\
         &FedProx-FT~\cite{fedprox} &9.4131 &6.6035\\
         &Ditto\cite{ditto} &16.3461 &7.9752  \\
         &FedRep\cite{fedrep} &40.0225 &11.4844  \\
         &pFedGraph\cite{pfedgraph} &8.6481 &5.9771  \\
         \midrule 
         \multirow{3}{*}{\makecell[c]{\textbf{Decentralized}\\ \textbf{optimization}}}
         &DSGD~\cite{DSGD} &23.8660 &12.6872\\
         &DSGT~\cite{DSGT} &17.7407 &17.7413\\
         &DFedAvgM~\cite{DFedAvgM} &13.2054 &10.4894\\
         \midrule
         \multirow{3}{*}{\makecell[c]{\textbf{Decentralized and}\\ \textbf{lifelong-adaptive}\\ \textbf{collaborative learning}}} 
         &\texttt{DeLAMA-WC} &5.0794 &0.0785 \\
         &\texttt{DeLAMA-WM}&\textbf{2.3377} &2.6150 \\
         &\texttt{DeLAMA} &\textbf{2.3377}  &\textbf{0.0719}\\
         \bottomrule
    \end{tabular}
    \vspace{-3mm}
    \label{tab:regression-decentralization}
\end{table}

\begin{table*}
    \centering
    \caption{\small The evaluation of the collaboration performance compared with other centralized collaboration mechanisms. \texttt{DeLAMA} effectively learns collaboration relationships under different communication structure constraints.}
    \vspace{-3mm}
    \renewcommand\arraystretch{0.8}
    \begin{tabular}{c|c|c|c|cccccc}
    \toprule
         $\mathcal{G}^{(t)}$ &$c(\mathcal{G}^{(t)})$ &Method &Decentralize &$\text{GMSE}_{t=1}$ &$\text{GMSE}_{t=5}$ &$\text{GMSE}_{t=10}$ & $\text{MSE}_{t=1}$ & $\text{MSE}_{t=5}$ & $\text{MSE}_{t=10}$ \\
         \midrule
         \multirow{7}{*}{\textbf{FC}} &\multirow{6}{*}{1} &NS~\cite{neighborhood-selection} &\usym{2713} &1.4198 &1.4900 &1.4396 & 56.1900 &0.8384 &0.3919 \\
         & &GLasso~\cite{graphlasso} &\usym{2717} &0.9687 &0.9449 &0.9540 &3.9664 &0.1923 &0.0817\\
         & &MTRL~\cite{liu2017distributed} &\usym{2717} &72.26 &3.955 &60.3069 &2.6570 &0.3009 &0.1837\\
         & &GL-LogDet~\cite{graph-learning-smooth} &\usym{2717} &1.0391 &0.8916 &0.8331 &3.2395 &0.1416 &0.0726\\
         & &L2G-PDS~\cite{L2G} &\usym{2717} &3.6415 &1.2675 &0.6638 &4.4638 &0.1633 &0.0744\\
         & &Oracle &- &- &- &- &1.1611 &0.0847 &0.0695\\         
         & &\texttt{DeLAMA} &\usym{2713} &\textbf{0.2992} &\textbf{0.0930} &\textbf{0.0721} &\textbf{2.3377} &\textbf{0.1104} &\textbf{0.0719}\\
         \midrule
         
         \multirow{2}{*}{\textbf{ER}} &0.3 &\multirow{4}{*}{\texttt{DeLAMA}} &\multirow{4}{*}{\usym{2713}} &1.8291 &1.7683 &1.7712 &3.0061 &0.2176 &0.0944\\
         &0.5 & & &1.1627 &0.9794 &0.9736 &2.8659 &0.1654 &0.0850\\
         \multirow{2}{*}{\textbf{BA}} &0.3 & & &1.8244 &1.7697 &1.7728 &2.8893 &0.2391 &0.0905\\
         &0.5 & & &0.9755 &0.8475 &0.8371 &2.7184 &0.1781 &0.0787\\
    \bottomrule
    \end{tabular}
    \vspace{-3mm}
    \label{tab:regression_graph}
\end{table*}
\begin{figure*}

    \centering
    \captionsetup[subfloat]{labelsep=none,format=plain,labelformat=empty}
    \subfloat[\texttt{GL-LogDet}]{\includegraphics[width=0.14\linewidth]{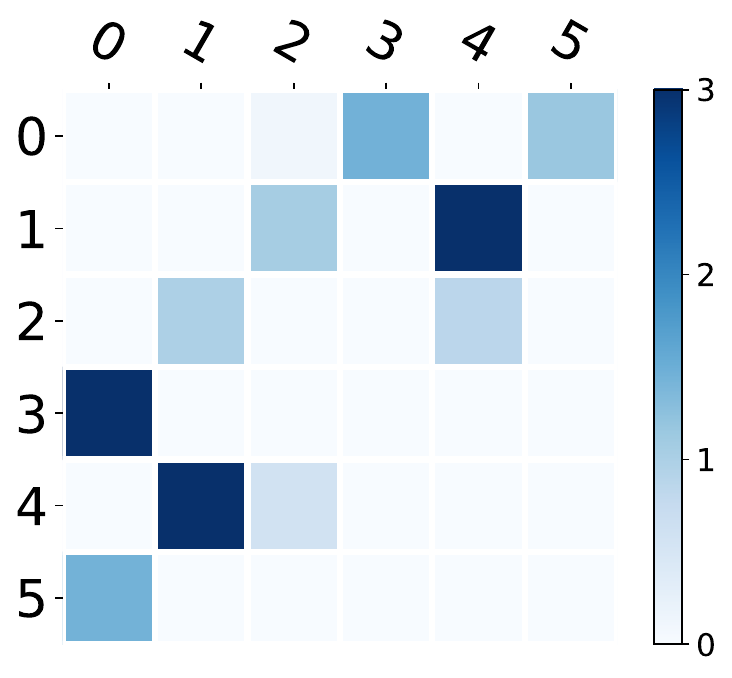}}
    \hfill
    \subfloat[\texttt{GLasso}]{\includegraphics[width=0.14\linewidth]{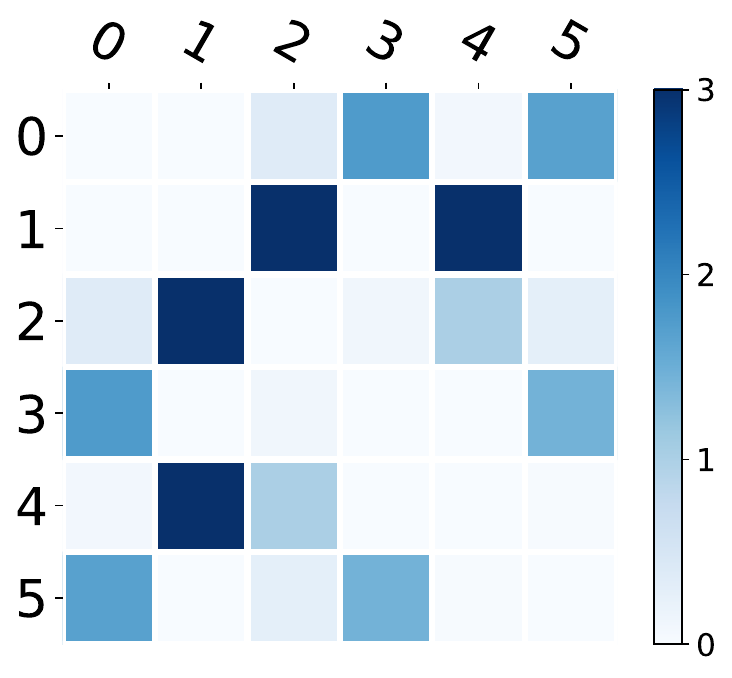}}
    \hfill
    \subfloat[\texttt{MTRL}]{\includegraphics[width=0.14\linewidth]{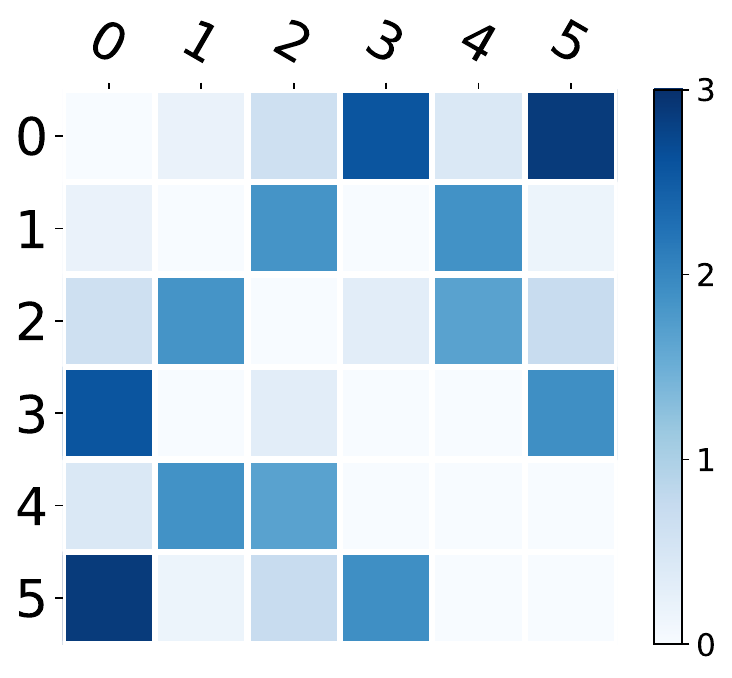}}  
    \hfill
    \subfloat[\texttt{L2G-PDS}]{\includegraphics[width=0.14\linewidth]{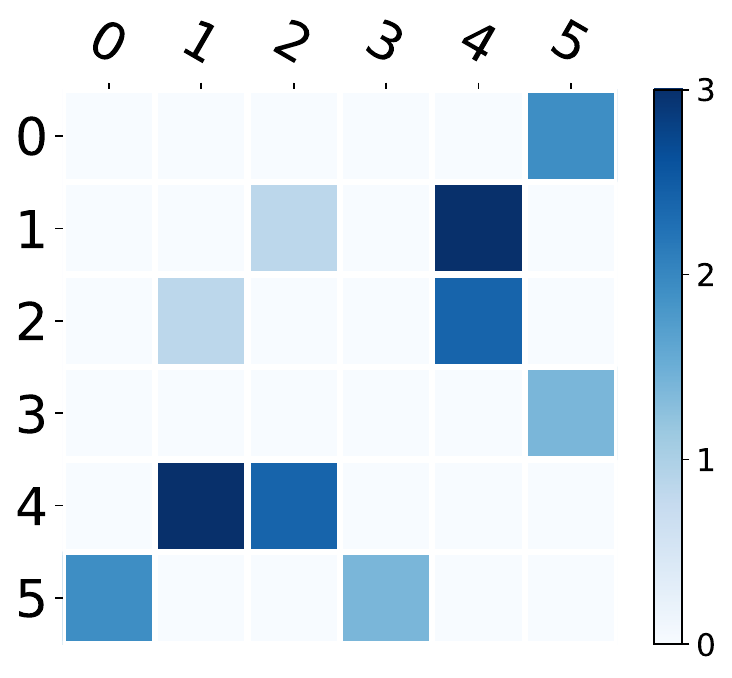}}
    \hfill
    \subfloat[\texttt{NS}]{\includegraphics[width=0.14\linewidth]{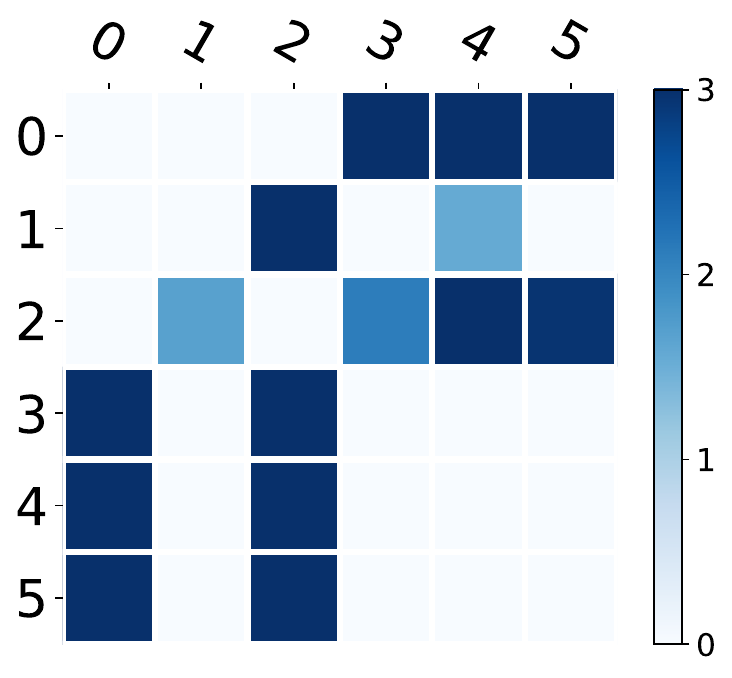}}
    \hfill
    \subfloat[\texttt{DeLAMA} (\textbf{Ours})]{\includegraphics[width=0.14\linewidth]{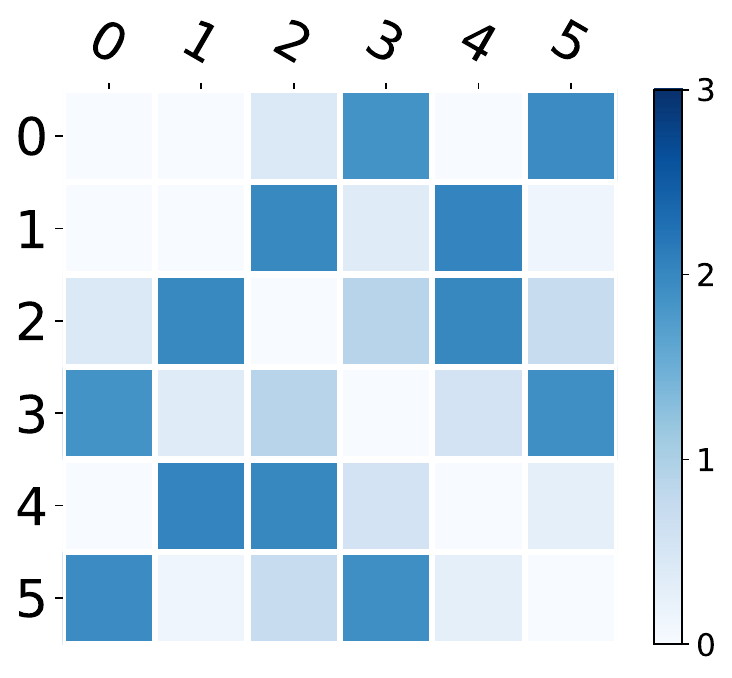}}
    \hfill
    \subfloat[\texttt{Oracle}]{\includegraphics[width=0.14\linewidth]{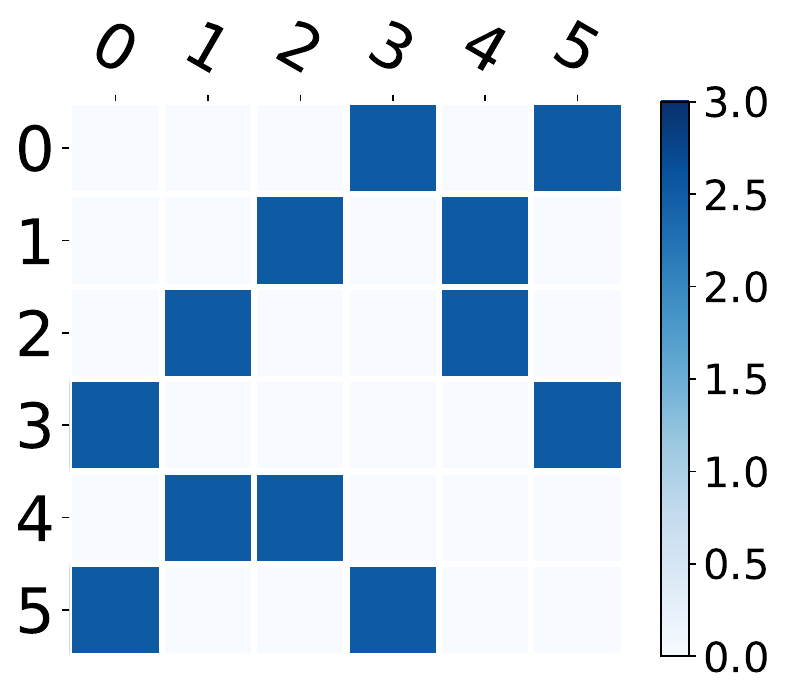}}
    \vspace{-3mm}
    \caption{\small Visualization of the learned graph structure at time $t=3$ compared with the oracle collaboration structure. \texttt{DeLAMA} learns smooth and accurate edge weights compared to baseline methods.}
    \vspace{-3mm}
    \label{fig:regression_graph_compare}
\end{figure*}
\vspace{-4mm}
\subsubsection{Quantitative Analysis}
\noindent Here we aim to: i) verify the effectiveness of the decentralized collaboration mechanism in \texttt{DeLAMA} on regression learning performance; ii) evaluate the collaboration graph learning performance of \texttt{DeLAMA} under different constraints of the communication graph $\mathbf{C}^{(t)}$. Note that in i) the communication structure of \texttt{DeLAMA} is set to be fully connected, while in ii) we will try other kinds of communication structures.

To verify i), \textbf{Table}~\ref{tab:regression-decentralization} presents the system performance comparison with federated learning and decentralized optimization methods, including standard federated learning~\cite{fedavg, fedprox, scaffold}, personalized federated learning~\cite{ditto, pfedgraph} and \cite{DSGD, DSGT, DFedAvgM}. The experimental result shows that i) our decentralized and lifelong collaboration approach significantly outperforms both federated learning and decentralized optimization methods, reducing the MSE by \textbf{98.80}\% at last timestamp $t=10$; ii) adding our agent collaboration remarkably improves the learning performance, especially when timestamp is small that every agent lacks data; iii) as timestamp increases, the learning performance improvement brought by our lifelong-adaptive learning enlarges, reflecting the effectiveness of our lifelong-adaptive learning design.

To verify ii), we perform \texttt{DeLAMA} under different communication structures, including the fully connected(FC) graph, the Erdos-Renyi(ER) graph, and the Barabasi Albert(BA) graph~\cite{network_sci}. The compared approaches contain both classic relation learning methods~\cite{graphlasso, liu2017distributed}, and recent widely used structural learning approaches L2G~\cite{L2G}. We compare these approaches by substituting $\mathbf{\Phi}_{\rm graph}(\cdot)$ of \texttt{DeLAMA} to other relational learning methods while preserving other parts of \texttt{DeLAMA} stay unchanged.
To achieve a fair comparison, we investigate the system performance under the same connectivity level of these two types of random graphs. Here the connectivity level $c\left(\mathcal{G}^{(t)}\right)$ of the communication graph adjacency matrix $\mathbf{C}^{(t)} \in \{0, 1\}^{N \times N}$ is defined as: $c\left(\mathcal{G}^{(t)}\right) = \|\mathbf{C}^{(t)}\|_1/N^2$. \textbf{Table}~\ref{tab:regression_graph} presents the comparison with previous relational learning methods on both the structural correctness and the task performance. The results show that i) \texttt{DeLAMA} outperforms both traditional centralized and decentralized structural learning methods;  ii) \texttt{DeLAMA} matches the performance of recent structural learning performance L2G~\cite{L2G}; and iii) \texttt{DeLAMA} stills works even under low connectivity level.

\vspace{-3mm}
\subsubsection{Qualitative Analysis}
\begin{figure}
    \centering
    \subfloat[Functions learned by each agent]{
            \vspace{-2mm}
\includegraphics[width=0.95\linewidth]{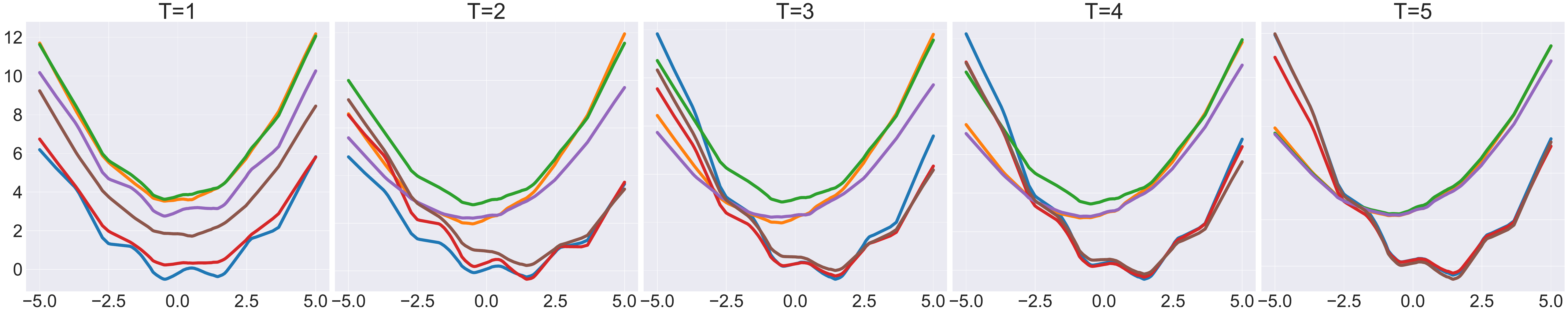}
    }\hfill
    \subfloat[Time evolving collaboration structure]{
            \vspace{-2mm}
\includegraphics[width=0.95\linewidth]{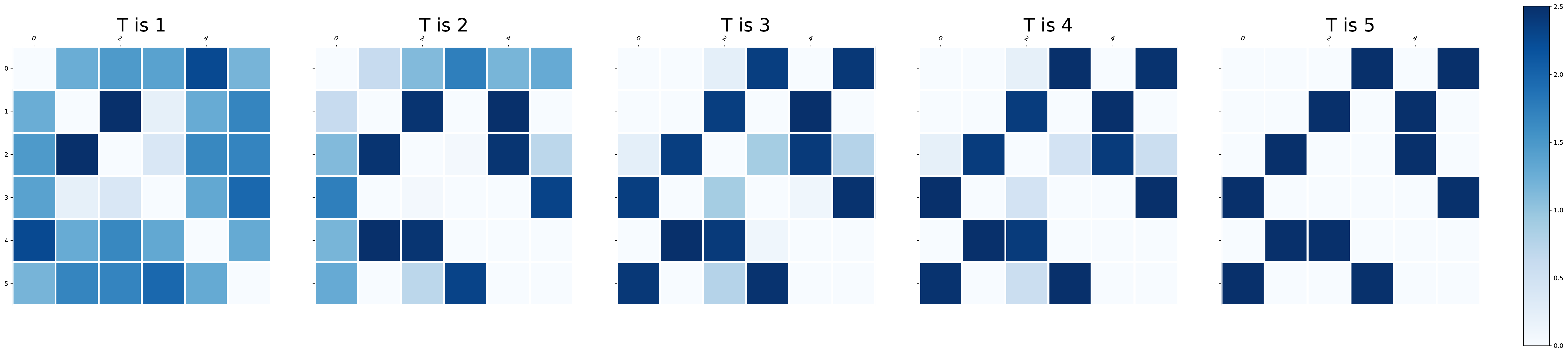}
    }
    \vspace{-3mm}
    \caption{\small A collaboration result of two groups of agents that are performing two different regression tasks. (a) The learned functions after collaboration for each agent. (b) The time-evolving collaboration graph of this collaboration system. \texttt{DeLAMA} enables the collaboration system to evolve with dynamic collaboration relationships.}
    \vspace{-3mm}
    \label{fig:regression_res}
\end{figure}

Here we aim to i) verify that the system of \texttt{DeLAMA} could dynamically evolve as time increases, and ii) compare the performances of our collaboration relationship inference with existing graph learning approaches, including NS~\cite{neighborhood-selection}, GLasso~\cite{graphlasso}, MTRL~\cite{liu2017distributed}, GL-LogDet~\cite{graph-learning-smooth} and L2G~\cite{L2G}.

To show i), we visualize the collaborative learning results of each agent from time stamp 1 to 5, and the corresponding collaborative relationships. \textbf{Figure}~\ref{fig:regression_res} illustrates the collaboration learning results of two groups among six agents. The visualizations show that agents can find correct collaborators where the connection forms two groups of agents, and refine the time-evolving collaboration structures as time increases. The visualization of their learned functions shows that they can learn from past encountered data samples.

To show ii), we visualize the system collaboration relationship learned by different approaches. To achieve a fair comparison, we calculate the collaboration relationship for the group of agents at the same time $t=3$. The visualization of collaboration relationships shown in \textbf{Figure}~\ref{fig:regression_graph_compare} demonstrates that \texttt{DeLAMA} could discover more suitable collaboration relations compared to other approaches.

\begin{table*}
    \caption{\small The system performance analysis on classification tasks of four different settings: Lifelong learning, federated learning, decentralized optimization, decentralized and lifelong-adaptive collaborative learning. \texttt{DeLAMA-WC} stands for \texttt{DeLAMA} without collaboration among agents and \texttt{DeLAMA-WM} stands for \texttt{DeLAMA} without lifelong-adaptive learning capabilities.}
    \vspace{-3mm}
    \label{tab:classification-decentralization}
    \centering
    \renewcommand\arraystretch{0.8}
    \begin{tabular}{c|c|cccccc}
    \toprule
         \multirowcell{2}{\textbf{Setting}} &\multirowcell{2}{\textbf{Method}}  &\multicolumn{3}{c}{\textbf{MNIST}} &\multicolumn{3}{c}{\textbf{CIFAR-10}} \\
         & & $\text{Acc}_{t=1}$ & $\text{Acc}_{t=5}$ & $\text{Acc}_{t=10}$ & $\text{Acc}_{t=1}$ & $\text{Acc}_{t=5}$ & $\text{Acc}_{t=10}$\\
    
         \midrule
         
         \multirow{5}{*}{\makecell[c]{\textbf{Lifelong} \\ \textbf{learning}}} 
         & LWF\cite{lwf}  &20.17 &26.49 &32.34 &20.00 &20.00 &20.00\\ 
         & EWC\cite{ewc}  &20.10 &26.64 &31.47 &20.00 &20.00 &20.01\\ 
         & MAS\cite{MAS} &20.10 &26.64 &31.47 &20.00 &20.00 &20.00\\
         & GEM\cite{gem} &20.00 &25.41 &49.84 &20.00 &25.94 &32.49\\
         & A-GEM\cite{A-gem} &20.00 &20.94 &22.47 &20.00 &20.38 &20.07\\
         \midrule
         \multirow{8}{*}{\makecell[c]{\textbf{Federated}\\ \textbf{learning}}} & FedAvg~\cite{fedavg} & 43.09 &51.76 &59.50 &19.94 &20.84 &23.88\\
         &FedProx~\cite{fedprox} &49.12 &56.17 &60.28 &21.70 &23.74 &26.32\\
         &SCAFFOLD~\cite{scaffold} &48.04 &56.14 &64.44 &21.09 &23.78 &26.18\\
         &FedAvg-FT~\cite{fedavg} &27.36 &50.27 &59.10 &20.00 &20.34 &23.02\\
         &FedProx-FT~\cite{fedprox} &36.07 &53.46 &58.87 &20.00 &20.33 &22.51\\
         &Ditto~\cite{ditto} &37.80 &51.63 &59.41 &20.04 &20.29 &22.32\\
         &FedRep~\cite{fedrep} &20.00 &20.57 &23.23 &20.00 &20.00 &20.00\\
         &pFedGraph\cite{pfedgraph} &20.00 &26.09 &34.79 &20.00 &20.93 &22.23\\
         \midrule
         \multirow{3}{*}{\makecell[c]{\textbf{Decentralized}\\ \textbf{optimization}}}
         &DSGD~\cite{DSGD} &20.10 &20.52 &22.61 &20.02 &20.02 &20.00\\
         &DSGT~\cite{DSGT} &22.42 &21.34 &22.81 &19.94 &20.02 &20.05\\
         &DFedAvgM~\cite{DFedAvgM} &36.91 &51.37 &54.91 &19.97 &21.30 &22.38\\
         \midrule
         \multirow{3}{*}{\makecell[c]{\textbf{Decentralized and lifelong-}\\ \textbf{adaptive collaborative learning}}} &\texttt{DeLAMA-WC} &37.15 &78.92 &95.54 &30.96 &57.81 &71.46\\
         &\texttt{DeLAMA-WM} &\textbf{67.67} &66.67 &67.10 &46.87 &46.07 &45.32\\
         &\texttt{DeLAMA}  &\textbf{67.67} &\textbf{98.03} &\textbf{99.51} &\textbf{46.87} &\textbf{72.57} &\textbf{76.03}\\ 
    \bottomrule
    \end{tabular}
    \vspace{-2mm}
\end{table*}

\begin{table*}
    \centering
    \caption{\small The system collaborative learning performance of different collaboration relationship learning methods. Here {Oracle} means we use graph structure where agents with the same observation configuration collaborate together. The evaluation metric contains both the graph-level GMSE metric against the Oracle structure and the system performance on the evaluation set. }
    \vspace{-3mm}
        \renewcommand\arraystretch{0.8}
    \begin{tabular}{c|c|c|cccccc}
         \toprule
         \textbf{Data} &\textbf{Method} &\textbf{Decentralize} & $\text{GMSE}_{t=1}$ & $\text{GMSE}_{t=3}$ & $\text{GMSE}_{t=5}$  & $\text{Acc}_{t=1}$ & $\text{Acc}_{t=3}$ & $\text{Acc}_{t=5}$ \\
         \midrule
         \multirow{7}{*}{\textbf{MNIST}} & NS\cite{neighborhood-selection} &\usym{2713} &0.9999 &2.3562 &1.3772 &52.83 &83.13 &95.81\\
         
         &GLasso \cite{graphlasso} &\usym{2717} &1.7585 &0.2728 &0.1365 &38.52 &81.89 &95.88\\


         &MTRL\cite{liu2017distributed} &\usym{2717} &2.1181 &0.3179 &0.1444 &39.36 &79.61 &97.28\\


         &GL-LogDet\cite{graph-learning-smooth} &\usym{2717} &7.0086 &0.9999 &1.0000 &44.93 &65.43 &79.93\\

         &L2G-PDS\cite{L2G} &\usym{2717} &0.5211 &0.2970 &0.1391 &68.32 &91.96 &97.30\\


         &Oracle &- & - & - & - & 69.28 & 92.83 & 98.22\\

         &\texttt{DeLAMA} &\usym{2713} &\textbf{0.4998} &\textbf{0.1968} &\textbf{0.0855} &\textbf{67.67} &\textbf{92.02} &\textbf{98.03} \\

         \midrule
         
         \multirow{7}{*}{\textbf{CIFAR-10}} & NS\cite{neighborhood-selection} &\usym{2713} &1.9756 &4.0367 &2.321 &38.36 &54.28 &66.19\\
         
         &GLasso \cite{graphlasso} &\usym{2717} &14.3883 &1.3897 &0.3823 &24.41 &55.39 &69.74\\


         &MTRL\cite{liu2017distributed} &\usym{2717} &375.2 &2.0157 &0.4352 &25.36 &53.23 &64.88\\


         &GL-LogDet\cite{graph-learning-smooth} &\usym{2717} &1.0000 &1.0033 &1.0109 &35.52 &48.51 &58.23\\

         &L2G-PDS\cite{L2G} &\usym{2717} &0.5067 &0.3226 &0.2065 &45.63 &65.03 &72.06\\


         &Oracle & - &- &- &- &48.21 &68.51 &74.16\\

         &\texttt{DeLAMA} &\usym{2713} &\textbf{0.5948} &\textbf{0.3089} &\textbf{0.1475} &\textbf{46.87} &\textbf{65.19} &\textbf{72.57}\\

         \bottomrule
    \end{tabular}
    \vspace{-3mm}
    \label{tab:classification_graph}
\end{table*}

\vspace{-3mm}
\subsection{Image Classification}
\label{image}
\vspace{-1mm}
\subsubsection{Experimental Setup}
\textbf{Task.} We consider a collaboration system with 6 collaborators. To encourage the collaboration behaviors between the agents, we divide the system of agents into two groups where each group's agents are all doing one 5-class classification task. We adopt the class incremental learning paradigm into our experiment, where each time agents randomly access purely one single class of data drawn from the five classes.

\noindent \textbf{Dataset.} Our experiments are performed on both the \textbf{MNIST} dataset\cite{mnist} and the \textbf{CIFAR-10} dataset\cite{cifar10}. We create the 5-class classification tasks by sampling images from the full \textbf{MNIST} or \textbf{CIFAR-10} dataset. The 5 classes are randomly chosen from 0 to 9, each time agents will access one single class data sample. Each time the sampling size of the train set is 10 and the corresponding test set size with all five classes is 50. To enable algorithm unrolling, we created several lifelong collaboration learning tasks with 70\% for training and 30\% for testing. Here $N=6$ with maximal learning time $T=10$.

\noindent \textbf{Metric.} We adopt two metrics for evaluation: 1) the average prediction accuracy for each agent measured on the test set. 2) the graph mean square error shown in ~\eqref{gmse}.

\vspace{-3mm}
\subsubsection{Quantitative Analysis}
\noindent \textbf{Decentralized collaboration} Here we aim to: i) verify the effectiveness of the decentralized and lifelong collaboration mechanism in \texttt{DeLAMA} on classification task learning performance; ii) evaluate the collaboration graph learning performance of \texttt{DeLAMA} on classification tasks.


To verify i), \textbf{Table}~\ref{tab:classification-decentralization} presents the system comparison with lifelong learning~\cite{lwf, ewc, A-gem}, federated learning~\cite{fedavg, fedprox} and decentralized learning methods~\cite{DSGD, DSGT, fedavgm}. The experimental result shows that i) our decentralized and lifelong collaboration approach could effectively remember previously learned training data, significantly outperforms classic lifelong learning approaches by \textbf{99.66\%} and \textbf{134.01\%} on MNIST and CIFAR-10 at $t=10$, showing the effectiveness of our lifelong learning mechanism; ii) compared with other multi-client and decentralized training approaches~\cite{fedavg, pfedgraph, fedavgm} that uses a static collaboration graph, our decentralized collaboration significantly improves the classification accuracy by \textbf{54.42\%} and \textbf{188.87\%} on MNIST and CIFAR-10 at $t=10$, showing the effectiveness of our dynamic learning of collaboration graph; iii) adding our agent collaboration and 
remarkably improves the learning performance, especially
when all the agents lack data at the beginning.

To verify ii), \textbf{Table}~\ref{tab:classification_graph} presents the comparison with other classic relationship learning approaches~\cite{neighborhood-selection, graphlasso, L2G}, including centralized methods~\cite{graph-learning-smooth, liu2017distributed} and decentralized approaches~\cite{neighborhood-selection}. We compare these approaches by substituting $\mathbf{\Phi}_{\rm graph}(\cdot)$ of \texttt{DeLAMA} to these relational learning methods while preserving other parts of \texttt{DeLAMA} stay unchanged. The results show that i) \texttt{DeLAMA} outperforms classic relationship learning algorithms in classification tasks with a significant reduction in the graph structure inference error of \textbf{38.53\%} and \textbf{28.57\%}, reflecting that our method accurately learns the collaboration graph; ii) the accurate learning of collaboration graph further leads to the improvement of task performance (classification). Our method outperforms classic collaboration relation learning methods and is very close to the oracle method that utilizes the ground-truth collaboration graph; iii) the collaboration relationships learned by \texttt{DeLAMA} could dynamically improve with time increases from the system performance perspective.

\begin{figure}
    \centering
    \subfloat[Collaboration relationships of MNIST]{
        \vspace{-4mm}
        \includegraphics[width=0.95\linewidth]{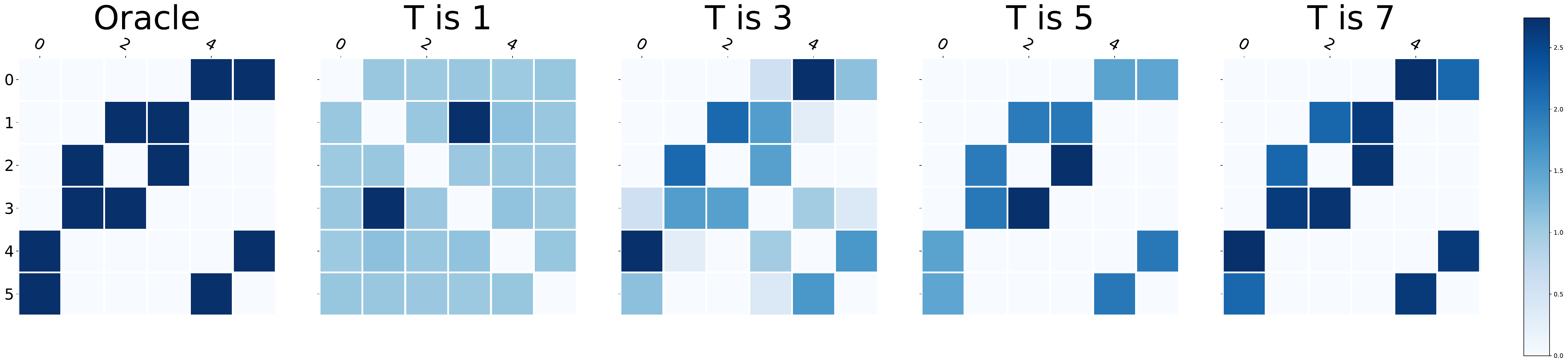}
    }\hfill
    \subfloat[Collaboration relationships of CIFAR-10]{
                \vspace{-4mm}
\includegraphics[width=0.95\linewidth]{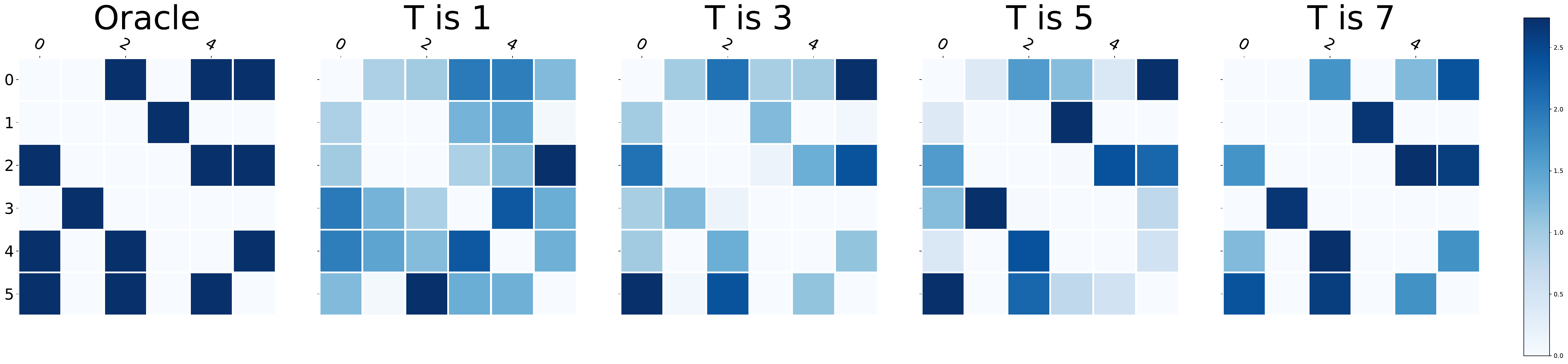}
    }
    \vspace{-3mm}
    \caption{\small Example of the evolving collaboration graphs. The system tends to assign a relatively weak collaboration relationship between all the agents, then cut off the undesired ones, and finally, agents can correctly find useful collaborators.}
    \vspace{-3mm}
    \label{fig:classification graphs}
\end{figure}



\noindent \textbf{Lifelong adaptation} Here we aim to: i) verify the lifelong learning capability from the single-agent learning perspective; ii) verify that \texttt{DeLAMA} is lifelong-adaptive compared to the static collaboration approaches. 

To verify i), we compare \texttt{DeLAMA-WC} with several lifelong learning approaches with the concern of fair comparison. \textbf{Table}~\ref{tab:classification-decentralization} shows that even without collaboration, \texttt{DeLAMA-WC} outperforms standard lifelong learning approaches.

To verify ii), we compare the system performance of \texttt{DeLAMA} with other static collaboration approaches such as federated learning~\cite{fedavg, fedprox} and collaboration relationship learning~\cite{neighborhood-selection, graphlasso, L2G}. The performance compared with federated learning and multitask learning shown in \textbf{Table}~\ref{tab:classification-decentralization} and \textbf{Table}~\ref{tab:classification_graph} reveals that i) \texttt{DeLAMA} has lifelong-adaptive capabilities rather than static collaboration approaches; ii) as
timestamp increases, the learning performance improvement
brought by our lifelong-adaptive learning enlarges, reflecting
the effectiveness of our lifelong adaptive learning design.

\vspace{-3mm}
\subsubsection{Qualitative Analysis}
Here we aim to verify that the learning system of \texttt{DeLAMA} could dynamically evolve as the time stamp increases when faced with class incremental learning tasks. \textbf{Figure}~\ref{fig:classification graphs} visualizes the time-evolving collaboration structures produced by \texttt{DeLAMA}. Compared with the oracle structure, the results on \textbf{MNIST} and \textbf{CIFAR-10} dataset show that i) the collaboration relationships are evolving according to agents' observations; ii) agents tend to collaborate with more agents at the beginning, and delete less useful relationships as their observation grows, which is a reasonable collaboration strategy.

\begin{figure}
    \centering
    \includegraphics[width=\linewidth]{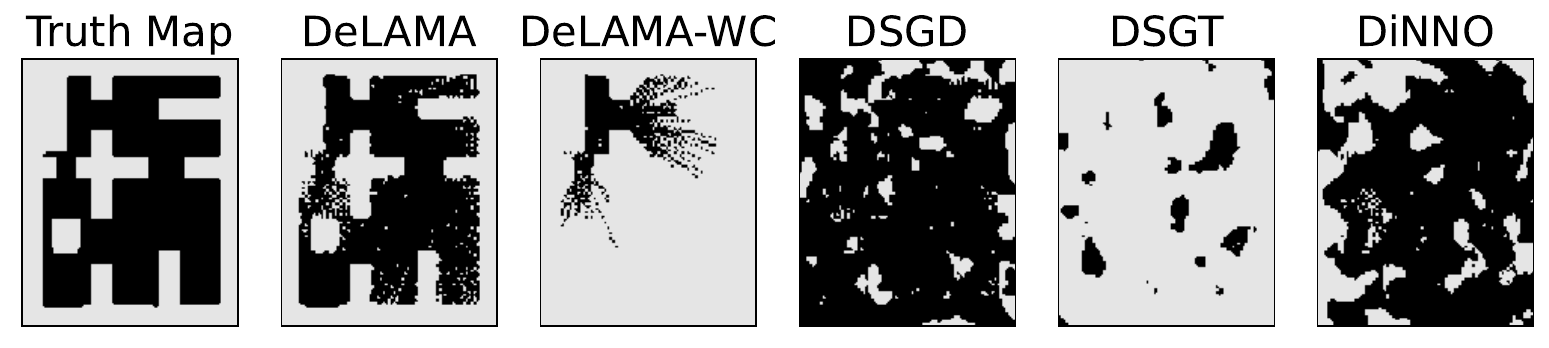}
    \vspace{-6mm}
    \caption{\small The simulation result of robot mapping at $t=12$ based on the CubiCasa5k room Dataset. \texttt{DeLAMA} outperforms other decentralized collaborative model training approaches.  \textbf{Black:} empty space. \textbf{White:} occupied space.}
    \vspace{-6mm}
    \label{fig:room_visual}
\end{figure}
\vspace{-3mm}
\vspace{-1mm}
\subsection{Multi-Robot Mapping}
\label{map}
\vspace{-1mm}
\subsubsection{Experimental Setup}

\noindent \textbf{Task.} We consider a collaboratively robot 2D mapping task with 5 robots. The room structures are generated from the CubiCasa5k dataset\cite{cubicasa5k}. In this environment exploration task, robots are placed randomly in the room and set a manually predefined exploration trajectory. Each robot can have a local view of the environment based on LiDAR samples along the trajectory and cannot visit every part of the room. The goal of this task is to realize the efficient and accurate mapping for each agent through collaboration.

\noindent \textbf{Dataset.} The room structure is generated from the CubiCasa5k dataset\cite{cubicasa5k}. To improve the learning task complexity, we increased the wall thickness, removed some small substructures such as doors, and balconies, and added some substructures into the rooms. The LiDAR scan is simulated by simple ray tracking methods and the angular resolution is one degree. Each time the LiDAR scan can create a local observation from each agent, serving as a dataset containing features of each scanned point. The features are representations of different types of sensor signals with environmental noise. In this task, the sampling duration contains 12 time stamps and 5 collaborative agents. We use 70\%  of the total room structures for training and the rest 30\% rooms for testing. 

\noindent \textbf{Implementation.} We treat this task as a binary classification task. Each point is either occupied or empty, which can be regarded as a positive label or a negative label. The detected map of each agent is indeed binary and can be evaluated by various classification metrics.

\noindent \textbf{Metric.} For regular rooms, occupied parts are much less than empty parts. Hence we utilize the F1 score of the binary classification task as the metric to represent the unbalanced classification task's classification power. 

\vspace{-3mm}
\subsubsection{Quantitative and Qualitative  Analysis}
In this section, we show that \texttt{DeLAMA} could be applied to multi-robot mapping scenarios. Specifically, we show that our decentralized collaborative mapping mechanism outperforms previous decentralized optimization and single-agent mappings. 

\textbf{Table}~\ref{tab:room_f1} shows the comparison with previous decentralized optimization methods~\cite{DSGD, DSGT, DiNNO}. We see that i) our collaboration learning strategy outperforms previous decentralized optimization methods and significantly improves the efficiency and performance of environmental exploration; ii) compared to \texttt{DeLAMA-WC} that is an individual agent mapping without collaboration, adding agent collaboration
remarkably improves the exploration performance. \textbf{Figure}~\ref{fig:room_visual} illustrates that i) \texttt{DeLAMA} outperforms other decentralized model training approaches, ii) compared to a single agent's limit perception range, multi-agent collaboration could discover a much more complete room structure. According to the visualization, the detection range and classification accuracy of the agent are improved with the help of multi-agent collaboration.

\begin{table}
    \vspace{-3mm}
    \caption{\small The agents' average F1-score of the multi-robot mapping task. \texttt{DeLAMA} outperforms other decentralized collaborative model training approaches.}
    \vspace{-3mm}
    \label{tab:room_f1}
    \centering
    \begin{tabular}{c|ccc}
    \toprule
         Method & ${\rm F1}_{t=1}$ &${\rm F1}_{t=6}$ &${\rm F1}_{t=12}$ \\
         \midrule
         DSGD~\cite{DSGD} &13.44 &43.90 &45.75\\
         DSGT~\cite{DSGT} &38.32 &38.31 &38.31\\
         DiNNO~\cite{DiNNO} &14.73 &20.34 &25.57\\
         \texttt{DeLAMA-WC} & 68.50  & 72.66  & 74.92\\
         \texttt{DeLAMA} & \textbf{72.11}  & \textbf{88.52}  & \textbf{95.31}\\
    \bottomrule
    \end{tabular}
    \vspace{-3mm}
\end{table}

\vspace{-3mm}
\subsection{Human Involved Experiment}
\label{human}
\vspace{-1mm}
\subsubsection{Experimental Setup}
\noindent \textbf{Purposes.} Here we designed a human experiment for multi-agent collaboration. It aims to verify whether human tends to collaborate with other people with similar knowledge, which corresponds to the graph smoothness modeling shown in~\eqref{prior}. We created a cognitive learning task formed by a collaborative learning game to investigate the following two problems:

$\bullet$ Whether the proposed algorithm obtains a compatible task performance compared to humans in cognitive learning scenarios.

$\bullet$ Does human collaboration share the same mechanism to find collaborators like the proposed graph learning algorithm?


\noindent \textbf{Task.} To demonstrate that \texttt{DeLAMA}'s collaboration mechanism mirrors human cognitive collaboration, we adopt a linear regression task as \textbf{Section}~\ref{regression} where each human participant represents one system agent. Here, to promote efficient human-machine interaction, we create a web application with a user-friendly GUI. From the human's perspective, i) the linear regression procedure is performed based on their visual observation and memory of previously encountered data points; ii) the interaction between the human participant and other agents occurs on the visual level, where the human can view the agents' regression outcomes and refine their regression results by drawing on the GUI based on the evaluation of other agents' results; see details in \textbf{Appendix}~\ref{appendix_humaneval}.

\vspace{-3mm}
\subsubsection{Quantitative Analysis}
To answer the first question, we aim to analyze the average performance of humans compared with \texttt{DeLAMA}. We set the number of agents $n=4$ with 3 timestamps in total. We set the two different target regression lines between the agents, which corresponds to two possible collaboration groups. We conducted three different levels of data quality. The qualitative results shown in \textbf{Table}~\ref{tab: human_exp_mse} reveal that \texttt{DeLAMA} outperforms human collaboration with a lower regression error at various data qualities.
\begin{table}
\caption{\small The average performance of human collaboration compared with \texttt{DeLAMA}. We analyzed the system performance at $t=1,2,3$. } 
\vspace{-3mm}
\label{tab: human_exp_mse}
    \centering
    \begin{tabular}{c|c|ccc}
    \toprule
         Method& Quality & $\text{MSE}_{t=1}$ &$\text{MSE}_{t=2}$ &$\text{MSE}_{t=3}$\\
         \midrule         
         \multirow{3}{*}{Human}  &Low &268.9874 &8.4663 &0.7053\\
         &Medium &\textbf{20.8548} &1.2173 &0.0859\\
         &High &0.0503 &0.0183 &0.0270\\
        
         \midrule
         \multirow{3}{*}{\texttt{DeLAMA}} &Low &\textbf{246.9219} &\textbf{4.1179} &\textbf{0.6533}\\
         &Medium &22.8994 &\textbf{0.6001} &\textbf{0.0600}\\
         &High &\textbf{0.0320} &\textbf{0.0120} &\textbf{0.0112}\\
    \bottomrule
    \end{tabular}
    \vspace{-3mm}
\end{table}
\begin{figure}
    \centering
    \vspace{-3mm}
    \includegraphics[width=\linewidth]{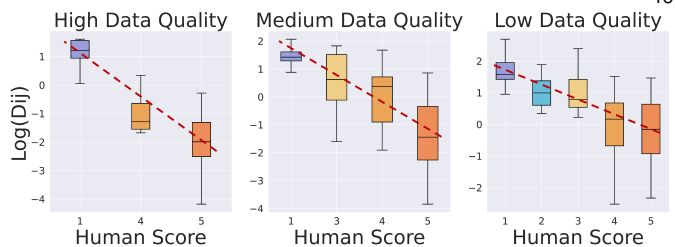}
    \vspace{-7mm}
    \caption{\small The human evaluation of collaboration relationships with each other. \textbf{x-axis:} human rating scores, ranging from 0 (lowest collaboration) to 5 (highest collaboration). \textbf{y-axis:} logarithm of model parameter distance. \textbf{From the first to last column:} different data quality human evaluation score distribution (highest to lowest). There exist negative correlations between agents' model parameters and human collaboration strength.}
    \label{fig:agentlevel_structure}
    \vspace{-3mm}
\end{figure}

\vspace{-3mm}
\subsubsection{Qualitative Analysis}
\vspace{-1mm}
To answer the second question, we aim to compare the correlations between the collaboration relationships and the corresponding agents' differences. From the definition of collaboration relation defined in \texttt{DeLAMA}~\eqref{w_graph}, the collaboration strength is negatively correlated with the distance of model parameters. Here we aim to verify that human collaboration still satisfies this property. Specifically, we substitute one agent in \texttt{DeLAMA} into a human and compare the relationship between learned edge weights and parameter distances. To quantify the collaboration strength between humans and collaborated machines, we ask humans to provide their rating scores (0 to 5, a higher score means a higher collaboration relationship) against other agents. \textbf{Figure}~\ref{fig:agentlevel_structure} demonstrates the relation of human rating scores and learned regression model parameter distances under different task data quality, where the y-axis is the logarithm of the distance defined as $\log\left(\mathbf{D}_{ij}^{(t)}\right) = \log \left(\left\|\boldsymbol{\theta}_i^{(t)} - \boldsymbol{\theta}_j^{(t)}\right\|_2\right)$. From the human score distribution, we see that 1) the collaboration relation strength is negatively correlated with the distance between the agents' model parameters, which verifies our assumption that humans tend to collaborate with similar ones; and 2) for harder collaboration problems with lower data quality, humans have difficulty in distinguishing useful collaborators. This suggests that the mechanisms of human collaboration are not merely based on the similarity between individuals. There may exist some more complex logic and structures, such as complex gaming relationships and higher-order interactions.

\vspace{-3mm}
\section{Related Work}
\label{sec:related_work}

\subsection{Multi-Agent Communication}
\label{multi-agent-communicate}
Multi-agent communication enables agents to send messages to each other to realize inter-agent collaboration. The key part of multi-agent communication lies in the designing of inter-agent communication strategy~\cite{learningwhen2communicate}. Traditional methods apply predefined features or heuristics \cite{camera_net, camera_control} to design communication protocols. More recently, learning-based methods have been proposed~\cite{multiagentcom_backprob}, which train the communication protocols end-to-end under task performance supervision. These approaches are often applied in perception or decision-making, such as collaborative perception or multi-agent reinforcement learning (MARL). In collaborative perception, current methods mainly focus on designing communication strategies~\cite{where2com} or finding the exact time to share~\cite{when2com}. In multi-agent reinforcement learning~\cite{marl_survey_1}, studies have explored various communication approaches, from simply sharing sensations, and policies to sharing other abstract data embeddings,like SARNet~\cite{marl-communication}, TarMAC~\cite{tarmac}, and IMAC~\cite{informativeMAC},  

Our method also focuses on communication between collaborative agents, but the differences are: i) Our communication strategy is derived from model-based numerical optimization problems with theoretical guarantees, rather than end-to-end training. ii) Our research mainly focuses on cognitive-level collaboration, rather than perception-level or decision-making-level collaborations.

\vspace{-3mm}
\subsection{Lifelong Learning}
\label{lifelong-learning}
Lifelong learning~\cite{continuallearningsurvey2} aims to train a continuously learning agent who can memorize previously learned tasks and quickly adapt to new training tasks. Optimizing model parameters directly each time may suffer from catastrophic forgetting~\cite{catastrophic-forget}. To overcome this issue, current approaches can be categorized into three aspects: regularization-based, rehearsal-based, and architecture-increasing-based methods. Regularization-based methods like EWC~\cite{ewc}, PathInt~\cite{PathInt}, and MAS~\cite{MAS} aim to estimate the important parts of model parameters and keep these parts changing slowly in model training, other methods like LWF~\cite{lwf}, target to keep previously learned data representations unchanged while learning new tasks. Rehearsal-based methods such as iCaRL~\cite{icarl}, EEIL~\cite{EEIL}, and GEM~\cite{gem} use previous tasks' data samples in real-time task learning, others like~\cite{generative-replay} utilize generative models to generate task data samples. TAMiL~\cite{TAMiL} combines rehearsal methods with regularization approaches together. Architecture-increasing methods like~\cite{progress-nn} try to utilize new task model parameters to prevent catastrophic forgetting, requiring relatively large memory capacities. 

Our method also aims to find an approach to overcome catastrophic forgetting and allow agents to memorize learning experiences, but the differences are: i) The application scenario is a group of collaborative agents, rather than a single agent, allowing collective phenomenon to contribute to lifelong learning. ii) Our methodology is based on algorithm unrolling, which is different from those three classic lifelong learning methodologies.

\vspace{-3mm}
\subsection{Distributed Multi-Task Learning}
\label{distributed-mtl}
Distributed multi-task learning~\cite{wang2016distributed} aims to train several models in a distributed manner. Classic methods such as DSML~\cite{wang2016distributed}, DMTL~\cite{zhang2012convex, liu2017distributed}, RMTL~\cite{evgeniou2004regularized}, AMTL~\cite{baytas2016asynchronous} proposed distributed model training algorithms based on numerical optimization, where most approaches are limited to linear models. Early works such as RMTL~\cite{evgeniou2004regularized} and analysis on the kernel for multi-task learning~\cite{evgeniou2005learning} put efforts into designing efficient kernels for multi-task learning. 

Our method also aims to distributively train several models, but the differences are: i) The model training algorithm is fully decentralized without the central server. ii) We consider time-evolving collaboration relations with adaptive capabilities, rather than static task distribution and collaboration relationships.

\vspace{-3mm}
\subsection{Federated Learning}
\label{federated-learning}
Federated learning (FL)~\cite{fedavg} aims to enable multiple agents collaboratively train models under the coordination of a central server. Traditional FL can be divided into two aspects: generalized FL~\cite{fedavg,fedprox} and personalized FL~\cite{ditto,fedrep}.

Generalized FL aims to train a global model that generalizes to datasets of all agents~\cite{fedavg}. To handle data heterogeneity, FedProx~\cite{fedprox} and SCAFFOLD~\cite{scaffold} propose to align agents' models in parameter space; while FedFM~\cite{fedfm} aligns agents' feature space. 
FedAvgM~\cite{fedavgm} and FedOPT~\cite{reddi2021adaptive} introduce momentum for updating global model.
FedNova~\cite{fednova} and FedDisco~\cite{feddisco} adjust the aggregation manner of traditional FL. 
Personalized FL aims to train multiple personalized models for agents' individual interests~\cite{smith2017federated}. 
Addressing data heterogeneity, Ditto~\cite{ditto} and pFedMe~\cite{pfedme} propose proximal regularization on parameter space.
FedPer~\cite{fedper} and FedRep~\cite{fedrep} split the whole model into two shallow and deep parts and keep the deep part localized. 
CFL~\cite{cfl} applies clustering on the participating clients and pFedGraph~\cite{pfedgraph} infers a collaboration graph among clients to promote fine-grained collaboration.

Unlike these methods that are coordinated by a central server, our proposed method finds the model parameters in a decentralized manner under the learned collaboration graph structure.
The model parameters are passed via the link on the graph and the aggregation rule for each agent is formulated by theoretical deduction.
This message-passing learning framework is much more robust and each agent could learn the model according to their own tasks without the constraint of the global model. 

\vspace{-4mm}
\section{Conclusion}
\label{sec:conclusion}

In this paper, we propose a novel decentralized lifelong-adaptive collaborative learning framework based on numerical optimization and algorithm unrolling, named~\texttt{DeLAMA}. It enables multiple agents to efficiently detect collaboration relationships and adapt to ever-changing observations during individual model training. We validate the effectiveness of \texttt{DeLAMA} through extensive experiments on various real-world and simulated datasets. Experimental results show that \texttt{DeLAMA} achieves superior performances compared with other collaborative learning approaches.

\appendices

\vspace{-4mm}
\ifCLASSOPTIONcompsoc
  \section*{Acknowledgments}
\else
  \section*{Acknowledgment}
\fi

This research is supported by the National Key R\&D Program of China under Grant 2021ZD0112801, NSFC under Grant 62171276 and the Science and Technology Commission of Shanghai Municipal under Grant 21511100900 and 22DZ2229005.

\ifCLASSOPTIONcaptionsoff
  \newpage
\fi

\bibliographystyle{IEEEtran}
\bibliography{main}

\vspace{-12mm}

\begin{IEEEbiography}
[{\includegraphics[width=1in,height=1.25in, clip,keepaspectratio]{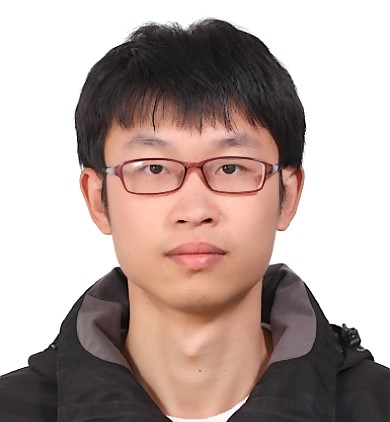}}]{Shuo Tang} is currently working toward a Ph.D. degree at the Cooperative Medianet Innovation Center in Shanghai Jiao Tong University since 2022. Before that, he received the B.E. degree in Computer Science from Shanghai Jiao Tong University, Shanghai, China, in 2022. His research interests include multi-agent collaboration and communication efficiency.
\end{IEEEbiography}
\vspace{-6mm}

\vspace{-6mm}
\begin{IEEEbiography}
[{\includegraphics[width=1in,height=1.25in, clip,keepaspectratio]{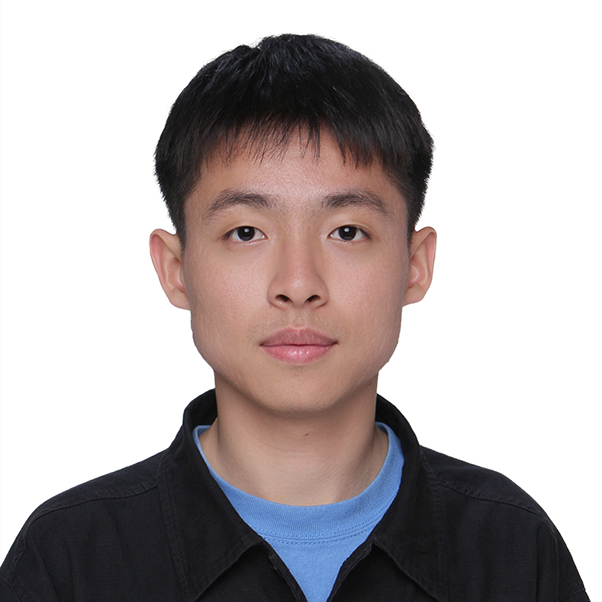}}]{Rui Ye} is currently working toward a Ph.D. degree at the Cooperative Medianet Innovation Center in Shanghai Jiao Tong University since 2022. Before that, he received the B.E. degree in Information Engineering from Shanghai Jiao Tong University, Shanghai, China, in 2022. He was a Research Intern with Microsoft Research Asia in 2022 and 2023. His research interests include federated learning, trustworthy large language models, and multi-agent collaboration. 
\end{IEEEbiography}
\vspace{-6mm}

\vspace{-6mm}
\begin{IEEEbiography}
[{\includegraphics[width=1in,height=1.25in, clip,keepaspectratio]{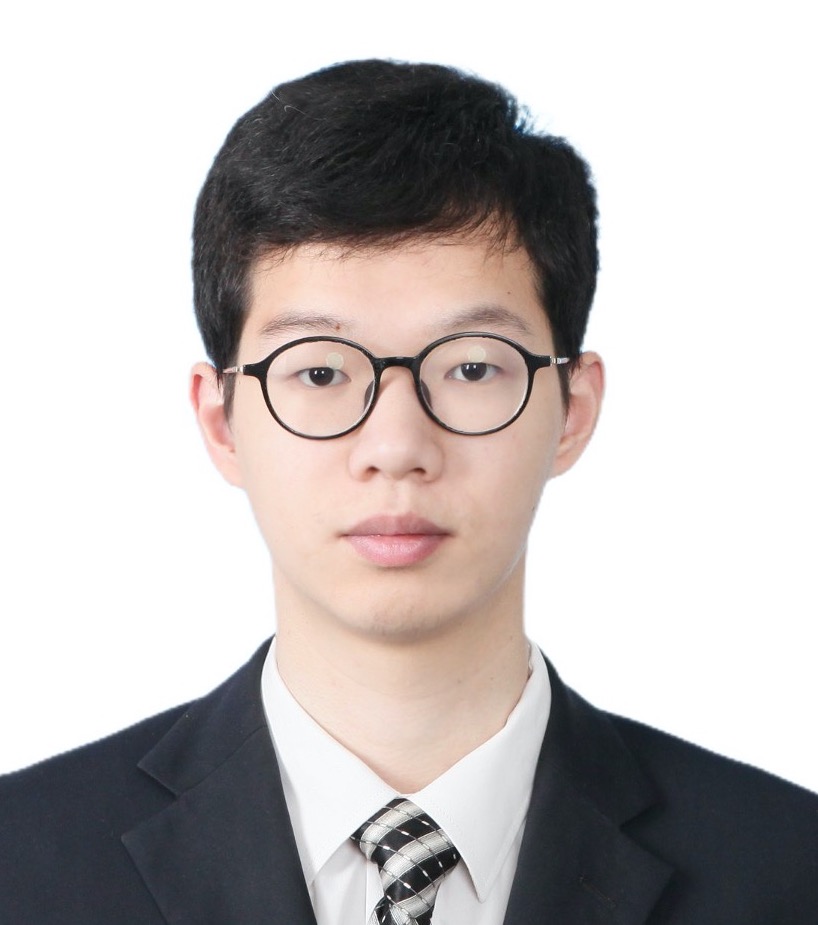}}]{Chenxin Xu} is working toward the joint Ph.D. degree at Cooperative Medianet Innovation Center in Shanghai Jiao Tong University and at Electrical and Computer Engineering in National University of Singapore since 2019. He received the B.E. degree in information engineering from Shanghai Jiao Tong University in 2019. His research interests include trajectory prediction and multi-agent system.  He is the reviewer of some prestigious international journals
and conferences, including IEEE-TPAMI, CVPR, ICCV, ICML and NeurIPS.
\end{IEEEbiography}
\vspace{-6mm}

\vspace{-6mm}
\begin{IEEEbiography}[{\includegraphics[width=1in,height=1.25in, clip,keepaspectratio]{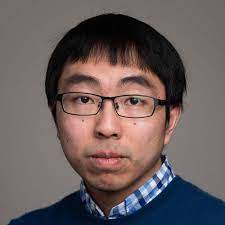}}]{Xiaowen Dong} is an associate professor in the Department of Engineering Science at the University of Oxford, where he is also a member of both the Machine Learning Research Group and the Oxford-Man Institute. His main research interests concern signal processing and machine learning techniques for analysing network data, and their applications in social and economic sciences.
\end{IEEEbiography}
\vspace{-6mm}

\vspace{-6mm}
\begin{IEEEbiography}[{\includegraphics[width=1in,height=1.25in, clip,keepaspectratio]{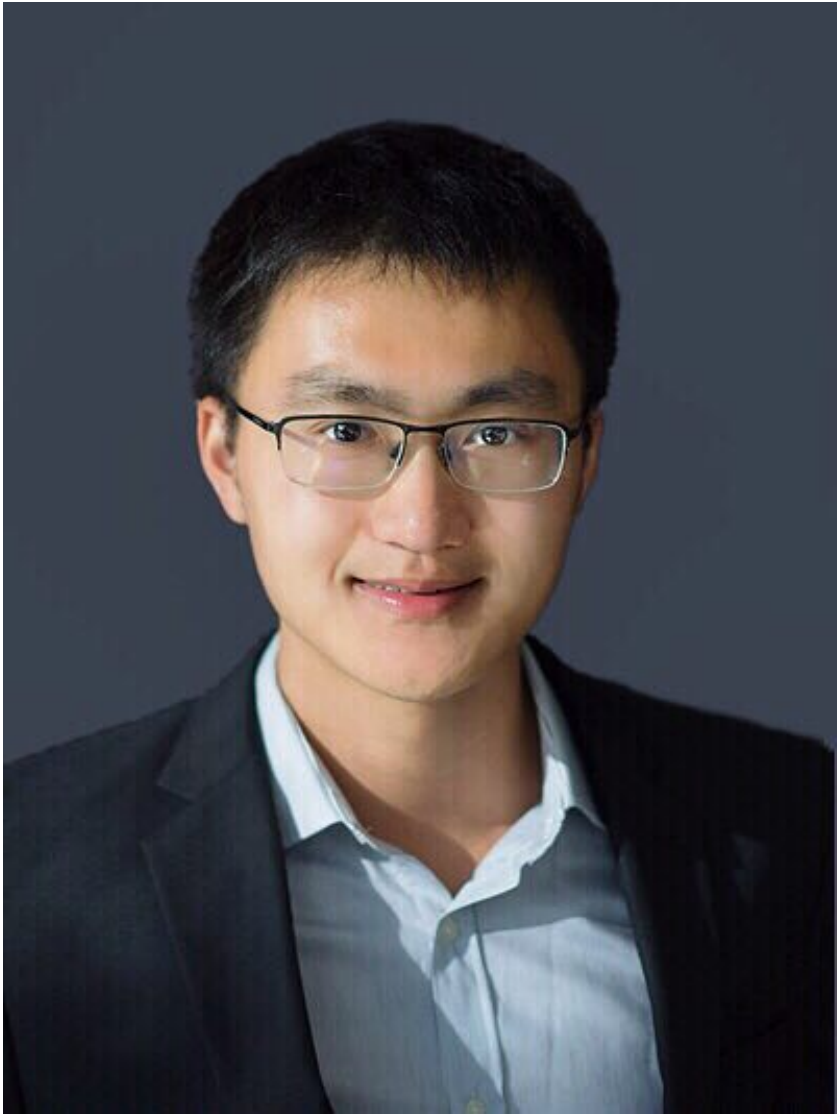}}]{Siheng Chen} is a tenure-track associate professor of Shanghai Jiao Tong University. He was a research scientist at Mitsubishi Electric Research Laboratories (MERL), and an autonomy engineer at Uber Advanced Technologies Group (ATG), working on self-driving cars. Dr. Chen received his doctorate from Carnegie Mellon University in 2016. Dr. Chen's work on sampling theory of graph data received the 2018 IEEE Signal Processing Society Young Author Best Paper Award. He contributed to the project of scene-aware interaction, winning MERL President's Award. His research interests include graph machine learning and collective intelligence.
\end{IEEEbiography}
\vspace{-6mm}

\vspace{-6mm}
\begin{IEEEbiography}[{\includegraphics[width=1in,height=1.25in,clip,keepaspectratio]{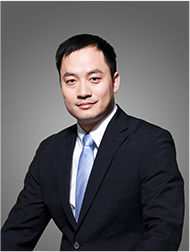}}]{Yanfeng Wang} received the B.E. degree in information engineering from the University of PLA, Beijing, China, and the M.S. and Ph.D. degrees in business management from the Antai College of Economics and Management, Shanghai Jiao Tong University, Shanghai, China. He is currently the Vice Director of the Cooperative Medianet Innovation Center and also the Vice Dean of the School of Electrical and Information Engineering, Shanghai Jiao Tong University. His research interests mainly include media big data and emerging commercial applications of information technology.
\end{IEEEbiography}
\vspace{-6mm}

\newpage
\appendices
\section{Proofs and analysis}
\label{sec:proof}
\subsection{Analysis to theorem~\ref{best point taylor}}
\label{proof-to-best-point-taylor}
Our analysis starts from the strong convexity of the loss function $\mathcal{L}\left(f_{\boldsymbol{\theta}_i^{(t)}}\left(\mathbf{X}_i^{(t)}\right), \mathbf{Y}_i^{(t)}\right)$. Since the model $f_{\boldsymbol{\theta}_i^{(t)}}(\cdot)$ is linear and the loss function is standard supervised training loss, the loss function is convex. Also the condition number of the Hessian $\mathbf{H}\left(\boldsymbol{\alpha}_i^{(t)}\right)=\nabla_{\boldsymbol{\alpha}_i^{(t)}}^2 \mathcal{L}\left(f_{\boldsymbol{\theta}_i^{(t)}}\left(\mathcal{X}_i^{(t)}\right), \mathcal{Y}_i^{(t)}\right)$ is bounded, which means the eigenvalues are all strictly large than zero. Based on this observation, we provide an upper-bound of $\left\| \boldsymbol{\theta}_i^{(t)} - \boldsymbol{\theta}_i^{(t)*} \right\|_2$ shown in \textbf{Lemma}~\ref{bound-for-error}.

\begin{mylem}
\label{bound-for-error}
     Suppose for the $i$th agent at time stamp $t$, given training data $\left(\mathbf{X}_i^{(t)}, \mathbf{Y}_i^{(t)}\right)$, let the linear model is $f_{\boldsymbol{\theta}_i^{(t)}}(\cdot)$ with loss function $\mathcal{L}\left(f_{\boldsymbol{\theta}_i^{(t)}}\left(\mathbf{X}_i^{(t)}\right), \mathbf{Y}_i^{(t)}\right)$. Let $\boldsymbol{\alpha}_i^{(t)}$ be the point doing Taylor expansion and the optimal parameter of the loss function is $\boldsymbol{\theta}_i^{(t)*}$. Then:
    \begin{align*}
        \left\| \boldsymbol{\theta}_i^{(t)} - \boldsymbol{\theta}_i^{(t)*} \right\|_2 \leq  \frac{L}{2}  \left\| \mathbf{H}\left(\boldsymbol{\alpha}_i^{(t)}\right)^{-1} \right\| \left\| \boldsymbol{\alpha}_i^{(t)} - \boldsymbol{\theta}_i^{(t)*} \right\|_2^2,
    \end{align*}
    where $L$ is the Lipschitz constant of the Hessian $\mathbf{H}\left(\boldsymbol{\alpha}_i^{(t)}\right)$.
\end{mylem}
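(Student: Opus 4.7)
The plan is to recognize the Taylor-based update in Section~\ref{subsec-local-learning} as a single Newton step from $\boldsymbol{\alpha}_i^{(t)}$ applied to the convex loss $\mathcal{L}$, and to derive the claim as the classical quadratic-convergence bound for one Newton iteration. Setting the gradient of the second-order Taylor expansion to zero gives the closed form $\boldsymbol{\theta}_i^{(t)} = \boldsymbol{\alpha}_i^{(t)} - \mathbf{H}(\boldsymbol{\alpha}_i^{(t)})^{-1}\nabla\mathcal{L}(\boldsymbol{\alpha}_i^{(t)})$, and the hypotheses of the lemma (standard supervised loss on a linear model, together with the bounded condition number of $\mathbf{H}$) imply that $\mathcal{L}$ is convex, that $\mathbf{H}(\boldsymbol{\alpha}_i^{(t)})$ is invertible, and that the optimum $\boldsymbol{\theta}_i^{(t)*}$ is characterized by $\nabla\mathcal{L}(\boldsymbol{\theta}_i^{(t)*}) = \boldsymbol{0}$.

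First I would subtract $\boldsymbol{\theta}_i^{(t)*}$ from the Newton update and multiply through by $\mathbf{H}(\boldsymbol{\alpha}_i^{(t)})$, then insert the identity $\nabla\mathcal{L}(\boldsymbol{\theta}_i^{(t)*}) = \boldsymbol{0}$, to obtain
\[
\mathbf{H}(\boldsymbol{\alpha}_i^{(t)})\bigl(\boldsymbol{\theta}_i^{(t)} - \boldsymbol{\theta}_i^{(t)*}\bigr) = \bigl[\nabla\mathcal{L}(\boldsymbol{\theta}_i^{(t)*}) - \nabla\mathcal{L}(\boldsymbol{\alpha}_i^{(t)})\bigr] - \mathbf{H}(\boldsymbol{\alpha}_i^{(t)})\bigl(\boldsymbol{\theta}_i^{(t)*} - \boldsymbol{\alpha}_i^{(t)}\bigr).
\]
The right-hand side is precisely the residual of the first-order (Hessian-based) approximation to the gradient difference around $\boldsymbol{\alpha}_i^{(t)}$. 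Next, I would apply the fundamental theorem of calculus on the segment from $\boldsymbol{\alpha}_i^{(t)}$ to $\boldsymbol{\theta}_i^{(t)*}$ to rewrite the bracketed term as $\int_0^1 \mathbf{H}\bigl(\boldsymbol{\alpha}_i^{(t)} + s(\boldsymbol{\theta}_i^{(t)*} - \boldsymbol{\alpha}_i^{(t)})\bigr)(\boldsymbol{\theta}_i^{(t)*} - \boldsymbol{\alpha}_i^{(t)})\,ds$; subtracting the constant-Hessian term combines the two pieces into a single integrand of the form $\bigl[\mathbf{H}(\cdot) - \mathbf{H}(\boldsymbol{\alpha}_i^{(t)})\bigr](\boldsymbol{\theta}_i^{(t)*} - \boldsymbol{\alpha}_i^{(t)})$.

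The final step invokes the Lipschitz continuity of $\mathbf{H}$ with constant $L$, which bounds the operator norm of the integrand at parameter $s$ by $L s \|\boldsymbol{\alpha}_i^{(t)} - \boldsymbol{\theta}_i^{(t)*}\|_2^2$; integrating $\int_0^1 s\,ds = \tfrac{1}{2}$ yields the factor $L/2$ and the quadratic dependence. Left-multiplying by $\mathbf{H}(\boldsymbol{\alpha}_i^{(t)})^{-1}$ and using submultiplicativity of the operator norm then produces the stated inequality. I expect the only delicate point to be the bookkeeping in the integral representation---in particular, attributing the $s$-factor to the Lipschitz estimate $\|\mathbf{H}(\boldsymbol{\alpha} + s\Delta) - \mathbf{H}(\boldsymbol{\alpha})\| \leq L s \|\Delta\|_2$ rather than to $\|\Delta\|_2$ alone, and passing the operator norm through the integral sign; no further assumptions beyond convexity, Lipschitz Hessian, and invertibility of $\mathbf{H}(\boldsymbol{\alpha}_i^{(t)})$ (already guaranteed by the bounded condition number) are required.
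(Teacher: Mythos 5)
Your proposal is correct and follows essentially the same route as the paper's proof: interpret $\boldsymbol{\theta}_i^{(t)}$ as a single Newton step from $\boldsymbol{\alpha}_i^{(t)}$, insert $\nabla\mathcal{L}(\boldsymbol{\theta}_i^{(t)*})=\boldsymbol{0}$, express the gradient difference as an integral of the Hessian along the segment, and bound the integrand via Lipschitz continuity to obtain the factor $L/2$. The only difference is the direction in which you parameterize the segment (producing $\int_0^1 s\,ds$ instead of the paper's $\int_0^1 (1-s)\,ds$), which is immaterial.
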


\begin{proof}
    
    For the sake of brevity, we write $\boldsymbol{\theta}_i^{(t)}$, $\boldsymbol{\theta}_i^{(t),*}$ and $\boldsymbol{\alpha}_i^{(t)}$ as $\boldsymbol{\theta}$, $\boldsymbol{\theta}^*$ and $\boldsymbol{\alpha}$. Consider the error between $\boldsymbol{\theta}$ and $\boldsymbol{\theta}^*$,  we have
    \begin{align*}
        \boldsymbol{\theta} - \boldsymbol{\theta}^* &= \boldsymbol{\alpha} - \boldsymbol{\theta}^* - \mathbf{H}(\boldsymbol{\alpha})^{-1} \nabla_{\boldsymbol{\alpha}} \mathcal{L}\\
        &= \boldsymbol{\alpha} - \boldsymbol{\theta}^* - \mathbf{H}(\boldsymbol{\alpha})^{-1}\left(\nabla_{\boldsymbol{\alpha}}\mathcal{L} - \nabla_{\boldsymbol{\theta}^*} \mathcal{L}\right) 
    \end{align*}
    define $h(t) = \nabla_{\boldsymbol{\theta}^* + t(\boldsymbol{\alpha} - \boldsymbol{\theta}^*)}\mathcal{L}$, thus
    \begin{align*}
        \hat{\boldsymbol{\theta}} - \boldsymbol{\theta}^* &= \boldsymbol{\alpha} - \boldsymbol{\theta}^* - \mathbf{H}(\boldsymbol{\alpha})^{-1} (h(1) - h(0))\\
        &= \boldsymbol{\alpha} - \boldsymbol{\theta}^* - \mathbf{H}(\boldsymbol{\alpha})^{-1} \int_{0}^{1} h^\prime(s) ds \\
        &= \boldsymbol{\alpha} - \boldsymbol{\theta}^* -\\
        & \mathbf{H}(\boldsymbol{\alpha})^{-1}  \int_{0}^{1} \mathbf{H}(\boldsymbol{\theta}^* + s(\boldsymbol{\alpha} - \boldsymbol{\theta}^*)) \left(\boldsymbol{\alpha} - \boldsymbol{\theta}^* \right) ds \\
        &= \mathbf{H}(\boldsymbol{\alpha})^{-1} \\
        & \times \int_0^1 \left[ \mathbf{H}(\boldsymbol{\alpha}) - \mathbf{H}\left(\boldsymbol{\theta}^* + s(\boldsymbol{\alpha} - \boldsymbol{\theta}^*)\right) \right](\boldsymbol{\alpha} - \boldsymbol{\theta}^*)ds
    \end{align*}
    taking norm on both sides, we obtain
    \begin{align*}
        &\left \| \hat{\boldsymbol{\theta}} - \boldsymbol{\theta}^* \right\|_2 = \\
        & \left \| \mathbf{H}(\boldsymbol{\alpha})^{-1} \int_0^1 \left[ \mathbf{H}(\boldsymbol{\alpha}) - \mathbf{H}\left(\boldsymbol{\theta}^* + s(\boldsymbol{\alpha} - \boldsymbol{\theta}^*)\right) \right](\boldsymbol{\alpha} - \boldsymbol{\theta}^*)ds \right \|_2 \\
        & \leq \left \| \mathbf{H}(\boldsymbol{\alpha})^{-1} \right \|\\
        & \times \left \|\int_0^1 \left[ \mathbf{H}(\boldsymbol{\alpha}) - \mathbf{H}\left(\boldsymbol{\theta}^* + s(\boldsymbol{\alpha} - \boldsymbol{\theta}^*)\right) \right](\boldsymbol{\alpha} - \boldsymbol{\theta}^*)ds \right\|_2\\
        & \leq \left \| \mathbf{H}(\boldsymbol{\alpha})^{-1} \right \| \\
        & \times \int_0^1 \left\| \mathbf{H}(\boldsymbol{\alpha}) - \mathbf{H}\left(\boldsymbol{\theta}^* + s(\boldsymbol{\alpha} - \boldsymbol{\theta}^*)\right) \right\| \left\|\boldsymbol{\alpha} - \boldsymbol{\theta}^*\right \|_2 ds
    \end{align*}
    For linear regression task, the hessian matrix $\mathbf{H}(\boldsymbol{\theta})$ remains constant, then $\left \| \mathbf{H}(\boldsymbol{\alpha}) - \mathbf{H}\left(\boldsymbol{\theta}^* + s(\boldsymbol{\alpha} - \boldsymbol{\theta}^*)\right) \right \| = 0$, which means $\left \| \hat{\boldsymbol{\theta}} - \boldsymbol{\theta}^*  \right\|_2 = 0$.
    
    For the linear classification tasks, the hessian matrix is Lipschitz continuous, thus there exists a constant $L$ such that
    \begin{align*}
        \left \| \mathbf{H}(\boldsymbol{\theta}_1) - \mathbf{H}\left(\boldsymbol{\theta}_2\right) \right \| \leq L \|\boldsymbol{\theta}_1 -\boldsymbol{\theta}_2 \|
    \end{align*}
    taking this property back into the former inequality, we obtain
    \begin{align*}
        \left\| \hat{\boldsymbol{\theta}} - \boldsymbol{\alpha} \right\|_2 &\leq L \left \| \mathbf{H}(\boldsymbol{\alpha})^{-1} \right \| \int_0^1 \left \| \boldsymbol{\alpha} - \boldsymbol{\theta}^* \right \|_2^2 (1-s) ds\\
        &= \frac{L}{2} \left \| \mathbf{H}(\boldsymbol{\alpha})^{-1} \right \| \left \| \boldsymbol{\alpha} - \boldsymbol{\theta}^* \right \|_2^2
    \end{align*}

\end{proof}
Our target is minimizing the error upper bound shown in \textbf{Lemma}~\ref{bound-for-error}. For regression loss functions, the Hessian is constant, making the Lipschitz constant $L$ equal to zero, which means doing expansion at any point $\boldsymbol{\alpha}_i^{(t)}$ is zero error. For other tasks with non-zero Lipschitz constant $L$ such as classification, this goal is equivalent to solving the following constraint optimization problem:
\begin{equation}
    \label{taylor problem original}
    \begin{aligned}
    \min_{\boldsymbol{\alpha}_i^{(t)}} & \mathbb{E}_{\boldsymbol{\theta}_i^{(t)*}}\left(\left\|\mathbf{H}\left(\boldsymbol{\alpha}_i^{(t)}\right)^{-1}\right\| \left\|\boldsymbol{\alpha}_i^{(t)} - \boldsymbol{\theta}_i^{(t)*} \right\|_2^2\right)\\
    & s.t. \  1 \leq k\left(\mathbf{H}\left(\boldsymbol{\alpha}_i^{(t)}\right)\right) \leq M,
    \end{aligned}
\end{equation}
where the prior distribution of the target optimal parameter $\boldsymbol{\theta}_i^{(t)*}$ is $p\left(\boldsymbol{\theta}_i^{(t)*}\right) \sim \mathcal{N}\left(0, \mathbf{\Sigma}\right)$. Due to the difficulty of capturing the spectral radius of the inverse Hessian, considering the eigenvalues $0 <\lambda_1 \leq \lambda_2,\ldots, \leq \lambda_K$ of the objective hessian $\mathbf{H}\left(\boldsymbol{\alpha}_i^{(t)}\right)$, we obtain 
\begin{align*}
    \frac{K}{\sum_{i=1}^K \lambda_i} \leq \left\|\mathbf{H}\left(\boldsymbol{\alpha}_i^{(t)}\right)^{-1}\right\| = \frac{1}{\lambda_1} \leq \frac{KM}{\sum_{i=1}^K \lambda_i},
\end{align*}
which means $\left\|\mathbf{H}\left(\boldsymbol{\alpha}_i^{(t)}\right)^{-1}\right\|$ and $\operatorname{\textbf{tr}}\left(\mathbf{H}\left(\boldsymbol{\alpha}_i^{(t)}\right)\right)^{-1}$ are the same order. Hence we obtain the following final optimization problem:
\begin{equation}
    \label{taylor problem}
    \begin{aligned}
        \min_{\boldsymbol{\alpha}_i^{(t)}} & \mathbb{E}_{\boldsymbol{\theta}_i^{(t)*}}\left( \operatorname{\textbf{tr}}\left(\mathbf{H}\left(\boldsymbol{\alpha}_i^{(t)}\right)\right)^{-1} \left\|\boldsymbol{\theta}_i^{(t)} - \boldsymbol{\theta}_i^{(t)*} \right\|_2^2\right)\\
        & s.t. \  1 \leq k\left(\mathbf{H}\left(\boldsymbol{\alpha}_i^{(t)}\right)\right) \leq M.
    \end{aligned}
\end{equation}
Then we give our proof to \textbf{Theorem}~\ref{best point taylor} in terms of a classification model with $C$ types of output classes.
\begin{proof}
    For the sake of brevity, we write $\boldsymbol{\theta}_i^{(t)}$, $\boldsymbol{\alpha}_i^{(t)}$ as $\boldsymbol{\theta}$ and $\boldsymbol{\alpha}$. The linear model $f_{\boldsymbol{\theta}}(\cdot)$ corresponds to the $i$th class of classification task is $\boldsymbol{\theta}^i$ with training data $\left(\mathbf{X} \in \mathbb{R}^{n\times p}, \mathbf{Y} \in \mathbb{R}^{n \times C} \right)$ where $C$ equals to the number of classes, $p$ is the dimension of input data. The model estimates the probability $\hat{p}^j \in \mathbb{R}^{C}$ of the $j$th element $\boldsymbol{x}_j$ as
    \begin{align*}
        \hat{p}_i^j = \operatorname{exp}\left(\boldsymbol{\theta}^{i\top}\boldsymbol{x}_j\right) / \sum_{i=1}^{C} \operatorname{exp} \left(\boldsymbol{\theta}^{i \top}\boldsymbol{x}_j\right)
    \end{align*}
    Hence the Hessian equals to
    \begin{align*}
        \mathbf{H}\left(\boldsymbol{\alpha}\right)  =\sum_{j=1}^{n} \mathbf{M}_j \otimes \boldsymbol{x}_j \boldsymbol{x}_j^\top,
    \end{align*}
    where $\mathbf{M}_j$ is defined as
    \begin{align*}
        \mathbf{M}_j = \left(\begin{array}{cccc}
        \hat{p}_1^j\left(1-\hat{p}_1^j\right)  & -\hat{p}_1^j \hat{p}_2^j  & \cdots & -\hat{p}_1^j \hat{p}_C^j  \\
        \vdots & \hat{p}_2^j\left(1-\hat{p}_2^j\right)  & & \vdots \\
        -\hat{p}_C^j \hat{p}_1^j  & \cdots & \cdots & \hat{p}_C^j\left(1-\hat{p}_C^j\right) 
        \end{array}\right).
    \end{align*}
 
Based on these definitions and notations, the bound shown in \textbf{Theorem}~\ref{best point taylor} can be reformulated as
    \begin{align*}
        &\operatorname{\textbf{tr}}(\mathbf{H}(\boldsymbol{\alpha}))^{-1}\mathbb{E}_{\boldsymbol{\theta}^*}\left(\|\boldsymbol{\alpha}\|_2^2 - 2 \boldsymbol{\alpha}^\top \boldsymbol{\theta}^* + \|\boldsymbol{\theta}^*\|_2^2\right)\\
        &= \operatorname{\textbf{tr}}(\mathbf{H}(\boldsymbol{\alpha}))^{-1} \left(\|\boldsymbol{\alpha}\|_2^2 - 2\boldsymbol{\alpha}^\top \mathbb{E}_{\boldsymbol{\theta}^*}(\boldsymbol{\theta}^*) + \mathbb{E}_{\boldsymbol{\theta}^*}\left(\|\boldsymbol{\theta}^*\|_2^2\right)\right)\\
        &= \operatorname{\textbf{tr}}(\mathbf{H}(\boldsymbol{\alpha}))^{-1} \left(\|\boldsymbol{\alpha}\|_2^2 + \operatorname{\textbf{tr}}(\Sigma^{-1})\right)\\
        &\geq \operatorname{\textbf{tr}}(\mathbf{H}(\boldsymbol{\alpha}))^{-1} \operatorname{\textbf{tr}}(\Sigma^{-1})
    \end{align*}
    However, for $\operatorname{\textbf{tr}}\left(\mathbf{H}(\boldsymbol{\alpha})\right)$ we have
    \begin{align*}
        &\operatorname{\textbf{tr}}(\mathbf{H}(\boldsymbol{\alpha})) = \operatorname{\textbf{tr}}\left(\sum_{j=1}^n\mathbf{M}_j \otimes \boldsymbol{x}_j\boldsymbol{x}_j^\top\right)\\
        &=\sum_{j=1}^n \operatorname{\textbf{tr}}\left(\mathbf{M}_j \otimes \boldsymbol{x}_j\boldsymbol{x}_j^\top\right)\\
        &=\sum_{j=1}^n \|\boldsymbol{x}_j\|_2^2 \sum_{i=1}^C \hat{p}_i^j\left(1 - \hat{p}_i^j\right)\\
        &=\sum_{j=1}^n \|\boldsymbol{x}_j \|_2^2 \left(1 - \sum_{i=1}^C \hat{p}_i^{j2}\right) \leq \left(1 - \frac{1}{C}\right) \sum_{j=1}^n \|\boldsymbol{x}_j\|_2^2
    \end{align*}
    Taking this inequality back into the original bound, we have
    \begin{align*}
         &\operatorname{\textbf{tr}}(\mathbf{H}(\boldsymbol{\alpha}))^{-1}\mathbb{E}_{\boldsymbol{\theta}^*}\left(\|\boldsymbol{\alpha}\|_2^2 - 2 \boldsymbol{\alpha}^\top \boldsymbol{\theta}^* + \|\boldsymbol{\theta}^*\|_2^2\right)\\
         &\geq \operatorname{\textbf{tr}}(\mathbf{\Sigma}^{-1})\left[\left(1 - \frac{1}{C}\right)\sum_{i=1}^n \|\boldsymbol{x}_i\|_2^2 \right]^{-1}.
    \end{align*}
    The equality is satisfied when $\hat{p}_1=\hat{p}_2 = \ldots = \hat{p}_c$ and $\|\boldsymbol{\alpha}\|_2^2=0$, which is equivalent to $\boldsymbol{\alpha}=0$.

\end{proof}

\subsection{Analysis to theorem~\ref{graph learning approx}}
\label{proof-to-graph-learning-approx}
Before we provide our final proof, we first state a simple observation shown in the following lemma:
\begin{mylem}
\label{simple_inequality}
    Let $\boldsymbol{h}_b(x) = \left(\sqrt{x^2 + b} + x\right)/2$, then
    $$\forall x \in \mathbb{R}, \left|\boldsymbol{h}_b(x) - \operatorname{ReLU}(x)\right| \leq \frac{\sqrt{b}}{2}$$
\end{mylem}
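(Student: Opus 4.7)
The plan is to verify the bound by a direct case split on the sign of $x$, since $\operatorname{ReLU}$ is piecewise-defined while $\boldsymbol{h}_b$ is smooth. In each case the difference $\boldsymbol{h}_b(x)-\operatorname{ReLU}(x)$ reduces to an expression of the form $(\sqrt{y^2+b}-y)/2$ with $y\geq 0$, which is exactly the quantity one can control by a standard conjugate-multiplication trick.

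Concretely, first I would treat the case $x\geq 0$. Here $\operatorname{ReLU}(x)=x$, so
\[
\boldsymbol{h}_b(x)-\operatorname{ReLU}(x) \;=\; \frac{\sqrt{x^2+b}+x}{2}-x \;=\; \frac{\sqrt{x^2+b}-x}{2}.
\]
Since $\sqrt{x^2+b}\geq x\geq 0$, this is nonnegative, and multiplying numerator and denominator by $\sqrt{x^2+b}+x$ gives
\[
\frac{\sqrt{x^2+b}-x}{2}=\frac{b}{2(\sqrt{x^2+b}+x)}\leq \frac{b}{2\sqrt{b}}=\frac{\sqrt{b}}{2},
\]
where the last inequality uses $\sqrt{x^2+b}+x\geq \sqrt{b}$ for $x\geq 0$.

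Next I would handle $x<0$. Here $\operatorname{ReLU}(x)=0$, so $\boldsymbol{h}_b(x)-\operatorname{ReLU}(x)=(\sqrt{x^2+b}+x)/2$. Setting $y=-x>0$, this equals $(\sqrt{y^2+b}-y)/2$, which is again nonnegative and bounded above by $\sqrt{b}/2$ by exactly the same rationalization argument as above. Combining the two cases shows that $0\leq \boldsymbol{h}_b(x)-\operatorname{ReLU}(x)\leq \sqrt{b}/2$ for every $x\in\mathbb{R}$, hence $|\boldsymbol{h}_b(x)-\operatorname{ReLU}(x)|\leq \sqrt{b}/2$, as claimed.

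There is essentially no obstacle here: the only subtlety is recognizing that both cases collapse to the same one-variable estimate after the substitution $y=-x$, and that the nonnegativity of $\boldsymbol{h}_b(x)-\operatorname{ReLU}(x)$ lets us drop the absolute value. The argument uses no hypothesis beyond $b>0$.
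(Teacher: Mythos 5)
Your proof is correct and follows essentially the same route as the paper's: a case split on the sign of $x$ followed by the rationalization $\sqrt{x^2+b}-x = b/(\sqrt{x^2+b}+x)$. In fact your treatment of the $x<0$ case via the substitution $y=-x$ is slightly more explicit than the paper's, which merely asserts that the bound also holds there.
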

\begin{proof}
    If $x\geq 0$, then 
    \begin{align*}
         &\left|\boldsymbol{h}_b(x) - \operatorname{ReLU}(x)\right| = \left|\frac{\sqrt{x^2 + b} + x}{2} - x\right|\\
         &= \left|\frac{\sqrt{x^2 + b} - x}{2} \right| = \left|\frac{b}{2(\sqrt{x^2 + b} + x)} \right| \leq \frac{\sqrt{b}}{2}.
    \end{align*}
    If $x < 0$, then $\left|\boldsymbol{h}_b(x) - \operatorname{ReLU}(x)\right| = \left| \left(\sqrt{x^2 + b} + x\right)/2 \right|$. Thus it also satisfies $\left|\boldsymbol{h}_b(x) - \operatorname{ReLU}(x)\right|<\sqrt{b}/2$.
   
\end{proof}
To begin with, rewrite the KKT condition of the solution as 
\begin{equation}
    \begin{aligned}
        & \boldsymbol{w}^{(t)} = \operatorname{ReLU}\left(\alpha \boldsymbol{y} + \beta z \boldsymbol{1}\right)\\
        & \boldsymbol{1}^\top \boldsymbol{w}^{(t)} = \boldsymbol{m}
    \end{aligned}
\end{equation}
where $\boldsymbol{w}^{(t)}\in \mathbb{R}^{N^2}$ is the vectorized version of $\mathbf{W}^{(t)}$, $\boldsymbol{y} \in \mathbb{R}^{N^2}$ is the vectorized version of all the agents' model parameter distance. $\boldsymbol{1}\in \mathbb{R}^{N^2}$ is all one vector, $\alpha = -\lambda_2 / 2\lambda_3$, $\beta = -1/2 \lambda_3$. Thus when we change the $\operatorname{ReLU}$ into function $h$, the KKT condition of the new solution $\boldsymbol{w}^{(t)*}$ has changed to 
\begin{equation}
    \begin{aligned}
         & \boldsymbol{w}^{(t)*} = \boldsymbol{h}_b\left(\alpha \boldsymbol{y} + \beta z^* \boldsymbol{1}\right)\\
        & \boldsymbol{1}^\top \boldsymbol{w}^{(t)*} = \boldsymbol{m}
    \end{aligned}
\end{equation}
Based on these notations and definitions we start to prove \textbf{Theorem}~\ref{graph learning approx}.
\begin{proof} 
    Define $\boldsymbol{p} = \alpha \boldsymbol{y} + \beta z \boldsymbol{1}$, $\boldsymbol{p}^* = \alpha \boldsymbol{y} + \beta \boldsymbol{z}^* \boldsymbol{1}$. Consider $\|\boldsymbol{w} - \boldsymbol{w}^*\|_2^2$, we have
    \begin{align*}
        &\|\boldsymbol{w} - \boldsymbol{w}^*\|_2^2 = \|\operatorname{ReLU}(\boldsymbol{p}) - \boldsymbol{h}_b(\boldsymbol{p}^*)\|_2^2\\
        &= \| \operatorname{ReLU}(\boldsymbol{p}) - \boldsymbol{h}_b(\boldsymbol{p}) + \boldsymbol{h}_b(\boldsymbol{p}) - \boldsymbol{h}_b(\boldsymbol{p}^*)\|_2^2\\
        &\leq \| \operatorname{ReLU}(\boldsymbol{p}) - \boldsymbol{h}_b(\boldsymbol{p})\|_2^2 + \|\boldsymbol{h}_b(\boldsymbol{p}) - \boldsymbol{h}_b(\boldsymbol{p}^*)\|_2^2\\
        &\leq \frac{N^2b}{4} + \sum_{i=1}^{N^2} \left[\boldsymbol{h}_b^\prime(\xi_i)\beta(\boldsymbol{z} - \boldsymbol{z}^*)\right]^2\\
        &\leq \frac{N^2b}{4} + N^2\beta^2 (\boldsymbol{z}-\boldsymbol{z}^*)^2
    \end{align*}
    On the other hand, define $\phi(x) = \boldsymbol{1}^\top \boldsymbol{h}_b(\alpha \boldsymbol{y} + \beta x \boldsymbol{1})$, we have
    \begin{align*}
        & \phi(\boldsymbol{z}) - m = \phi(\boldsymbol{z}) - \phi(\boldsymbol{z}^*) = \phi^\prime (\xi) (\boldsymbol{z} - \boldsymbol{z}^*)
    \end{align*}
    where $\xi \in [\boldsymbol{z}, \boldsymbol{z}^*]$. Thus
    \begin{align*}
        &|\boldsymbol{z} - \boldsymbol{z}^*| = \left|\frac{\phi(\boldsymbol{z}) - m}{\phi^\prime(\xi)}\right|\\
        & = \frac{|\phi(\boldsymbol{z}) - \boldsymbol{1}^\top \operatorname{ReLU}(\alpha \boldsymbol{y} + \beta \boldsymbol{z} \boldsymbol{1})|}{\left|\sum_{i=1}^{N^2} \boldsymbol{h}_b^\prime(\alpha \boldsymbol{y}_i + \beta \xi ) \beta\right|}\\
        & \leq \frac{|\phi(\boldsymbol{z}) - \boldsymbol{1}^\top \operatorname{ReLU}(\alpha \boldsymbol{y} + \beta \boldsymbol{z} \boldsymbol{1})|}{N^2C |\beta|}\\
        & \leq \frac{N^2 \sqrt{b}}{2 N^2 C |\beta|} = \frac{\sqrt{b}}{2 C |\beta|},
    \end{align*}
    where the second inequality comes from the \textbf{Lemma}~\ref{simple_inequality}. Taking the inequality back into $\|\boldsymbol{w} - \boldsymbol{w}^*\|_2^2$, we obtain
    \begin{align*}
        &\|\boldsymbol{w} - \boldsymbol{w}^*\|_2^2 \leq \frac{N^2 b}{4} + \frac{N^2b}{4C^2} = \frac{N^2 b}{4} \left[1 + \frac{1}{C^2}\right]
    \end{align*}
    which means the error $\|\boldsymbol{w} - \boldsymbol{w}^*\|_2$ is bounded by $\frac{N \sqrt{b}}{2} \sqrt{1 + \frac{1}{C^2}}$.
\end{proof}

\subsection{Analysis to theorem~\ref{convergence}}
\label{proof-to-convergence}
Recall that the optimal solution corresponds to the first-order condition shown in \eqref{foc}. However, this condition requires each agent to know $\mathbf{W}_{ij}^{(t)}$ and $\mathbf{W}_{ji}^{(t)}$ both. In \texttt{DeLAMA}, our real message passing mechanism~\eqref{eq:message_passing} only requires the agent to know $\mathbf{W}_{ij}^{(t)}$ for the $i$th agent. In the following lemma, we first claim that this mechanism does converge to the exact optimal solution by demonstrating that $\mathbf{W}^{(t)}$ is symmetric.
\begin{mylem}
\label{symmetric_graph_lemma}
    The collaboration graph structure $\mathbf{W}^{(t)} \in \mathbb{R}^{N \times N}$ learned from \textbf{Algorithm}~\ref{alg3} is symmetric.
\end{mylem}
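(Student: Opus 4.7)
The plan is to reduce the symmetry of $\mathbf{W}^{(t)}$ to two pieces of symmetry that are already visible in Algorithm~\ref{alg3}: the symmetry of the pairwise distance vector $\boldsymbol{d}_i^{(t)}$, and the agent-agnosticism of the dual variable $z$. Concretely, from the KKT-style update~\eqref{w_graph} (and its smoothed variant via $h_b$), every entry has the form
\begin{equation*}
\mathbf{W}_{ij}^{(t)} = \operatorname{\textbf{ReLU}}\!\left(-\tfrac{\lambda_2 \boldsymbol{d}_{ij}^{(t)} + z_i}{2\lambda_3}\right),
\end{equation*}
so it suffices to prove that (a) $\boldsymbol{d}_{ij}^{(t)} = \boldsymbol{d}_{ji}^{(t)}$ and (b) $z_i = z_j$ at every iteration $k$ of Algorithm~\ref{alg3}.

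First I would handle (a): by definition $\boldsymbol{d}_{ij}^{(t)} = \|\boldsymbol{\theta}_i^{(t)} - \boldsymbol{\theta}_j^{(t)}\|_2^2 / \mathbf{C}_{ij}^{(t)}$, and both the squared Euclidean distance and the (undirected) communication graph $\mathbf{C}^{(t)}$ are symmetric in $(i,j)$, so $\boldsymbol{d}^{(t)}$ is symmetric. Next, for (b), I would argue by induction on the iteration index $k$. In Algorithm~\ref{alg3} each agent $i$ initializes $z_i^0 = 0$, so the base case holds. For the inductive step, note that the quantities $\boldsymbol{p}_i^k$ and $\boldsymbol{q}_i^k$ that drive the Newton update $z_i^k = z_i^{k-1} - \boldsymbol{p}_i^k / \boldsymbol{q}_i^k$ are aggregated as global sums $\sum_{j=1}^N \boldsymbol{x}_j^k$ and $\sum_{j=1}^N \boldsymbol{y}_j^k$ over the communication graph; these sums do not depend on the aggregating agent's index, so $\boldsymbol{p}_i^k = \boldsymbol{p}_j^k$ and $\boldsymbol{q}_i^k = \boldsymbol{q}_j^k$ for all $i,j$. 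Combined with the inductive hypothesis $z_i^{k-1} = z_j^{k-1}$, this yields $z_i^k = z_j^k$, closing the induction.

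Finally I would combine (a) and (b): writing $z \equiv z_i = z_j$ and using the symmetry of $\boldsymbol{d}^{(t)}$, the ReLU (or its smoothed surrogate $h_b$) is applied to the same scalar argument for $\mathbf{W}_{ij}^{(t)}$ and $\mathbf{W}_{ji}^{(t)}$, yielding $\mathbf{W}_{ij}^{(t)} = \mathbf{W}_{ji}^{(t)}$ for all $i,j$, which is the claim.

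The main obstacle I anticipate is justifying step (b) rigorously under the decentralized protocol: the statement ``$\boldsymbol{p}_i^k$ and $\boldsymbol{q}_i^k$ are identical across agents'' implicitly requires that each agent can gather the same global summation $\sum_{j=1}^N \boldsymbol{x}_j^k$ and $\sum_{j=1}^N \boldsymbol{y}_j^k$ through the communication graph $\mathbf{C}^{(t)}$. If $\mathbf{C}^{(t)}$ is connected (or after sufficient multi-hop aggregation rounds) this holds exactly; otherwise one would need to argue that the \textbf{Gather} step in Algorithm~\ref{alg3} still produces a common view of the aggregates. I would therefore state the connectivity assumption on $\mathbf{C}^{(t)}$ explicitly and rely on it in the inductive step, after which the symmetry conclusion follows immediately from the closed-form expression for the block $\boldsymbol{w}_i^{(t)}$.
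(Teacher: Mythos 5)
Your proof is correct, but it takes a genuinely different route from the paper's. The paper argues at the level of the optimization problem~\eqref{graph_learning_optimization_problem}: it observes that the objective $\lambda_2\|\mathbf{W}^{(t)}\odot\mathbf{D}^{(t)}\|_1+\lambda_3\|\mathbf{W}^{(t)}\|_{\mathbf{F}}^2$ and the feasible set are invariant under transposition (because the distance matrix $\mathbf{D}^{(t)}$ is symmetric), and then invokes strong convexity --- hence uniqueness of the minimizer --- to conclude that the optimizer must equal its own transpose. You instead work entrywise on the iterates of \textbf{Algorithm}~\ref{alg3}: symmetry of $\boldsymbol{d}_{ij}^{(t)}$ plus an induction showing all local dual copies $z_i^k$ coincide, so that $\mathbf{W}_{ij}^{(t)}$ and $\mathbf{W}_{ji}^{(t)}$ are images of the same scalar under $\operatorname{\textbf{ReLU}}$ (or $h_b$). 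Each approach buys something: the paper's argument is shorter and needs no bookkeeping about the decentralized protocol, but it really establishes symmetry of the \emph{exact} minimizer of the convex program, whereas the lemma as stated concerns the output of the algorithm, which uses the smoothed surrogate $h_b$ and only finitely many Newton steps. Your argument applies directly to every iterate $\mathbf{W}^{(t),k}$ produced by the algorithm, so it is in that sense more faithful to the claim; the price is the extra hypothesis you correctly flag, namely that the \textbf{Gather} step gives every agent the same global sums $\sum_j \boldsymbol{x}_j^k$ and $\sum_j \boldsymbol{y}_j^k$ (connectivity of $\mathbf{C}^{(t)}$ with multi-hop routing), which the paper also relies on implicitly when it asserts that the $z_i$ ``remain consistent throughout the solution process.'' One small point worth making explicit in your write-up: when $\mathbf{C}_{ij}^{(t)}=0$ the quantity $\boldsymbol{d}_{ij}^{(t)}$ is formally infinite and forces $\mathbf{W}_{ij}^{(t)}=0$; symmetry of $\mathbf{C}^{(t)}$ ensures the $(j,i)$ entry vanishes too, so your case analysis still closes.
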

The proof comes from a simple observation. First rewrite the optimization problem~\eqref{graph_learning_optimization_problem}:
\begin{align*}
    &\min_{\mathbf{W}^{(t)}} \  \mathcal{L}\left(\mathbf{W}^{(t)}\right)=\lambda_2 \|\mathbf{W}^{(t)} \odot \mathbf{D}^{(t)}\|_1 + \lambda_3 \| \mathbf{W}^{(t)}\|_{\mathbf{F}}^2 \\
    & \text{s.t.} \  \|\mathbf{W}^{(t)} \|_1 = \boldsymbol{m}, \ \mathbf{W}^{(t)} \geq 0, \ \operatorname{diag}\left(\mathbf{W}^{(t)}\right) = \boldsymbol{0}
\end{align*}
where $\mathbf{D}_{ij}^{(t)}$ stands for $\left\|\boldsymbol{\theta}_i^{(t)} - \boldsymbol{\theta}_j^{(t)}\right\|_2^2$ and $\odot$ is element wise production. Then the observation is, for any feasible solution $\mathbf{W}_0^{(t)}$ of this optimization problem, $\mathbf{W}_0^{(t)\top}$ is also feasible and $\mathcal{L}\left(\mathbf{W}_0^{(t)}\right) = \mathcal{L}\left(\mathbf{W}_0^{(t)\top}\right)$. In fact, we have the relationship  $\lambda_2 \left\|\mathbf{W}_0^{(t)\top} \odot \mathbf{D}^{(t)}\right\|_1 + \lambda_3 \left\|\mathbf{W}_0^{(t)\top}\right\|_{\mathbf{F}}^2 = \lambda_2 \left\|\mathbf{W}_0^{(t)\top} \odot \mathbf{D}^{(t)\top}\right\|_1 + \lambda_3 \left\|\mathbf{W}_0^{(t)\top}\right\|_{\mathbf{F}}^2 = \lambda_2 \left\|\mathbf{W}_0^{(t)}\odot \mathbf{D}^{(t)}\right\|_1 + \lambda_3 \left\|\mathbf{W}_0^{(t)}\right\|_{\mathbf{F}}^2$, which means the value of objective function remains unchanged. Thus the proof of \textbf{Lemma}~\ref{symmetric_graph_lemma} is as follows:
\begin{proof}
    \label{symmetric graph proof}
    $\|\mathbf{W}^{(t)} \odot \mathbf{D}^{(t)}\|_1$ is convex, thus $\mathcal{L}\left(\mathbf{W}^{(t)}\right)$ is strongly convex, making the optimization problem with a unique solution $\mathbf{W}^{(t)*}$.
    However, from our observation, the transpose of $\mathbf{W}^{(t)*}$ is also a feasible solution, and $\mathcal{L}\left(\mathbf{W}^{(t)*}\right) = \mathcal{L}\left(\mathbf{W}^{(t)*\top}\right)$, thus $\mathbf{W}^{(t)*} = \mathbf{W}^{(t)*\top}$, which means the collaboration graph structure learned from the optimization is symmetric.
\end{proof}
Based on \textbf{Lemma}~\ref{symmetric_graph_lemma}, the message-passing mechanism does converge to the optimal solution, hence we start our proof to \textbf{Theorem}~\ref{convergence}.
\begin{proof}
\label{proof to convergence}
For the sake of brevity, we write $\mathbf{B}_i^{(t)}$, $\boldsymbol{\theta}_i^{(t, k)}$, $\mathbf{W}^{(t)}$ as $\mathbf{B}_i$, $\boldsymbol{\theta}_i$, $\mathbf{W}$. Define
\begin{align*}
    \mathbf{\boldsymbol{\theta}}^{k} = \begin{bmatrix}
        \boldsymbol{\theta}_1^{k} \\
        \boldsymbol{\theta}_2^k \\
        \vdots\\
        \boldsymbol{\theta}_N^k
    \end{bmatrix} \ ,
    \mathbf{b} = \begin{bmatrix}
        b_1^{k} \\
        b_2^k \\
        \vdots\\
        b_N^k
    \end{bmatrix} \ ,
    \mathbf{M} = \begin{bmatrix}
        \mathbf{B}_1^{-1}  &0  &\ldots &0 \\
        0  &\mathbf{B}_2^{-1}  &\ldots &0 \\
        \vdots &\vdots &\ddots    &\vdots \\
        0  &0  &\ldots &\mathbf{B}_N^{-1}
    \end{bmatrix},
\end{align*}

then the iterative algorithm simplifies to:
\begin{align*}
    \mathbf{\boldsymbol{\theta}}^k = \mathbf{M} \left(4\lambda_2 \mathbf{W} \otimes \mathbf{I} \mathbf{\boldsymbol{\theta}}^{k-1} + \mathbf{b}\right).
\end{align*}
Suppose the optimal parameter is $\mathbf{\boldsymbol{\theta}}^{*}$, then we have
\begin{align*}
  \left  \| \mathbf{\boldsymbol{\theta}}^k - \mathbf{\boldsymbol{\theta}}^{*} \right \|_2^2 &= \left \| 4\lambda_2 \mathbf{M} \mathbf{W} \otimes \mathbf{I}\left[\mathbf{\boldsymbol{\theta}}^{k-1} - \mathbf{\boldsymbol{\theta}}^{*} \right] \right\|_2^2
\end{align*}
We write $\left\| \mathbf{M} \mathbf{W} \otimes \mathbf{I} \right\|$ as the operator norm of $\mathbf{M} \mathbf{W} \otimes \mathbf{I}$. Then we have
\begin{align*}
   \|4\lambda_2\mathbf{M} \mathbf{W} \otimes \mathbf{I} \mathbf{x}\|_2^2  &= \sum_{i=1}^{N} \left\|\sum_{j=1}^{N} 4\lambda_2\mathbf{W}_{ij}\mathbf{B}_i^{-1} x_j \right \|_2^2\\
   &\leq \sum_{i=1}^{N} \sum_{j=1}^{N}\left\|4\lambda_2\mathbf{W}_{ij}\mathbf{B}_i^{-1} x_j \right \|_2^2\\
   &\leq \sum_{i=1}^{N} \sum_{j=1}^{N}\left\|4\lambda_2\mathbf{W}_{ij} \mathbf{B}_i^{-1}\right\|^2 \|x_j  \|_2^2\\
   & \leq \sum_{i=1}^{N} \left[\sum_{j=1}^{N}\left\|4\lambda_2\mathbf{W}_{ij} \mathbf{B}_i^{-1}\right\| \|x_j  \|_2 \right]^2\\
\end{align*}
where $\mathbf{x} \in \mathbb{R}^{N d} = [x_1^\top; \ldots; x_N^\top]^\top$. Define $\tilde{x} = \left[\|x_1\|; \ldots; \|x_N\|\right]^\top$, $\mathbf{\tilde{M}}_{ij} = \left\|4\lambda_2\mathbf{W}_{ij} \mathbf{B}_i^{-1}\right\|$. Then we can conclude that
\begin{align*}
    &\|4\lambda_2\mathbf{M} \mathbf{W} \otimes \mathbf{I} \mathbf{x}\|_2^2 \leq \left \| \mathbf{\tilde{M}} \tilde{x}\right\|_2^2  \leq \left\|\mathbf{\tilde{M}}\right\|^2 \|\tilde{x}\|_2^2 = \left\|\mathbf{\tilde{M}}\right\|^2 \| \mathbf{x}\|_2^2
\end{align*}
Recall that the loss function $\mathcal{L}(\boldsymbol{\theta}_i)$ is convex, means we have an eigendecomposition $\mathbf{A}_i = \mathbf{Q}_i^\top \mathbf{\Lambda}_i \mathbf{Q}_i$.
Taking into the definition of $\mathbf{B}_i$, we have
\begin{align*}
4\lambda_2 \mathbf{W}_{ij}\mathbf{B}_i^{-1} &=4\lambda_2 \mathbf{W}_{ij} \left[\mathbf{Q}_i^\top \mathbf{\Lambda}_i \mathbf{Q}_i + \left(2\lambda_1 + 4\lambda_2 d_i\right) \mathbf{I} \right]^{-1} \\
&= \mathbf{Q}_i^\top \left[\frac{\mathbf{\Lambda}_i + \left(2\lambda_1 + 4\lambda_2 d_i\right) \mathbf{I}}{4\lambda_2 \mathbf{W}_{ij}}\right]^{-1} \mathbf{Q}_i
\end{align*}
Thus the operator norm of $4\lambda_2 \mathbf{W}_{ij}\mathbf{B}_i^{-1}$ equals to
\begin{align*}
    \left \| 4\lambda_2 \mathbf{W}_{ij}\mathbf{B}_i^{-1} \right\| = \frac{4\lambda_2 \mathbf{W}_{ij}}{\lambda_i^* + \left(2\lambda_1 + 4\lambda_2 d_i\right) },
\end{align*}
where $\lambda_i^*>0$ is the minimum eigenvalue of $\mathbf{\Lambda}_i$.

By Gershgorin circle theorem, the eigenvalues of $\tilde{\mathbf{M}}$ is bounded by
\begin{align*}
    &\lambda < \sum_{j=1}^{N} \tilde{\mathbf{M}}_{ij} \\
    &= \sum_{j=1}^{N} \frac{4\lambda_2 \mathbf{W}_{ij}}{\lambda_i^* + \left(2\lambda_1 + 4\lambda_2 d_i\right) } = \frac{4\lambda_2 d_i}{\lambda_i^* + \left(2\lambda_1 + 4\lambda_2 d_i\right) },
\end{align*}
for some $1 \leq i \leq N$. Thus
\begin{align*}
    \left\| \tilde{\mathbf{M}} \right \| < \max _{1\leq i \leq N} \left[\frac{4\lambda_2 d_i}{2\lambda_1 + 4\lambda_2d_i}\right] = \rho < 1,
\end{align*}
apply this property to $\mathbf{\boldsymbol{\theta}}^{k-1}$ we have
\begin{align*}
    \left\| \mathbf{\boldsymbol{\theta}}^k - \mathbf{\boldsymbol{\theta}}^* \right\|_2^2 < \rho  \left\| \mathbf{\boldsymbol{\theta}}^{k-1} - \mathbf{\boldsymbol{\theta}}^* \right\|_2^2.
\end{align*}
Taking $k$ goes to infinity we have $ \left\| \mathbf{\boldsymbol{\theta}}^k - \mathbf{\boldsymbol{\theta}}^* \right\|_2^2$ goes to zero with a linear convergence rate. 
\end{proof}

\section{Experimental details}

\subsection{Regression Problem}
\begin{figure*}
    \centering
    \subfloat{
        \includegraphics[width=0.95\linewidth]{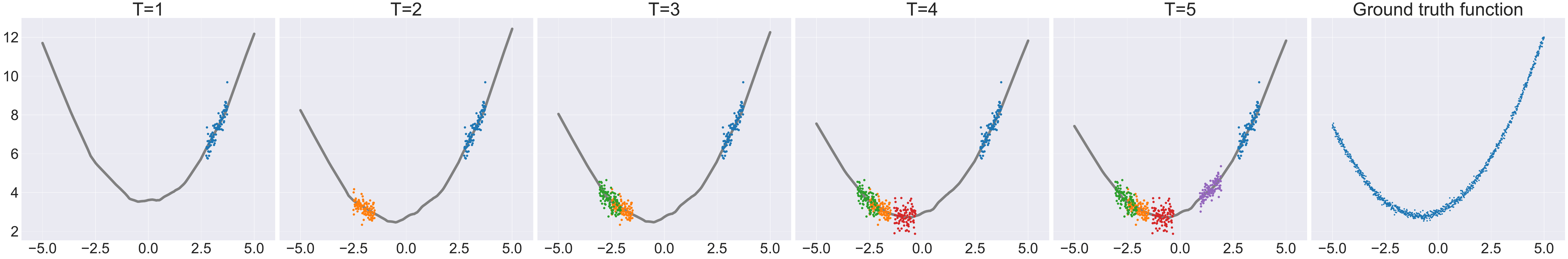}
    }\hfill
    \subfloat{
        \includegraphics[width=0.95\linewidth]{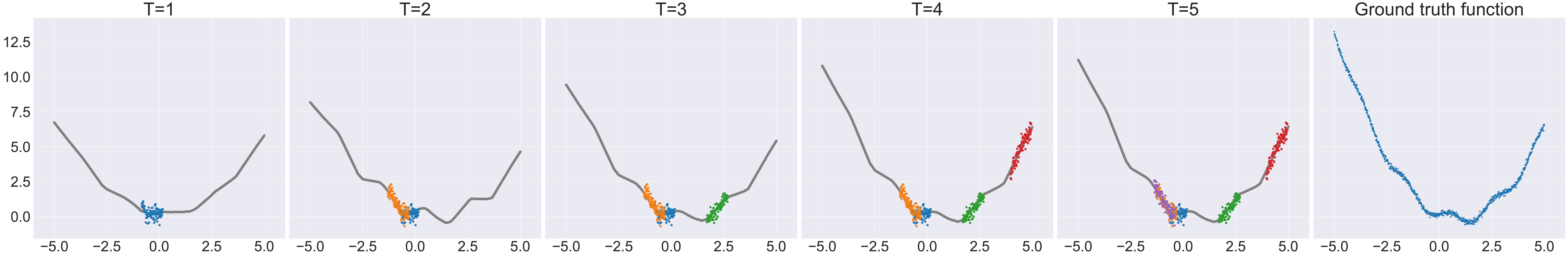}
    }
    \vspace{-3mm}
    \caption{\small A time-evolving learning task example of two agents sampled from the regression task dataset. For each agent, the training samples are generated around the agent's ground truth function with a limited sampling interval. The agent's knowledge of the target function will improve as the number of sampling intervals increases.}
    \vspace{-3mm}
    \label{fig:regression_dataset}
\end{figure*}
\noindent \textbf{Implementation.} We use a fully connected network as the backbone to represent the nonlinear transform $\phi$ as part of the training parameters of \texttt{DeLAMA}. The number of hidden layers is 2 with the output layer set in dimension 50. The batch size used for training is 2 with a learning rate of 1e-3. The iteration numbers for collaborative relational inference $\mathbf{\Phi}_{graph}$ and lifelong model update $\mathbf{\Phi}_{param}$ are set to $M_1 = 10$ and $M_2=10$, respectively. 

\noindent \textbf{Task configuration.} We visualize one example of the generated task sequences shown in \textbf{Figure}~\ref{fig:regression_dataset}. In this example, we can see two different agents with distinct learning goals (the black line). From $T=1$ to $T=5$, the agents encountered different batches of data samples, forming different viewpoints of the target regression function. In the beginning, the agents could not have a full understanding of their learning task. As time increases, agents could be able to guess the learning target function with the help of their lifelong learning ability.

\subsection{Image classification}

\noindent \textbf{Implementation.} We used self-designed convolution network in \textbf{MNIST} tasks and ResNet-18 in \textbf{CIFAR-10} tasks to represent the backbone for nonlinear transform $\phi_{\beta}(\cdot)$. For the self-designed convolution network, the convolution layer number is set to 2 with the output feature dimension 50. For the \textbf{CIFAR-10} tasks we removed the batch normalization layers of ResNet-18 to increase the training stability, and the output feature dimension is also set to 50. The batch size is 2 with a learning rate of 1e-3. The iteration number for lifelong model update $\mathbf{\Phi}_{param}(\cdot)$ is set to 10 and the number of iteration steps of collaboration relational inference $\mathbf{\Phi}_{graph}(\cdot)$ is set to 5. For federated learning methods, we run $100$ rounds for each timestamp. During each round, each client conducts $5$ iterations of model training for \textbf{MNIST} and \textbf{CIFAR-10}.

\subsection{Multi-robot mapping}

\noindent \textbf{Implementation.} The backbone used for \texttt{DeLAMA} is defined as a simple multi-layer perceptron with 2 hidden layers. The output feature dimension of the MLP is 10 and the local models' output dimension is 1 followed by a sigmoid activation function to represent the occupancy probability.  For the training of the unrolled network part, the learning rate is 1e-3 with batch size 2. The iteration number of message passing is set to 10 and the graph learning iteration number is set to 5.

\subsection{Human involved experiment}
\label{appendix_humaneval}
\noindent \textbf{System overview.} The collaboration system employs a hierarchical structure. At the user interface layer, it comprises real human users and 'virtual agents' used for human-machine interaction. At the back end are the agents powered by  \texttt{DeLAMA}. Note that to enable interaction between humans and machines in the back end, we utilize the 'proxy machines' to mimic human behaviors. The full system framework is shown in \textbf{Figure}~\ref{fig:human_exp}.
\begin{figure}
    \centering
    \includegraphics[width=0.9\linewidth]{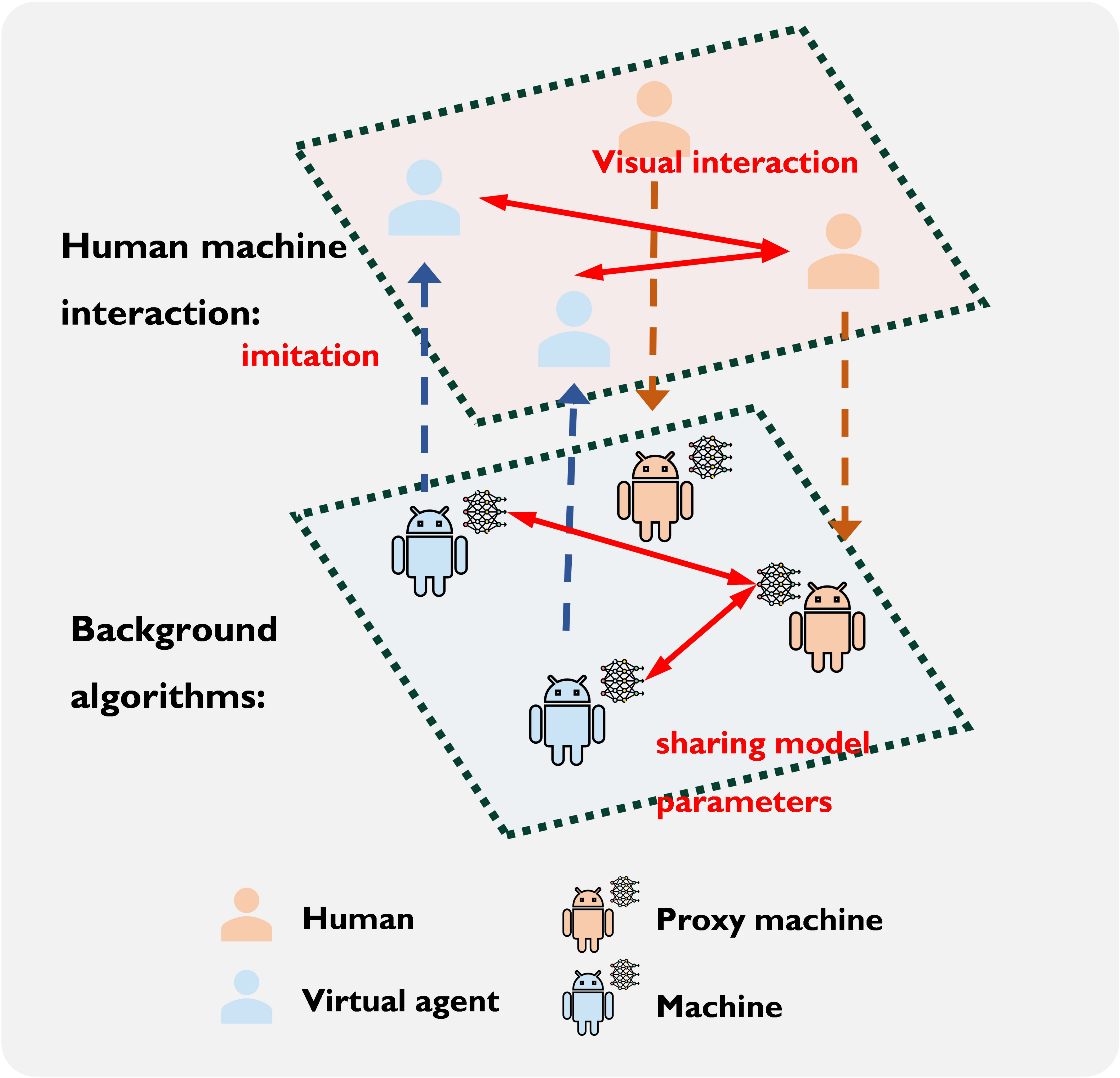}
    \caption{\small The human-machine interaction framework used for the human-involved experiment. In the interaction phase, agents can observe others' behaviors and submit their understandings. To enable the interaction between humans and agents running in \texttt{DeLAMA}, we add virtual agents to perform visual interaction with each other. In the back end is the collaboration algorithm powered by \texttt{DeLAMA}, with proxy agents imitating human behaviors. This top-down framework enables human-machine communication and provides a platform for real human evaluation.}
    \label{fig:human_exp}
\end{figure}

\noindent \textbf{Web GUI.} Here we provide a web GUI example to describe how our human-involved experiment works. There are 5 main pages of the web GUI, including the login page (\textbf{Figure}~\ref{fig:web_page1}), the local learning page (\textbf{Figure}~\ref{fig:web_page2}), the result modification page (\textbf{Figure}~\ref{fig:web_page4}), and the output page (\textbf{Figure}~\ref{fig:web_page5}). The human-involved experiment's routine is as follows:
\begin{enumerate}
    \item \textbf{Login.} Each participant will read the user guide at the login page (shown in \textbf{Figure}~\ref{fig:web_page1}). Then input their user name to represent their ID and enter the game.
    \item \textbf{Local learning.} The participant will first encounter a plot of scatter points (as shown in \textbf{Figure}~\ref{fig:web_page2}), then draw a line to represent their guess of the regression line. This line represents their learning knowledge purely based on their visual information.
    \item \textbf{Aggregate and collaborate.} The agent will have a view of other participants' learning lines, and guess whether to modify their initial learned regression results according to others' results. Then the submitted final learning result will be evaluated and return the regression loss to the participant as shown in \textbf{Figure}~\ref{fig:web_page4}.
\end{enumerate}
On the final page, the participant will see all their historical learning results and the exact ground truth regression target function as shown in \textbf{Figure}~\ref{fig:web_page5}.

\begin{figure}
    \centering
    \includegraphics[width=\linewidth]{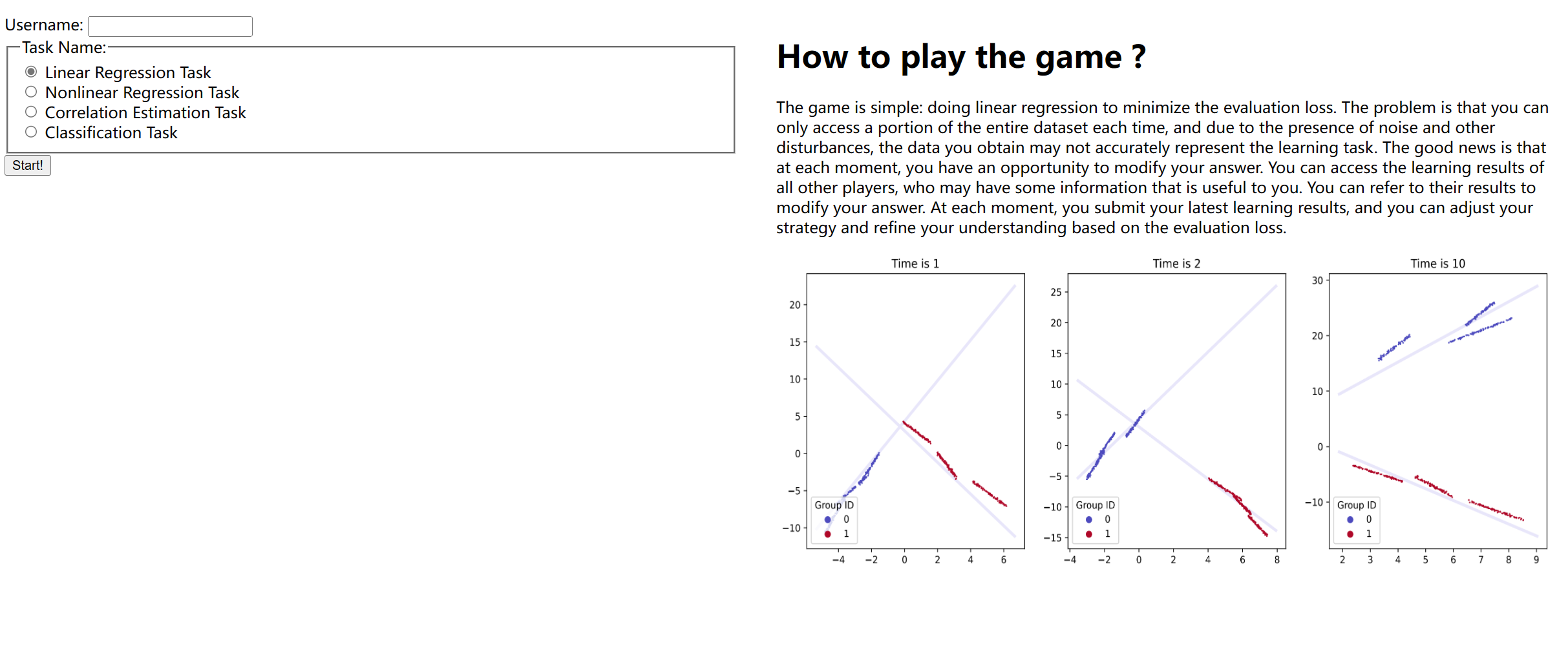}
    \caption{The login page of the human-involved experiment, contains a user guide for the participants to learn how to play the game.}
    \label{fig:web_page1}
\end{figure}

\begin{figure}
    \centering
    \includegraphics[width=\linewidth]{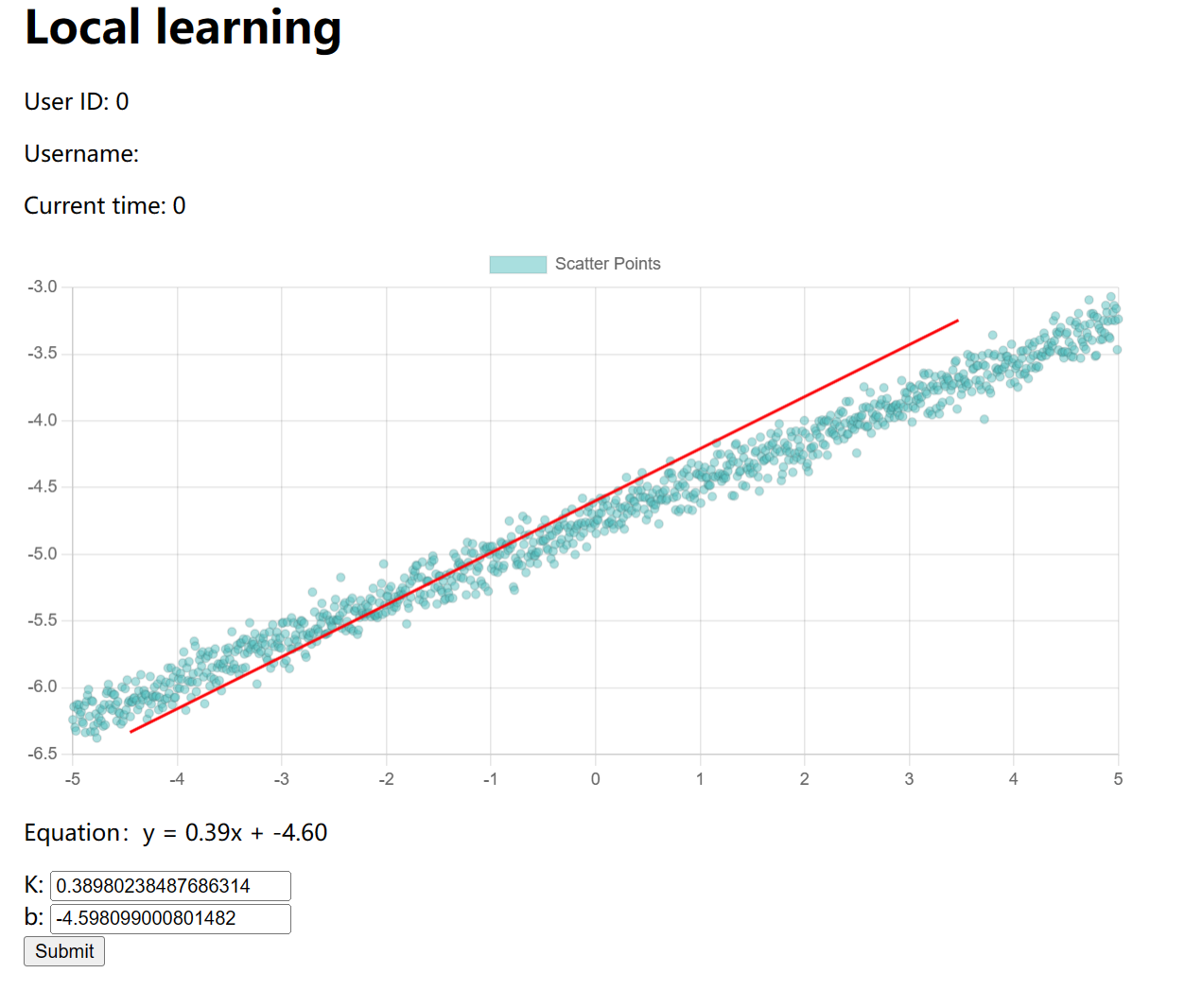}
    \caption{The local learning page of the human-involved experiment. Here participant's drawn line will be transformed into a linear equation with two parameters.}
    \label{fig:web_page2}
\end{figure}

\begin{figure}
    \centering
    \includegraphics[width=\linewidth]{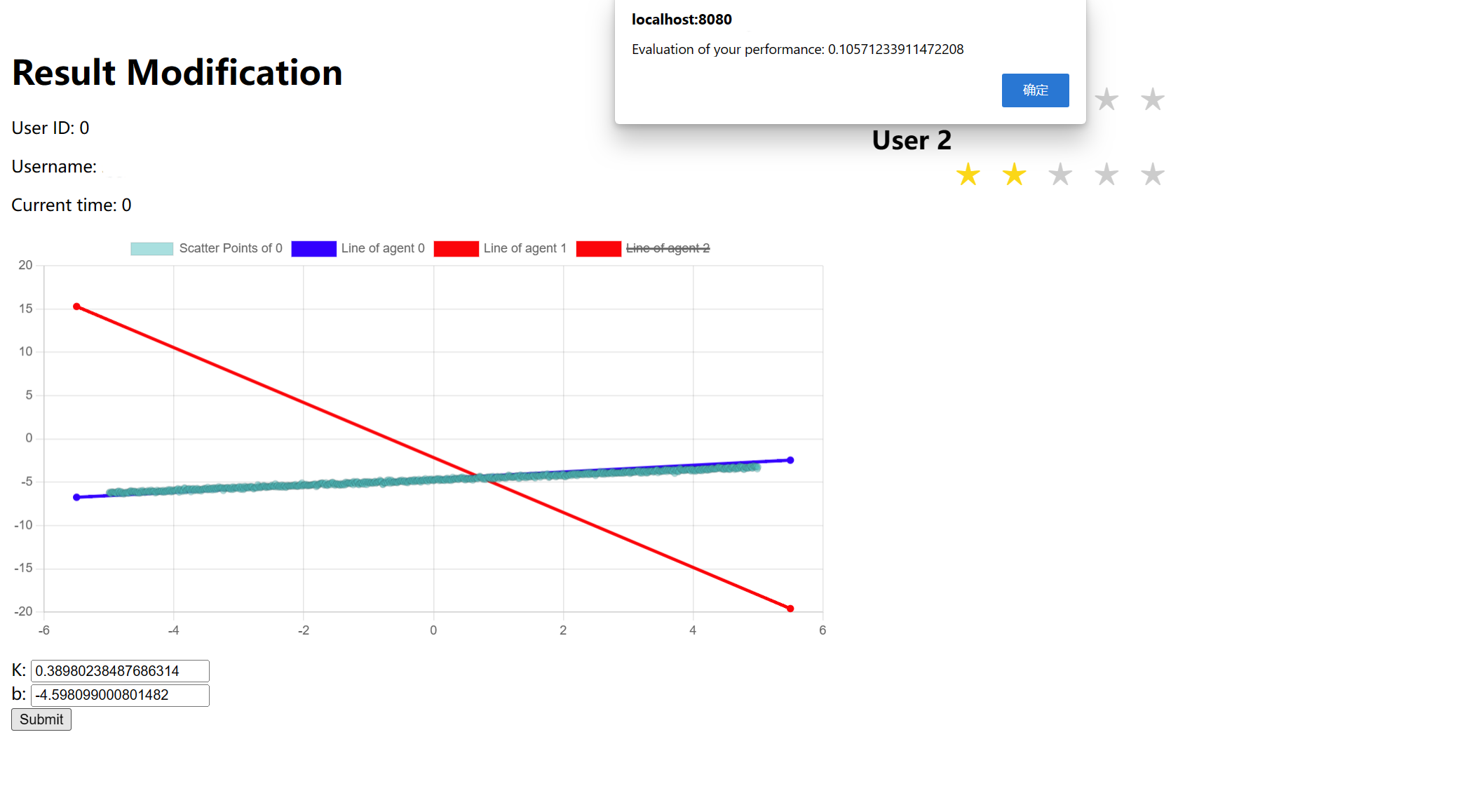}
    \caption{The aggregation page of the human-involved experiment, which contains other participants' results and the score rated by each agent.}
    \label{fig:web_page4}
\end{figure}

\begin{figure}
    \centering
    \includegraphics[width=\linewidth]{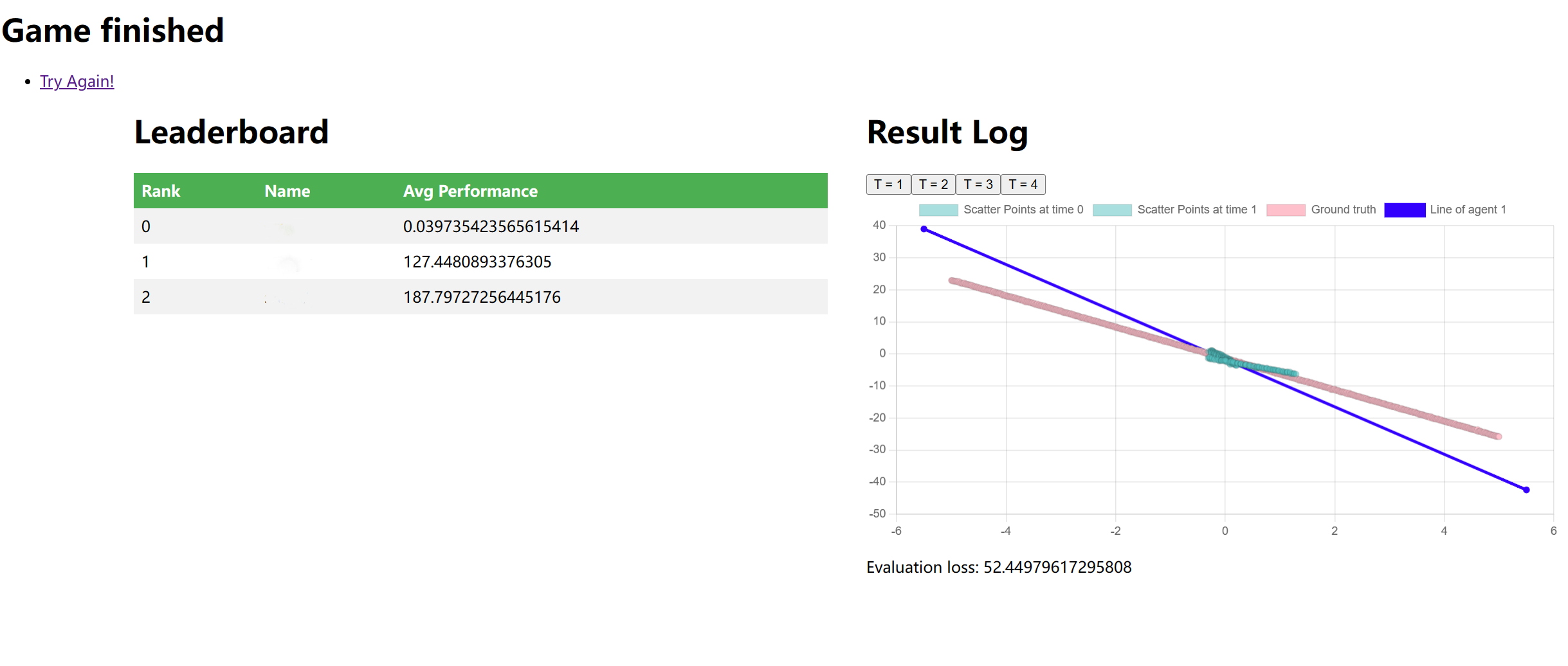}
    \caption{The final page of the human-involved experiment with a leaderboard, each participant can look back to historical learning results.}
    \label{fig:web_page5}
\end{figure}

\end{document}